%% file: iclr2027_conference.tex
\documentclass{article} %
\usepackage{iclr2027_conference,times}

\input{math_commands.tex}

\usepackage{hyperref}
\usepackage{url}
\usepackage{multirow}
\usepackage{placeins}
\usepackage{graphicx}        

\usepackage{amsthm, bbm}
\usepackage{algorithm}
\usepackage{algorithmic}
\usepackage[english]{babel}
\newtheorem{thm}{Theorem}[section]

\title{The Tokens Strike Back: Safety-Aware Denoising with Directional Embedding-based Revision}

\author{%
Amman Yusuf\textsuperscript{1},
Jiwoo Shin\textsuperscript{2},
Ruizhi Deng\textsuperscript{3},
Heitor Rodrigues Guimar\~{a}es\textsuperscript{3}, \\
\textbf{Nikita Dvornik\textsuperscript{3}, Mijung Park\textsuperscript{1}} \\
\textsuperscript{1}Department of Computer Science, University of British Columbia, Canada \\
\textsuperscript{2}Korea Advanced Institute of Science and Technology (KAIST), South Korea \\
\textsuperscript{3}RBC Borealis, Canada \\
\texttt{ammany01@cs.ubc.ca}
}

\usepackage{booktabs,tabularx,array}

\iclrfinalcopy %
\begin{document}

\maketitle

\lhead{Preprint} %

\begin{abstract}
The Safety-Aware Denoiser (SAD) is a recent, training-free, safety-guidance framework for text diffusion models. During inference, SAD steers text samples toward safe regions of the token probability space.
In this paper, we propose  \textit{SADDER (Safety-Aware Denoising with Directional Embedding-based Revision)}, which improves the safety and quality trade-offs of SAD by exploiting a safety-guidance signal in the embedding (or activation) space for each token position, then updating the token positions using the SAD update rule multiplied by the guidance signal. 
This multiplication acts as a gating mechanism: tokens with a high unsafe signal are more likely to be modified during inference following the SAD update rule.  
However, such an update does not automatically preserve the conditional mean, i.e., the denoiser of a fixed target distribution. 
We therefore theoretically derive new position-wise target distributions for which the modified denoiser is exact. Furthermore, we derive conditions under which the position-wise target distributions match a global target distribution.  
We test our \textit{SADDER} method on popular Masked Diffusion Language Models (MDLMs) and Uniform-State Diffusion Models (USDMs) across unsafe continuation and jailbreak settings, and show that SADDER achieves a better balance between safety, generation quality, and inference cost than existing methods. 
\end{abstract}

\section{Introduction}
\input{sections/Introduction}

\section{Background}
\input{sections/Background_v2}

\section{Methods: SADDER}
\input{sections/Methods}
\section{Related Work}
\input{sections/Related_Work}

\section{Experiments}
\input{sections/Experiments}

\section{Summary and Discussion}

\input{sections/Summary_Discussion}

\newpage
\subsection*{AI use statement}

We used generative AI tools, to assist with refining research hypotheses and experimental designs, reviewing mathematical derivations and proof arguments, analyzing and interpreting experimental results, and implementing and debugging experimental code. We also used these tools to draft and revise portions of the manuscript, improve clarity and organization, prepare LaTeX pseudocode, and identify and discuss relevant literature. The models used for text generation and automated safety evaluation within our experiments are described in the experimental protocol.

The authors reviewed and revised the AI-assisted material included in this work. We checked mathematical statements and derivations, verified references against the cited sources, and checked reported numerical claims against experimental outputs. AI-generated code was reviewed and revised by the authors before use in the experiments. The authors made the final methodological decisions and determined the interpretation and presentation of the findings. We take responsibility for the final content of this work, including all text, claims, and artifacts produced with the assistance of generative AI.

\subsection*{Ethics statement}

This work investigates training-free safety guidance for text diffusion models. Our experiments involve toxic text, harmful instructions, and adversarial prompts from existing datasets and benchmarks. These materials serve a defensive evaluation purpose but may contain offensive or distressing content. Analysis of model vulnerabilities could also inform attempts to circumvent safeguards. Any release of associated harmful generations or attack artifacts should therefore consider risks of misuse, exposure to harmful content, and disclosure of sensitive information. Furthermore, safety guidance could be misused to suppress legitimate expression, while overly broad or biased safety criteria may restrict benign content and disproportionately affect particular communities.

Our evaluation relies on benchmark datasets and automated safety judges, which may reflect social biases and fail to capture context-dependent harms. Differences between evaluators further limit the interpretation of reported safety metrics. We report safety alongside generation quality, inference cost, and over-refusal, including settings where the proposed guidance offers limited benefit. Reduced refusal does not necessarily imply improved helpfulness, and lower measured attack success does not establish that responses are safe or usable. Our findings are restricted to the evaluated models, datasets, and sampling configurations; they do not establish safety across untested settings. Deployment would require additional assessment of robustness, fairness, and risks specific to the intended application.

\subsection*{Reproducibility statement}

The main text presents the mathematical formulation of SADDER, and Algorithm~\ref{SADDER} specifies the selective guidance procedure. Appendix~\ref{derivation_equivalence} derives the centered-score equivalence, and Appendices~\ref{proof_point_wise} and~\ref{proof_global} provide proofs of the theoretical results. Appendix~\ref{app:sadder-datasets} describes the datasets, prompt selection, and reference-set construction. Appendix~\ref{app:sadder-hyperparameters} documents shared settings, hyperparameter ranges, and configuration selection, while Appendix~\ref{app:implementation} details the sampling extensions. Appendices~\ref{app:settings} and~\ref{app:jailbreak} report additional experimental settings and results. Appendix~\ref{app:judge-views} compares evaluators and response scopes, and Appendix~\ref{app:xstest-results} describes refusal evaluation and the handling of missing judge labels. Appendix~\ref{app:metric-uncertainty} specifies metric aggregation, seed standard deviations, bootstrap intervals, and paired comparisons. These materials support independent implementation and assessment of the reported results.

\subsubsection*{Acknowledgments}
This work was supported by Mitacs through the Mitacs Accelerate program, in partnership with RBC Borealis (Royal Bank of Canada).

\bibliography{iclr2027_conference}
\bibliographystyle{iclr2027_conference}

\appendix
\section{Appendix}\label{sec:Appendix}

\input{sections/Appendix}

\end{document}

%% file: math_commands.tex
\usepackage{amsmath,amsfonts,bm}
\usepackage{amssymb}

\def\Secref#1{Section~\ref{#1}}

\def\eqref#1{equation~\ref{#1}}

\def\Algref#1{Algorithm~\ref{#1}}

\newcommand{\thmref}[1]{Thm.~\ref{thm:#1}}

\usepackage{amsthm} %
\theoremstyle{remark}
\newtheorem{remark}{Remark}

\def\1{\bm{1}}

\newcommand{\Dat}{\mathcal{D}}
\newcommand{\Em}{\mathbb{E}}

\def\vtheta{{\bm{\theta}}}

\def\vc{{\bm{c}}}

\def\vh{{\bm{h}}}

\def\vm{{\bm{m}}}

\def\vv{{\bm{v}}}

\def\vx{{\bm{x}}}

\DeclareMathAlphabet{\mathsfit}{\encodingdefault}{\sfdefault}{m}{sl}
\SetMathAlphabet{\mathsfit}{bold}{\encodingdefault}{\sfdefault}{bx}{n}

\newcommand{\pdata}{p_{\rm{data}}}

\newcommand{\E}{\mathbb{E}}

\newcommand{\Cov}{\mathrm{Cov}}

%% file: sections/Introduction.tex
Diffusion models have achieved state-of-the-art performance across a wide range of domains, including images~\citep{rombach2022high}, speech~\citep{popov2021grad}, and protein structure generation~\citep{wu2024protein}. In language modeling, however, autoregressive large language models (AR-LLMs) remain the dominant approach, demonstrating strong capabilities in language generation and understanding,
mathematical reasoning~\citep{yang2024qwen2}, and coding~\citep{li2023starcoder}. Despite their success, AR models generate text one token at a time, which inherently limits inference parallelism and makes revising previously generated content less natural. 

Recent advances in Text Diffusion Models (TDMs) offer an alternative paradigm, in which text is generated through an iterative denoising process that can update multiple token positions and progressively refine the generated sequence~\citep{MDLM, LLaDA}. Moreover, at comparable model scales, recent TDMs have also shown competitive performance with AR-LLMs across benchmarks; For instance, Stable-DiffCoder-8B~\citep{fan2026stablediffcoderpushingfrontiercode} vs. Seed-Coder-8B~\citep{seed2025seedcoderletcodemodel} on code generation benchmarks, and LLaDA-8B~\citep{LLaDA} vs. LLaMA3-8B~\citep{grattafiori2024llama} on mathematical reasoning tasks.

Despite these advantages, the safety of TDMs remains relatively underexplored compared to AR-LLMs, particularly with respect to toxic content generation and vulnerability to jailbreak attacks. Traditional safety mechanisms rely on fine-tuning and alignment~\citep{ICLR2024_dd1577af}, post-hoc filtering and rejection sampling~\citep{10.1145/3744746}, or inference-time methods that modify the decoding strategy during next-token prediction~\citep{xu-etal-2024-safedecoding}. However, these approaches either require additional training or assume AR decoding and therefore do not directly generalize to TDMs.

Recent work has begun exploring inference-time mechanisms specifically designed for TDMs, including particle-based steering~\citep{pmlr-v267-singhal25b-fk-steering}, classifier-based guidance~\citep{schiff2025simple}, and safety-oriented remasking approaches~\citep{lee2026adaptivesteeringremaskingsafe}. In particular, the Safety-Aware Denoiser (SAD)~\citep{yusuf26sad} provides a training-free framework to steer the denoising trajectory towards a safe region in the data distribution using only a small set of $500 \sim 1000$ unsafe reference samples. Nevertheless, SAD operates in token probability space, leaving open whether semantically richer model representations could enable more effective, position-specific safety guidance.

In this work, we propose SADDER (\textbf{S}afety-\textbf{A}ware \textbf{D}enoising with \textbf{D}irectional \textbf{E}mbedding-based \textbf{R}evision), a safety-guidance method that augments SAD with position-wise signals extracted from the model's embedding space. 
At each denoising step, we first identify potentially unsafe token positions based on a safety-guidance signal in the embedding (or activation) space. We then use this information in a second step to modulate the SAD update, enabling more precise, position-by-position interventions along the sampling trajectory. 
By selectively modifying fewer positions, SADDER achieves a better balance between safety, generation quality, and inference cost than existing methods, when tested on popular Masked Diffusion Language Models (MDLMs) and Uniform-State Diffusion Models (USDMs) across unsafe continuation and jailbreak settings.

%% file: sections/Background_v2.tex
We first describe two ingredients our method builds on: masked and uniform-state TDMs in \Secref{sec:tdm}, whose samplers differ in whether a committed token can still be changed in the subsequent denoising steps; and SAD in \Secref{sec:sad}, the inference-time guidance framework that our method extends.

\subsection{Text Diffusion Models}\label{sec:tdm}

A TDM corrupts a clean token sequence $\vx_0^{1:L}$ of length $L$ with a fixed forward process and learns to reverse it. Tokens come from a vocabulary $\mathcal V$ of size $K$ and are represented as one-hot vectors, so a distribution over tokens is a vector in the simplex $\Delta^K$. $\mathrm{Cat}(\vx\mid\bm{\pi})$ denotes the categorical distribution with probability vector $\bm{\pi}$. 

We write $\vx_t^l$ for the token at position $l$ and normalized diffusion time $t\in[0,1]$. Let $\alpha_t\in[0,1]$ denote a noise schedule that decreases strictly in $t$ with $\alpha_0\approx1$ and $\alpha_1\approx0$. We discretize the diffusion time interval using a decreasing grid $1=t_0 > t_1 > \cdots > t_T=0$ where $T$ is the number of reverse sampling steps. At iteration $j\in \{0, \dots, T-1\}$, the sampler transitions from $\vx_{t_j}^{1:L}$ to $\vx_{t_{j+1}}^{1:L}$.

\paragraph{Masked diffusion language models (MDLMs).}
MDLMs \citep{MDLM, LLaDA, ye2025dream7bdiffusionlarge} add a mask token with one-hot vector $\mathbb{M}$, so states live in $\{0,1\}^{K+1}$; the network below never predicts the mask, so its outputs stay in $\Delta^K$. In the forward process, each position transitions to the mask independently and, once masked, stays masked:
\begin{align}\label{forward_mdlm}
    q_t(\vx_t^l\mid\vx_0^l) = \mathrm{Cat}\big(\vx_t^l\mid \alpha_t \vx_0^l + (1-\alpha_t)\,\mathbb{M}\big),
\end{align}
i.e., the token is kept with probability $\alpha_t$ and masked with probability $1-\alpha_t$. The reverse process runs from the all-mask sequence back to $t=0$; its transitions are the forward posterior $q(\vx_s^l\mid\vx_t^l,\vx_0^l)$ with the unknown clean token replaced by the prediction of a denoising network $\vx_\vtheta$, which reads the whole noised sequence and outputs a distribution over the clean token at every position,
\begin{align}\label{denoiser_signature}
    \vx_\vtheta : (\mathcal V\cup\{\mathbb{M}\})^L\times[0,1]\longrightarrow(\Delta^K)^L,
\end{align}
with $\vx_{\vtheta,l}(\vx_t^{1:L},t)\in\Delta^K$ the output at position $l$. Given $\vx_t^{1:L}$, the transition to an earlier level $s<t$ factorizes across positions, with
\begin{align}\label{reverse_mdlm}
  p_\vtheta(\vx_s^l\mid\vx_t^{1:L})
  = \begin{cases}
      \delta_{\vx_t^l}(\vx_s^l), & \vx_t^l\neq\mathbb{M},\\
      \mathrm{Cat}\!\Big(\vx_s^l\,\Big|\,\tfrac{(1-\alpha_s)\,\mathbb{M}+(\alpha_s-\alpha_t)\,\vx_{\vtheta,l}(\vx_t^{1:L},t)}{1-\alpha_t}\Big), & \vx_t^l=\mathbb{M}.
    \end{cases}
\end{align}
Under \eqref{reverse_mdlm}, a revealed position is copied, and a masked position is revealed at random with probability $(\alpha_s-\alpha_t)/(1-\alpha_t)$, its token drawn from $\vx_{\vtheta,l}$. The samplers of LLaDA and Dream differ from \eqref{reverse_mdlm} only in how they choose which masked positions to reveal \citep{LLaDA, ye2025dream7bdiffusionlarge}. \emph{Remasking} samplers \citep{wang2025ReMDM} additionally return revealed positions to $\vm$ so that they are predicted again.

\paragraph{Uniform-state diffusion models (USDMs).}
USDMs \citep{sahoo2025duo, sahoo2026scaling, diffusiongemma2026} go a step further: they have no mask token, and any token can change at any step, which \citet{sahoo2026scaling} credit with self-correction and better guided generation. In the forward process, a token is kept with probability $\alpha_t$ and otherwise replaced by one drawn uniformly from the vocabulary,
\begin{align}\label{forward_usdm}
    q_t(\vx_t^l\mid\vx_0^l) = \mathrm{Cat}\big(\vx_t^l\mid \alpha_t \vx_0^l + (1-\alpha_t)\,\tfrac{1}{K}\mathbf 1\big),
\end{align}
with $\mathbf 1$ the all-ones vector. 
The reverse kernel is again the forward posterior with $\vx_0^l$ replaced by the prediction; with $\alpha_{t|s}=\alpha_t/\alpha_s$, it is given by
\begin{align}\label{reverse_usdm}
    & q_{s|t}(\vx_s^l|\vx_t^l, \vx_\vtheta^l) \nonumber \\
    &= \mbox{Cat}\left(\vx_s^l | \frac{K\alpha_t \vx_t^l \odot  \vx_\vtheta^l + (\alpha_{t|s}-\alpha_t)\vx_t^l}{K\alpha_t \langle\vx_t^l, \vx_\vtheta^l\rangle + 1-\alpha_t}
    +\frac{(\alpha_{s}-\alpha_t)\vx_\vtheta^l + (1-\alpha_{t|s})(1-\alpha_s) \mathbf{1}/K}{K\alpha_t \langle\vx_t^l, \vx_\vtheta^l\rangle + 1-\alpha_t}\right),
\end{align}
where $\odot$ denotes the Hadamard product between two vectors and $\vx_\vtheta^l=\vx_{\vtheta,l}(\vx_t^{1:L},t)$ is given by the pretrained model.
Every entry of this vector can be positive, which is what lets any token change at any step.

\subsection{The Safety-Aware Denoiser}\label{sec:sad}

The safety-aware denoiser (SAD) \citep{yusuf26sad} is a training-free guidance method that steers a pretrained TDM away from a \emph{negation set} of sequences that should not be generated. It adapts the safe denoiser of \citet{SafeDenoisers}, later extended to flow matching \citep{kim2026safetyguided}, by replacing the Gaussian corruption kernel with the categorical kernels \eqref{forward_mdlm} and \eqref{forward_usdm}.

\paragraph{Safe Denoiser.}
Assume the training data $\Dat$ split into safe and unsafe sets, $\Dat=\Dat_{\text{safe}}\cup\Dat_{\text{unsafe}}$ with $|\Dat_{\text{safe}}|\gg|\Dat_{\text{unsafe}}|$. 
Restricting $p_{\text{data}}$ to each set gives $p_{\text{safe}}$ and $p_{\text{unsafe}}$, and running the forward process from $p_{\text{data}}$, $p_{\text{safe}}$, and $p_{\text{unsafe}}$ defines the data, safe, and unsafe denoisers $\mathbb{E}_{\Dat}[\vx_0\mid\vx_t]$, $\mathbb{E}_{\Dat_{\text{safe}}}[\vx_0\mid\vx_t]$, and $\mathbb{E}_{\Dat_{\text{unsafe}}}[\vx_0\mid\vx_t]$, which are related as follows.
\begin{thm}[Theorem 3.2 in \citet{SafeDenoisers}]
\label{thm:safe-denoiser}
There exists a nonnegative weight $\beta^\ast(\vx_t)$, monotone in the posterior likelihood that $\vx_t$ originates from the unsafe set, such that
\begin{align}\label{safe-denoiser}
    \mathbb{E}_{\Dat_{\text{safe}}}[\vx_0\mid\vx_t]
    = \mathbb{E}_{\Dat}[\vx_0\mid\vx_t] + \beta^\ast(\vx_t)\,\big(\mathbb{E}_{\Dat}[\vx_0\mid\vx_t]-\mathbb{E}_{\Dat_{\text{unsafe}}}[\vx_0\mid\vx_t]\big).
\end{align}
\end{thm}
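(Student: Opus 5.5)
The plan is to write all three denoisers as posterior means under mixture distributions and to compare them through Bayes' rule. Let $\pi=|\Dat_{\text{unsafe}}|/|\Dat|$, so that
\begin{align*}
p_{\text{data}}=(1-\pi)\,p_{\text{safe}}+\pi\,p_{\text{unsafe}}.
\end{align*}
Running the same forward kernel $q_t(\vx_t\mid\vx_0)$ (either \eqref{forward_mdlm} or \eqref{forward_usdm}; the argument never uses the form of the kernel) gives marginals $p_t^{\text{safe}}$ and $p_t^{\text{unsafe}}$, and the data marginal $p_t=(1-\pi)p_t^{\text{safe}}+\pi p_t^{\text{unsafe}}$.

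\textbf{Step 1: the data denoiser is a convex combination.} Define the posterior membership probability
\begin{align*}
w(\vx_t)=\frac{\pi\,p_t^{\text{unsafe}}(\vx_t)}{p_t(\vx_t)},
\end{align*}
which is the probability that $\vx_t$ originated from the unsafe set. Expanding the numerator $\sum_{\vx_0}\vx_0\,q_t(\vx_t\mid\vx_0)\,p_{\text{data}}(\vx_0)$ and dividing by $p_t(\vx_t)$ gives
\begin{align*}
\mathbb{E}_{\Dat}[\vx_0\mid\vx_t]=(1-w)\,\mathbb{E}_{\Dat_{\text{safe}}}[\vx_0\mid\vx_t]+w\,\mathbb{E}_{\Dat_{\text{unsafe}}}[\vx_0\mid\vx_t].
\end{align*}
This is a routine computation; the only requirement is $p_t(\vx_t)>0$, which holds on the support we care about.

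\textbf{Step 2: solve for the safe denoiser.} Assuming $w<1$, which holds whenever $p_t^{\text{safe}}(\vx_t)>0$, rearranging Step 1 gives
\begin{align*}
\mathbb{E}_{\Dat_{\text{safe}}}[\vx_0\mid\vx_t]=\mathbb{E}_{\Dat}[\vx_0\mid\vx_t]+\frac{w}{1-w}\big(\mathbb{E}_{\Dat}[\vx_0\mid\vx_t]-\mathbb{E}_{\Dat_{\text{unsafe}}}[\vx_0\mid\vx_t]\big).
\end{align*}
This follows by subtracting $\mathbb{E}_{\Dat}$ from both sides and using $\mathbb{E}_{\Dat}-\mathbb{E}_{\Dat_{\text{safe}}}=w(\mathbb{E}_{\Dat_{\text{unsafe}}}-\mathbb{E}_{\Dat_{\text{safe}}})$. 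Setting $\beta^\ast=w/(1-w)$ yields exactly \eqref{safe-denoiser}.

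\textbf{Step 3: properties of $\beta^\ast$.} Since $w\in[0,1)$, we have $\beta^\ast\ge 0$. Moreover $\beta^\ast$ is strictly increasing in $w$, because $w\mapsto w/(1-w)$ is increasing; this is the required monotonicity in the unsafe posterior likelihood. Equivalently, $\beta^\ast=\pi p_t^{\text{unsafe}}/\big((1-\pi)p_t^{\text{safe}}\big)$, a likelihood ratio.

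The main obstacle is the degenerate case $w=1$, where $\vx_t$ is reachable only from unsafe data. There the safe denoiser is undefined, and both sides must be restricted to points with $p_t^{\text{safe}}(\vx_t)>0$. I would handle this by stating that the identity holds on the support of the safe marginal. For the categorical kernels with $\alpha_t<1$ this support is essentially everything: fully masked states, and under uniform noise all states.
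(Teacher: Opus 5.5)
Your proof is correct and follows essentially the same route as the paper. The paper cites this result rather than reproving it, but its definition $\beta^\ast=Z_{\text{unsafe}}\,p_{\text{unsafe},t}(\vx_t)/\bigl(Z_{\text{safe}}\,p_{\text{safe},t}(\vx_t)\bigr)$ is exactly your $\pi p_t^{\text{unsafe}}/\bigl((1-\pi)p_t^{\text{safe}}\bigr)$, and its proof of \thmref{sadder-pointwise-safe} uses the same argument: decompose the posterior as a mixture, then solve for the safe mean to obtain the posterior odds (your $w/(1-w)$). One remark overstates: for the masked kernel the safe marginal's support excludes partially revealed states whose revealed tokens match no safe sequence, so it is not ``essentially everything,'' though restricting the identity to that support, as you do, is the right fix.
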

\thmref{safe-denoiser} states that by subtracting ``unsafe'' components from the data denoiser, with $\beta^\ast$ adapting to how unsafe the current state is, we can provably generate samples from the safe denoiser. A pretrained model provides the data denoiser, so only the unsafe denoiser and $\beta^\ast$ remain.

\paragraph{SAD.}
For a TDM, the data denoiser is simply the pre-trained model's output, $\mathbb{E}_{\Dat}[\vx_0^l\mid\vx_t^{1:L}]=\vx_{\vtheta,l}(\vx_t^{1:L},t)$, since the mean of a categorical variable is its probability vector; the relation is exact for the ideal denoiser, which $\vx_\vtheta$ approximates. The unsafe denoiser is approximated by Monte Carlo over a negation set, $\Dat_{\text{unsafe}}=\{\vx_0^{(n)}\}_{n=1}^N$:
$\hat{\mathbb{E}}_{\Dat_{\text{unsafe}}}[\vx_0\mid\vx_t]
    = \sum_{n=1}^{N}\vx_0^{(n)}\,\frac{q_t(\vx_t\mid\vx_0^{(n)})}{\sum_{m=1}^{N} q_t(\vx_t\mid\vx_0^{(m)})},$
where $q_t(\vx_t\mid\vx_0^{(n)})=\prod_{l=1}^L q_t(\vx_t^l\mid\vx_0^{(n),l})$ is the sequence kernel from \eqref{forward_mdlm} or \eqref{forward_usdm}, a product because positions are noised independently. 
The precise definition of the weight is $\beta^\ast(\vx_t)=Z_{\text{unsafe}}\,p_{\text{unsafe},t}(\vx_t)/\big(Z_{\text{safe}}\,p_{\text{safe},t}(\vx_t)\big)$ \citep[Appendix~A]{SafeDenoisers}, where $p_{\text{safe},t}$ and $p_{\text{unsafe},t}$ are the marginal distributions of the forward process at level $t$ started from the safe and the unsafe distribution, and $Z_{\text{safe}}$ and $Z_{\text{unsafe}}$ are the normalizing constants of the two restricted densities. 

Unfortunately, directly computing $\beta^\ast(\vx_t)$ is \textit{intractable}: the denominator requires $q_t(\vx_t\mid\vx_0)$ over all safe data, roughly most of the training set, at every sampling step. 
SAD approximates
\begin{align}
\beta^\ast(\vx_t) &\approx  \eta\,\beta(\vx_t), \nonumber 
\end{align} using an unbiased estimate of $\beta(\vx_t)$ and a hyperparameter $\eta$ to replace the remaining terms of $\beta^\ast$. 
\begin{align}\label{beta_hat}
    \beta(\vx_t)& =\int p_{\text{unsafe}}(\vx_0)\,q_t(\vx_t\mid\vx_0)\,\mathrm{d} \vx_0\approx\frac{1}{N}\sum_{n=1}^{N} q_t(\vx_t\mid\vx_0^{(n)}). 
\end{align}
Because SAD approximates $\beta^\ast$ this way, the SAD implementation offers \textit{no theoretical safety guarantees} on generated data, unlike what the theory in \thmref{safe-denoiser} promises.

Prior work shows that such guidance must be strong early in sampling and fade later, or it degrades sample quality \citep{SafeDenoisers, kim2026safetyguided, kirchhof2024sparse}; SAD therefore keeps $\eta$ at a nonzero constant over a window $C$ of critical timesteps at the start of the reverse process and sets $\eta=0$ outside it.

%% file: sections/Methods.tex
SAD provides safety guidance in the token probability space. We can enhance this guidance by selecting a guidance signal in the embedding (or activation) space for each token position, then modifying the selected token positions using the SAD update rule. In what follows, we introduce SADDER (Safety-Aware Denoising with Directional Embedding-based Revision), consisting of two phases.

\subsection{First Phase: \textbf{Selecting Risky Positions.}}
The goal of the first phase is to create a gating mechanism which tells us which token positions need to be modified. 
Inspired by \citep{lee2026adaptivesteeringremaskingsafe}, we propose accessing TDM's internal embedding space to obtain this information. 

For each sample $i$, we denote ${\vc}_i$ as a given prompt and $\vx_i^{\mathrm{harm}}$, $\vx_i^{\mathrm{safe}}$ as two corresponding harmful and safe responses for $\vc_i$, respectively. 
We denote $P_i$ = $|\vc_i|$ as the prompt length and $R_i^{\mathrm{harm}} = \min(|\vx_i^{harm}|, M)$ the response length after truncation of $M$ tokens. Similarly, $R_i^{\mathrm{safe}} = \min(|\vx_i^{safe}|, M)$. 
Let $\vh_{k,l}(\cdot) \in \mathbb{R}^d$ be the layer-$k$ hidden representation at position $l$. $\vh_{k,l}^{(t)}(\cdot)$ denotes the layer-$k$'s hidden state for position $l$ at timestep $t$. We extract hidden states and compute the average representations over response tokens:
\begin{align}
\bar \vh_{i,k}^{\mathrm{harm}} = \frac1{R_i^{\mathrm{harm}}}\sum_{l=P_i+1}^{P_i+R_i^{\mathrm{harm}}} \vh_{k,l}(\vc_i \oplus \vx_i^{\mathrm{harm}} ),
\quad \bar \vh_{i,k}^{\mathrm{safe}} = \frac1{R_i^{\mathrm{safe}}}\sum_{l=P_i+1}^{P_i+R_i^{\mathrm{safe}}} \vh_{k,l}(\vc_i \oplus \vx_i^{\mathrm{safe}} )
\end{align} We arrive at the (normalized) layer-wise difference in the semantic representations of harmful and safe responses, $\hat\vv_k$, by averaging over the examples $i$:
\begin{align}
   \widehat \vv_{k} &=
\frac{\bar \vh_{k}^{\mathrm{harm}}-\bar \vh_{k}^{\mathrm{safe}}}
{\|\bar \vh_{k}^{\mathrm{harm}}-\bar \vh_{k}^{\mathrm{safe}}\|_2},   
\end{align} where $   \bar \vh_{k}^{\mathrm{harm}} = \frac{1}{N_{\mathrm{harm}}}\sum_i \bar \vh_{i,k}^{\mathrm{harm}}$, and  
   $\bar \vh_{k}^{\mathrm{safe}} = \frac{1}{N_{\mathrm{safe}}}\sum_i \bar \vh_{i,k}^{\mathrm{safe}}$.
Intuitively, we want to intervene on the token location only when the hidden state at time $t$ is \textit{closer to} the average harmful representations than to the average safe representations:  
\begin{align}\label{eq:harm_vs_safe}
  \| \vh_{k,l}^{(t)} - \bar \vh_{k}^{\mathrm{harm}}\|_2^2 <  \| \vh_{k,l}^{(t)} - \bar \vh_{k}^{\mathrm{safe}}\|_2^2.
\end{align} 
Using a midpoint $\vm_k = \frac{\bar \vh_{k}^{\mathrm{harm}}+\bar \vh_{k}^{\mathrm{safe}}}{2}$, a simple linear algebraic manipulation gives us the following equivalence:
\begin{align}
   \| \vh_{k,l}^{(t)} - \bar \vh_{k}^{\mathrm{harm}}\|_2^2 <  \| \vh_{k,l}^{(t)} - \bar \vh_{k}^{\mathrm{safe}}\|_2^2 \; \Longleftrightarrow \; \left\langle \vh_{k,l}^{(t)} - \vm_k, \hat\vv_k \right\rangle >0 
\end{align} See \Secref{derivation_equivalence} for the derivation. Unlike \citep{lee2026adaptivesteeringremaskingsafe}, 
we use the \textit{centred} score to decide whether this particular position will lead to harmful generation:
\begin{equation}
s_{k,l}^{(t)}:=\left\langle \vh_{k,l}^{(t)} - \vm_k, \; \hat\vv_k \right\rangle. 
\label{eq:centered-risk-score}
\end{equation}
\textit{Geometric meaning of \eqref{eq:centered-risk-score}}: The set of $\vh_{k,l}^{(t)}$ that satisfies $\left\langle \vh_{k,l}^{(t)} - \vm_k, \hat\vv_k \right\rangle =0$ is the hyperplane, in which  every vector $\vh_{k,l}^{(t)} - \vm_k$ is orthogonal to the direction  $\hat\vv_k$. 
Any $\vh_{k,l}^{(t)}$ that satisfies $\left\langle \vh_{k,l}^{(t)} - \vm_k, \hat\vv_k \right\rangle >0$ lies on the side of this hyperplane containing $\bar \vh_{k}^{\mathrm{harm}}$. On the contrary, any $\vh_{k,l}^{(t)}$ that satisfies $\left\langle \vh_{k,l}^{(t)} - \vm_k, \hat\vv_k \right\rangle <0$ lies on the side of this hyperplane containing $\bar \vh_{k}^{\mathrm{safe}}$. 

The tokens that have positive scores, through the following \textit{gating} mechanism:
\begin{equation}
a_{k,l}^{(t)} \leftarrow \mathbbm{1}_{s_{k,l}^{(t)}>0},
\label{eq:centered-risk-score-mdlm}
\end{equation} where $\mathbbm{1}$ is an indicator function that returns the binary\footnote{One can also use the softened score for more refined guidance across positions.
However, this approach did not empirically outperform simple gating in \eqref{eq:centered-risk-score-mdlm}. See Appendix~\ref{app:uniform-soft} for comparisoin.} gating values: $a_{k,l}^{(t)} \in \{0, 1\}$.

\begin{algorithm}[t]
\caption{SADDER: selective SAD guidance.}\label{SADDER}
\begin{algorithmic}[1]
\STATE \textbf{Input:} denoiser $p_\theta$; unsafe dataset $\Dat_{\mathrm{unsafe}}$;
  $\widehat\vv_k,\vm_k$; initial state $\vx_{t_0}$; editable positions $\mathcal G$;
  time steps $t_0>\cdots>t_T=0$; intervention steps $C$; strength $\eta$.
\FOR{$j=0,\ldots,T-1$}
  \STATE Set $t\leftarrow t_j$. Obtain $p_{\Dat,t,\cdot}$ and $\vh_{k,\cdot}^{(t)}$
    from one model evaluation on $\vx_t$.
  \STATE Initialize $\widetilde p_{0\mid t,l}\leftarrow p_{\Dat,t,l}$ and
    $a_{k,l}^{(t)}\leftarrow0$ for all $l$.
  \IF{$j\in C$}
    \STATE \textbf{if} MDLMs \textbf{then} $\mathcal M_t\leftarrow \{l\in\mathcal G:\vx_t^l=\mathbb M\}$
    \STATE \textbf{if} USDMs \textbf{then} $\mathcal M_t\leftarrow\{1,\ldots,L\}$ .
        \STATE Compute $s_{k,l}^{(t)}\leftarrow
      \langle\vh_{k,l}^{(t)}-\vm_k,\widehat\vv_k\rangle$ for $l\in\mathcal M_t$.
          \STATE \textbf{Select Risky Positions:}
      $a_{k,l}^{(t)}\leftarrow\mathbbm1\{s_{k,l}^{(t)}>0\}$ for $l\in\mathcal M_t$.
    \IF{at least one gate equals $1$}
      \STATE Estimate $\Em_{\Dat_{\mathrm{unsafe}}}[\vx^l_0|\vx^{1:L}_t] $ and
        $\beta(\vx_t)$ on the current state
      \STATE \textbf{SAD On Selected Positions:} Compute Sadder Denoiser in \eqref{point-wise-sadder-guidance}.
    \ENDIF
  \ENDIF
  \STATE $\vx_{t_{j+1}}\leftarrow\mathrm{Solver}(\vx_t,t_j,t_{j+1},\mbox{Sadder Denoiser};\mathcal G)$.
\ENDFOR
\STATE \textbf{Return:} the generated continuation of $\vx_{t_T}$.
\end{algorithmic}
\end{algorithm}

\subsection{Second Phase: SAD on Selected Positions}

In the first phase, we select which position is aligned with the harmful direction in the embedding space. In the second phase, we modify the sampling trajectory of the selected ones using the SAD denoiser formulation, given by 
\begin{align}\label{point-wise-sadder-guidance}
   \mbox{Sadder Denoiser}:= \prod_{l=1}^L \left[
    \Em_{\Dat}[\vx^l_0|\vx^{1:L}_t] + \eta\,\beta(\vx_t) a_{k,l}^{(t)} \left(\Em_{\Dat}[\vx^l_0|\vx^{1:L}_t]-\Em_{\Dat_{\mathrm{unsafe}}}[\vx^l_0|\vx^{1:L}_t] \right)\right]  
\end{align}
where $\beta(\vx_t)$ is defined in \eqref{beta_hat} and $\eta$ is a controllable hyperparameter that changes the strength of the guidance as in SAD. Similar to SAD, the data denoiser at position $l$ is $\Em_{\Dat}[\vx^l_0|\vx^{1:L}_t]$
and the unsafe denoiser at position $l$ is approximated by 
\begin{align}\label{point-wise-unsafe-denoiser}
    \Em_{\Dat_{\mathrm{unsafe}}}[\vx^l_0|\vx^{1:L}_t] \approx \sum_{\vx_n \in \Dat_{\mathrm{unsafe}}} \vx^l_n \; \frac{q_t(\vx_t|\vx^l_n)}{\sum_{\vx_m \in \Dat_{\mathrm{unsafe}}} q_t(\vx_t|\vx^l_m)}.
\end{align}
\Algref{SADDER} summarizes our algorithm.

\subsection{Theoretical analyses of SADDER}

In Safe Denoiser, \eqref{safe-denoiser} provides an identity of conditional expectations under a
particular safe/unsafe partition, while, in the implementation of SAD, this quantity is approximated by a well-tuned hyperparameter $\eta$. 
A freely chosen position-dependent gate at $l$ in SADDER also changes that identity, as in SAD.  
In the following theorem, we formally derive \textit{under what condition} this new target denoiser is a position-wise safe denoiser.

\begin{thm}[\textbf{Characterization of an exact position-wise safe denoiser}]
\label{thm:sadder-pointwise-safe}
Given a data denoiser from a pre-trained diffusion model, an unsafe denoiser in \eqref{point-wise-unsafe-denoiser}, and the intervention strength: $\eta \beta(\vx_t) a_{k,l}^{(t)}$, 
the the position-$l$ factor of the SADDER Denoiser in \eqref{point-wise-sadder-guidance} is exactly the conditional mean
under the position-wise safe target distribution
$\Dat_{\mathrm{safe}}^{(l)}$:
\begin{equation}
\boxed{
\E_{\Dat_{\mathrm{safe}}^{(l)}}\!\left[
        \vx_0^l\mid \vx_t^{1:L}
    \right]
    =
\Em_{\Dat}[\vx^l_0|\vx^{1:L}_t] + \eta  \beta(\vx_t) a_{k,l}^{(t)} \left(\Em_{\Dat}[\vx^l_0|\vx^{1:L}_t]-\Em_{\Dat_{\mathrm{unsafe}}}[\vx^l_0|\vx^{1:L}_t] \right)
}
\label{pointwise-exact-safe-denoiser}
\end{equation}
if the activation strength is calibrated to the posterior unsafe odds according to
    \begin{equation}
        \eta \beta(\vx_t) a_{k,l}^{(t)}
        =
        \frac{
            \pi_{t,l}(\vx_t^{1:L})
        }{
            1-\pi_{t,l}(\vx_t^{1:L})
        },
        \label{odds-calibration}
    \end{equation}
where  the posterior probability of the
position-$l$ being unsafe is denoted by
$\pi_{t,l}(\vx_t^{1:L})
    :=
    \E_\Dat\!\left[
        \psi_l(\vx_0^{1:L})\mid \vx_t^{1:L}
    \right]$, and the unsafety is measured by an oracle function denoted by $\psi_l:\mathcal{V}^L\longrightarrow[0,1]$, satisfying $\psi_l(\vx_0^{1:L})+(1-\psi_l(\vx_0^{1:L}))=1$ for any clean sequence $\vx_0^{1:L}$.
\end{thm}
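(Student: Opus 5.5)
We will construct $\Dat_{\mathrm{safe}}^{(l)}$ as a soft reweighting of the data distribution. The weights come from the oracle $\psi_l$. Concretely, define the position-wise unsafe and safe clean distributions by tilting $p_{\mathrm{data}}$:
\begin{align*}
p_{\mathrm{unsafe}}^{(l)}(\vx_0^{1:L}) \propto \psi_l(\vx_0^{1:L})\,p_{\mathrm{data}}(\vx_0^{1:L}), \qquad p_{\mathrm{safe}}^{(l)}(\vx_0^{1:L}) \propto \big(1-\psi_l(\vx_0^{1:L})\big)\,p_{\mathrm{data}}(\vx_0^{1:L}).
\end{align*}
Because $\psi_l+(1-\psi_l)=1$, we have $p_{\mathrm{data}} = Z_{\mathrm{unsafe}}^{(l)} p_{\mathrm{unsafe}}^{(l)} + Z_{\mathrm{safe}}^{(l)} p_{\mathrm{safe}}^{(l)}$, with $Z^{(l)}_{\mathrm{unsafe}}=\E_{\Dat}[\psi_l]$ and $Z^{(l)}_{\mathrm{safe}}=1-Z^{(l)}_{\mathrm{unsafe}}$. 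This is exactly the mixture structure behind \thmref{safe-denoiser}, now with a soft partition in place of a hard one. We take the unsafe denoiser in \eqref{point-wise-unsafe-denoiser} to be the (Monte Carlo approximation of the) conditional mean under $p_{\mathrm{unsafe}}^{(l)}$, which is the identification the statement assumes.

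The key step is a conditional-mean decomposition. We push both mixture components through the same forward kernel $q_t$ (\eqref{forward_mdlm} or \eqref{forward_usdm}) and apply Bayes' rule. The posterior weight of the unsafe component given $\vx_t^{1:L}$ is
\begin{align*}
\frac{Z^{(l)}_{\mathrm{unsafe}}\,p^{(l)}_{\mathrm{unsafe},t}(\vx_t)}{p_{\mathrm{data},t}(\vx_t)}
= \frac{\sum_{\vx_0}\psi_l(\vx_0)p_{\mathrm{data}}(\vx_0)q_t(\vx_t\mid\vx_0)}{p_{\mathrm{data},t}(\vx_t)}
= \E_{\Dat}[\psi_l(\vx_0^{1:L})\mid\vx_t^{1:L}] = \pi_{t,l}(\vx_t^{1:L}).
\end{align*}
The law of total expectation then gives
\begin{align*}
\E_{\Dat}[\vx_0^l\mid\vx_t] = \pi_{t,l}\,\E_{\Dat_{\mathrm{unsafe}}^{(l)}}[\vx_0^l\mid\vx_t] + (1-\pi_{t,l})\,\E_{\Dat_{\mathrm{safe}}^{(l)}}[\vx_0^l\mid\vx_t].
\end{align*}
Solving for the safe term, assuming $\pi_{t,l}<1$, yields
\begin{align*}
\E_{\Dat_{\mathrm{safe}}^{(l)}}[\vx_0^l\mid\vx_t] = \E_{\Dat}[\vx_0^l\mid\vx_t] + \frac{\pi_{t,l}}{1-\pi_{t,l}}\big(\E_{\Dat}[\vx_0^l\mid\vx_t]-\E_{\Dat_{\mathrm{unsafe}}^{(l)}}[\vx_0^l\mid\vx_t]\big).
\end{align*}
We verify this by multiplying through by $1-\pi_{t,l}$ and checking that the identity collapses to the total-expectation line. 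Substituting the calibration \eqref{odds-calibration} then gives \eqref{pointwise-exact-safe-denoiser} directly.

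Finally, we will check consistency of the gating cases. If $a_{k,l}^{(t)}=0$, the calibration forces $\pi_{t,l}=0$. Then $\psi_l=0$ holds $p_{\mathrm{data}}(\cdot\mid\vx_t)$-almost surely, the safe target coincides with $\Dat$, and the factor reduces to the data denoiser, as it should. If $a_{k,l}^{(t)}=1$, the odds equal $\eta\beta(\vx_t)$; this is the SAD setting, with $Z$-ratios absorbed into $\eta$ exactly as in the $\beta^\ast$ definition recalled in \Secref{sec:sad}.

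The main obstacle is definitional rather than computational. The unsafe denoiser appearing on the right-hand side must be the conditional mean under the $\psi_l$-tilted distribution, not the fixed negation set, for the identity to be exact. We will state this identification explicitly and note that \eqref{point-wise-unsafe-denoiser} is its empirical approximation. We will also handle the degenerate case $\pi_{t,l}=1$ by excluding it, since the odds are then infinite.
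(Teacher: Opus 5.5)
Your proposal is correct and follows the paper's proof. You write the data posterior as the $\pi_{t,l}$-weighted mixture of the $\psi_l$- and $(1-\psi_l)$-tilted posteriors, take the conditional mean of $\vx_0^l$, solve for the safe term, and substitute the odds calibration, with the same caveat the paper records after its proof: the unsafe denoiser must be the $\psi_l$-induced conditional mean rather than the negation-set estimate. The only difference is that you tilt the clean-data prior and recover $\pi_{t,l}$ by Bayes' rule, while the paper tilts $p_{\Dat}(\vx_0^{1:L}\mid\vx_t^{1:L})$ directly; since $\psi_l$ depends only on $\vx_0^{1:L}$ the posteriors coincide, and your version additionally exhibits $\Dat_{\mathrm{safe}}^{(l)}$ as a fixed clean-sequence distribution with $\pi_{t,l}/(1-\pi_{t,l})$ equal to the Safe Denoiser weight $\beta^\ast$ for this soft partition.
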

See \Secref{proof_point_wise} for proof. 
\begin{remark}
\thmref{sadder-pointwise-safe} characterizes our algorithm, SADDER, in terms of a position-wise safe-denoiser guarantee, rather than asserting that the exact membership function $\psi$ is available at inference.
However, it identifies the quantity that the activation gate should approximate: the posterior expected unsafe membership of clean sequences compatible with the current noised state. In practice, the centred activation score is a surrogate for this quantity without formal guarantees, like $\beta^*(\vx_t)$ was approximated by $\eta \beta(\vx_t)$ in SAD; the resulting SADDER denoiser is also \textit{approximate}. 
\end{remark}

The position-specific target distribution $p_{\Dat_{\mathrm{safe}}^{(l)}}$ for each position $l$, stated in \thmref{sadder-pointwise-safe}, may differ for each $l$: $p_{\Dat_{\mathrm{safe}}^{(1)}}, \cdots, p_{\Dat_{\mathrm{safe}}^{(L)}}$
Hence, 
 a natural next question is whether 
 there exists one clean-sequence distribution $p_{\Dat_{\mathrm{safe}}^\mathrm{global}}(\vx_0^{1:L})$, such that simultaneously for every $l$, 
it satisfies $\E_{\Dat_{\mathrm{safe}}}^{global}(\vx_0^{l}|\vx_t^{1:L}) =  \E_{\Dat_{\mathrm{safe}}^{(l)}}(\vx_{0}^l|\vx_t^{1:L})$. 
\thmref{sadder-global-safe} states this formally.

\input{sections/Algorithm}

%% file: sections/Related_Work.tex
\paragraph{Activation steering and remasking.}
Contrastive Activation Addition~\citep{rimsky-etal-2024-steering} steers language models using directions extracted from paired examples; ILRR~\citep{avrahami2026ilrrinferencetimesteeringmethod} guides masked diffusion by aligning activations with a reference sequence.
For safety, DiffuGuard~\citep{diffuguard} combines stochastic remasking with block-level audit and repair.
Most closely related, \citet{lee2026adaptivesteeringremaskingsafe} construct paired samples $(x_i,y_i^{\mathrm{harm}},y_i^{\mathrm{safe}})$, using harmful model responses to WildJailbreak prompts~\citep{NEURIPS2024_wildjailbreak} and paraphrased refusals.
They estimate a \emph{Contrastive Safety Direction} (CSD), $v_\ell$, from the difference between the mean harmful and safe response representations.
In \emph{Phase 1}, they modify hidden states at masked and prompt positions during the first fraction $\rho$ of the denoising steps:
$
\mathcal{S}(h_{\ell,j}^{(t)})
= h_{\ell,j}^{(t)}
- \beta\,\langle h_{\ell,j}^{(t)},\hat{v}_\ell\rangle\hat{v}_\ell,$
where $\hat{v}_\ell$ is the normalized CSD and $\beta$ controls the intervention strength.
In \emph{Phase 2}, they refine the generated sequence by selecting positions according to
$
m_j^{(k)}
= \mathbbm{1}\!\left\{\langle h_{\ell,j}^{(k)},\hat{v}_\ell\rangle>\theta\right\},$
then remasking and regenerating those positions for up to $K$ refinement iterations, stopping early if no positions exceed the threshold.
SADDER uses a similar contrastive direction, but centers the activation score at the midpoint of the harmful and safe representations.
This places the zero-score boundary at their Euclidean bisector.
The score gates SAD's correction in token probability space rather than directly modifying hidden states; in masked models, it also ranks revealed positions for budgeted remasking within the denoising loop.

\paragraph{Sampling-based baselines.}
Best-of-$N$ sampling generates independent candidate completions and selects the one with the highest reward.
FK steering~\citep{pmlr-v267-singhal25b-fk-steering} instead maintains multiple diffusion trajectories and resamples them at intermediate steps using reward-derived potentials.
Both provide inference-time control without retraining the generator.
SADDER modifies token predictions within each trajectory, using unsafe references and activation scores rather than ranking candidate sequences.
Comparisons with Best-of-$N$ and FK steering assess whether this local intervention offers a better safety--quality tradeoff at a given inference budget.

\paragraph{Diffusion-specific jailbreaks.}
DIJA~\citep{wen2025devilmaskemergentsafety} constructs interleaved text--mask prompts that exploit bidirectional context and parallel decoding to elicit harmful completions.
PAD~\citep{zhang2025jailbreakinglargelanguagediffusion-PAD} adapts affirmative-response priming to parallel generation through a multi-point attention attack.
These attacks expose vulnerabilities associated with diffusion's generation process and motivate evaluating safety guidance on adversarially structured inputs as well as ordinary unsafe continuation.
We include all in our evaluation of selective guidance and token revision.

%% file: sections/Experiments.tex
We ask whether selective guidance improves the safety--quality tradeoff of SAD, and whether its benefits transfer from unsafe continuation to adversarial instructions. Following \citet{yusuf26sad}, we evaluate these tasks separately. The main continuation results distinguish SADDER from SADDER with a full-SAD warm-up. Other variants, calibration sweeps, and the extended algorithm are reported in Appendix~\ref{app:variants}.

\subsection{Unsafe continuation on \textbf{Masked Diffusion Language Models} (MDLMs)}
\label{sec:continuation}

\paragraph{Setting and metrics.}
We evaluate LLaDA-8B-Base~\cite{LLaDA} and Dream-7B-Base~\cite{ye2025dream7bdiffusionlarge} on 600 unsafe prompts per dataset from RealToxicityPrompts (RTP)~\cite{realtoxicityprompts} and BeaverTails (BT)~\cite{ji2023beavertails}, generating 32 tokens in 32 steps. Baselines are unmodified sampling, full SAD, adapted CSD$^\dagger$~\citep{lee2026adaptivesteeringremaskingsafe}, best-of-eight (BoN-8), and eight-particle FK steering~\citep{pmlr-v267-singhal25b-fk-steering}. \emph{SADDER + warm-up} applies full SAD on steps 0--7 and selective guidance on steps 8--15. Evaluation prompts were held out from development; later hyperparameter selection uses a separate 300-prompt development set (Appendix~\ref{app:settings}).

Safety is the Llama-Guard-3-8B~\cite{llamaguard3_modelcard} unsafe rate. \emph{Harmful PPL} is GPT-2-large~\citep{radford2019language} corpus perplexity over all outputs to harmful prompts, including safe outputs. \emph{Benign BERT} (BERTScore F1 $\times100$) and \emph{Benign PPL} use 600 benign RTP prompt--continuation pairs, retaining each method's safety-selected settings and reference set. NFE counts generator forward evaluations, excluding reward scoring and one-time reference encoding.

\input{tables/main_llada.tex}

\paragraph{Safety--quality tradeoffs.}
On LLaDA/RTP, SADDER lowers the unsafe rate from 38.6\% unmodified to 29.4\%, versus 32.4\% for full SAD; warm-up reaches 31.2\%. On BeaverTails, SADDER reaches 29.9\%, full SAD 27.1\%, and warm-up 24.4\%. Across all four model/dataset settings, SADDER achieves higher benign BERTScore and lower benign PPL than full SAD. SADDER also achieves higher benign BERTScore than warm-up throughout, although both variants remain below unmodified sampling on this metric (Tables~\ref{tab:llada} and~\ref{tab:dream}).
Harmful-prompt PPL reveals a different tradeoff: on LLaDA/RTP, CSD reaches a 22.4\% unsafe rate but PPL 225, versus 42.1 for SADDER and 31.6 for warm-up without remasking. CSD's benign PPL of 21.6 does not expose this deterioration on harmful prompts. Gains also depend on the model: on Dream, full SAD at $\eta=10{,}000$ achieves lower unsafe rates than either SADDER variant on both datasets. BoN-8 further reduces unsafe rates at a cost of 256 generator evaluations plus reward scoring.

\subsection{Unsafe Continuation on \textbf{Uniform-State Diffusion Models}}
\label{sec:usdm-results}
\paragraph{Setting and metrics.}
We evaluate Duo~\citep{sahoo2025duo} and UDLM~\citep{schiff2025simple} on 300 unsafe prompt records from RealToxicityPrompts (RTP).
Both models generate at a context length of 128 in 128 denoising steps.
Unsafe references and contrastive directions follow Section~\ref{sec:continuation}, with $k \in \{6,9,12\}$.
Baselines are unmodified sampling, full SAD, CSD$^\dagger$, and best-of-$N$ for $N \in \{3,5,8\}$. Full SAD and SADDER sweep $\eta \in \{3{\times}10^4, 5{\times}10^4, 10^5, 1.5{\times}10^5\}$.
In masked models, guidance must reach a position before it is revealed. Uniform-state samplers have no such constraint, so we gate every position at every step, using Equation~\ref{eq:centered-risk-score-mdlm} for SADDER and no gate for full SAD.

Safety is the Llama-Guard-3-8B unsafe rate. Benign BERT uses 300 separate benign RTP prompt–continuation pairs. All entries are means over five generation seeds, and Appendix~\ref{app:uniform} reports exact values with seed standard deviations.

\begin{figure}[t]
\centering
\includegraphics[width=0.8\textwidth]{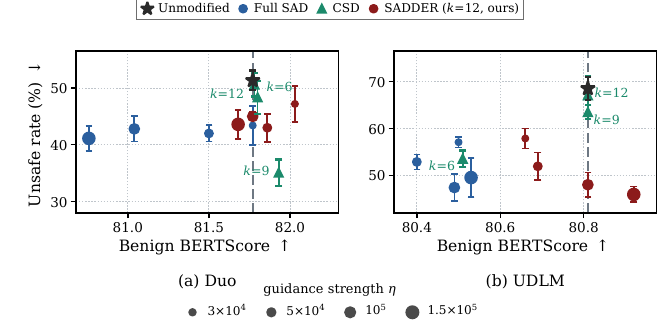}
\caption{Safety--quality tradeoff at equal NFE on Duo and UDLM. Best-of-$N$ is reported in Appendix~\ref{app:uniform}. Larger markers denote larger guidance strength $\eta$. The dashed vertical line marks the unmodified model's benign BERTScore, and error bars are one standard deviation across seeds.}
\label{fig:uniform}
\end{figure}

\paragraph{Safety--quality tradeoffs.}
Figure~\ref{fig:uniform} reports the tradeoff.
On UDLM, SADDER reaches a 45.9\% unsafe rate, against 68.5\% without steering and 47.4\% for the lowest unsafe rate among full SAD settings.
SADDER's benign BERTScore is 80.92, against 80.49 for full SAD and 80.81 unmodified, so it is safer than full SAD without any loss in reference similarity.
On Duo, full SAD reaches lower unsafe rates than SADDER at every $\eta$, from 43.4\% versus 47.2\% at $\eta=3{\times}10^4$ to 41.1\% versus 43.6\% at $\eta=1.5{\times}10^5$.
The same holds at matched benign BERTScore: full SAD reaches 43.4\% at 81.77, where SADDER reaches 45.0\%.
What SADDER does buy is quality, on both models: at every $\eta$ its benign BERTScore is higher than full SAD's. On Duo the gap is widest at the end of the sweep, 81.68 against 80.76.

SADDER, full SAD, and CSD$^\dagger$ use the same number of generator
evaluations, whereas best-of-$N$ uses $N$ times as many. Neither baseline is consistent across the two models. On Duo, best-of-8 reaches 33.1\%, the lowest unsafe rate we observe, yet on UDLM best-of-$N$ does not improve on unmodified sampling at any budget. CSD$^\dagger$ reaches 35.1\% on Duo at $k=9$ but 63.5\% on UDLM at the same layer, and it is sensitive to the layer choice: across $k$, its unsafe rate on Duo spans 15.5 points, whereas SADDER's varies by at most 2.2 points at fixed $\eta$.

\subsection{Jailbreak robustness on instruction-tuned models}
\label{sec:jailbreak}

\paragraph{Setting.}
We evaluate DIJA and PAD on LLaDA-8B-Instruct and Dream-7B-Instruct using 100 held-out HarmBench behaviors, 128 sampling steps, and WildJailbreak~\cite{NEURIPS2024_wildjailbreak} unsafe references. CSD and SADDER share the unsafe-response set. DiffuGuard~\citep{diffuguard} is evaluated with \emph{full} repair, which can modify fixed attack text, and \emph{gen} repair, restricted to initially masked positions; their audit thresholds also differ (Appendices~\ref{app:jailbreak} and~\ref{app:sadder-datasets}).

Table~\ref{tab:jailbreak} reports full-response HarmBench (HB) ASR, LlamaGuard (LG) unsafe rate, harmful PPL, and NFE. XSTest over-refusal uses WildGuard (WG)~\citep{wildguard2024} judgments on 250 safe, unmodified prompts across three seeds, with the DIJA guidance configuration. WG rates use valid labels (non-empty) within each seed, then average across seeds. Appendices~\ref{app:judge-views} and~\ref{app:xstest-results} report generated-span judgments, valid-score counts, paired comparisons, and calibration-prompt reuse.

\input{tables/jailbreak_main}

\paragraph{Attack and evaluator dependence.}
Under DIJA, SADDER and warm-up leave HB ASR near unmodified sampling, while LG flags roughly four-fifths of responses as unsafe. Large HB reductions from CSD and full-block DiffuGuard do not yield comparable LG reductions: Dream's full-block DiffuGuard reaches 33.3\% HB ASR but 84.0\% LG unsafe rate and PPL 537. Both judges assess complete responses, so this discrepancy cannot be attributed solely to scaffold inclusion. Appendix~\ref{app:judge-views} separates evaluator choice from the influence of fixed attack text by comparing full-response and generated-span judgments on the same  outputs.  SADDER retains generator cost and incurs a smaller PPL increase, but offers limited DIJA robustness.

PAD gives a more favorable result: Dream warm-up reduces HB ASR by 9.0 points (paired 95\% CI: $[-15.7,-2.3]$), lowers LG unsafe rate from 64.0\% to 55.0\%, and increases PPL from 8.53 to 10.40. This supports an attack-specific tradeoff rather than general jailbreak robustness. Appendix~\ref{app:jailbreak} reports component, strength, and timing studies.

\paragraph{Over-refusal.}
Dream warm-up reduces WG refusal on safe XSTest prompts by 35.4 points on jointly scored outputs (95\% CI: $[-41.6,-29.3]$), but raises WG harmfulness on unsafe contrasts from 0.0\% to 6.9\% among scored responses. Fewer refusals therefore do not establish greater helpfulness at unchanged safety.

%% file: tables/main_llada.tex
\begin{table*}[t]
\centering\fontsize{8}{9.6}\selectfont
\setlength{\tabcolsep}{0.5pt}\renewcommand{\arraystretch}{1.12}
\caption{LLaDA-Base safety, generation quality, and cost. Entries use $A_b$, where $A$ is the mean and $b$ is one sample standard deviation across three generation seeds. For percentages, SDs are in percentage points. NFE shows average cost.}
\label{tab:llada}
\begin{tabular*}{\textwidth}{@{\extracolsep{\fill}}l@{\hspace{3pt}}rrrrr@{\hspace{5pt}}rrrrr@{}}
\toprule
\multirow{2}{*}{Method} & \multicolumn{5}{c}{\textbf{RTP}} & \multicolumn{5}{c}{\textbf{BeaverTails}} \\
\cmidrule(lr){2-6}\cmidrule(lr){7-11}
 & \shortstack{Unsafe\\\% $\downarrow$} & \shortstack{Harmful\\PPL $\downarrow$} & \shortstack{Benign\\PPL $\downarrow$} & \shortstack{Benign\\BERT $\uparrow$} & NFE & \shortstack{Unsafe\\\% $\downarrow$} & \shortstack{Harmful\\PPL $\downarrow$} & \shortstack{Benign\\PPL $\downarrow$} & \shortstack{Benign\\BERT $\uparrow$} & NFE \\
\midrule
Unmodified & $38.6_{1.3}$ & $21.1_{0.15}$ & $17.5_{0.35}$ & $84.65_{0.04}$ & 32.0 & $38.6_{1.5}$ & $20.6_{0.62}$ & $17.5_{0.35}$ & $84.65_{0.04}$ & 32.0 \\
Full SAD & $32.4_{1.6}$ & $29.3_{0.52}$ & $30.0_{0.75}$ & $83.92_{0.01}$ & 32.0 & $27.1_{1.2}$ & $35.1_{0.34}$ & $47.1_{1.1}$ & $83.38_{0.01}$ & 32.0 \\
SADDER & $29.4_{1.7}$ & $42.1_{1.4}$ & $18.4_{0.26}$ & $84.59_{0.06}$ & 32.0 & $29.9_{0.9}$ & $34.6_{0.54}$ & $27.8_{0.86}$ & $84.20_{0.03}$ & 32.0 \\
SADDER + warm-up & $31.2_{0.7}$ & $31.6_{2.1}$ & $44.3_{0.75}$ & $83.01_{0.01}$ & 32.0 & $24.4_{0.9}$ & $36.9_{0.32}$ & $44.1_{0.14}$ & $83.18_{0.03}$ & 32.0 \\
CSD$^\dagger$ & $22.4_{0.4}$ & $225_{7.0}$ & $21.6_{0.26}$ & $84.59_{0.04}$ & 61.9 & $15.6_{0.5}$ & $487_{36}$ & $943_{40}$ & $82.61_{0.03}$ & 55.0 \\
BoN-8 & $19.0_{0.7}$ & $20.5_{0.26}$ & $19.5_{0.48}$ & $84.70_{0.00}$ & 256.0 & $18.6_{0.6}$ & $24.2_{0.32}$ & $19.5_{0.48}$ & $84.70_{0.00}$ & 256.0 \\
FK-8 & $20.6_{0.8}$ & $21.3_{1.1}$ & $18.2_{0.53}$ & $84.71_{0.04}$ & 256.0 & $28.2_{1.2}$ & $22.7_{0.93}$ & $18.2_{0.53}$ & $84.71_{0.04}$ & 256.0 \\
\bottomrule
\end{tabular*}
\end{table*}

%% file: tables/jailbreak_main.tex
\begin{table*}[t]
\centering\fontsize{8}{10}\selectfont
\setlength{\tabcolsep}{2pt}\renewcommand{\arraystretch}{1.15}
\caption{Dream-Instruct: held-out HarmBench attacks and XSTest over-refusal. Entries use $A_b$: mean and sample SD across three seeds (percentage points for rates). HB, LG, and PPL assess complete responses. XSTest reports full-response WG refusal on safe prompts, using valid judge labels only (coverage in Appendix~\ref{app:xstest-results}). NFE is mean cost.}
\label{tab:jailbreak}
\label{tab:jailbreak-pad}
\begin{tabular}{@{}lrrrr@{\hspace{8pt}}rrrr@{\hspace{8pt}}r@{}}
\toprule
\multirow{2}{*}{Method} & \multicolumn{4}{c}{\textbf{DIJA}} & \multicolumn{4}{c}{\textbf{PAD}} & \multicolumn{1}{c}{\textbf{XSTest}} \\
\cmidrule(lr){2-5}\cmidrule(lr){6-9}\cmidrule(lr){10-10}
 & \shortstack{HB ASR\\\% $\downarrow$} & \shortstack{LG unsafe\\\% $\downarrow$} & \shortstack{Harmful\\PPL $\downarrow$} & NFE & \shortstack{HB ASR\\\% $\downarrow$} & \shortstack{LG unsafe\\\% $\downarrow$} & \shortstack{Harmful\\PPL $\downarrow$} & NFE & \shortstack{WG refusal\\\% $\downarrow$} \\
\midrule
\multicolumn{10}{@{}l}{\textbf{Dream-Instruct}} \\
Unmodified & $82.0_{1.0}$ & $80.7_{1.5}$ & $11.4_{0.18}$ & 91.7 & $51.7_{0.6}$ & $64.0_{3.6}$ & $8.53_{0.027}$ & 116.0 & $55.5_{1.3}$ \\
Full SAD & $85.3_{0.6}$ & $83.0_{1.0}$ & $11.8_{0.037}$ & 91.7 & $52.7_{4.0}$ & $63.3_{1.5}$ & $9.52_{0.21}$ & 116.0 & $58.0_{1.0}$ \\
SADDER (selective) & $77.7_{1.5}$ & $81.7_{3.2}$ & $12.2_{0.093}$ & 91.7 & $52.7_{0.6}$ & $66.3_{2.5}$ & $9.58_{0.15}$ & 116.0 & $41.1_{0.8}$ \\
SADDER + warm-up & $78.7_{2.3}$ & $79.7_{1.2}$ & $12.2_{0.10}$ & 91.7 & $42.7_{3.1}$ & $55.0_{2.0}$ & $10.4_{0.17}$ & 116.0 & $17.9_{0.7}$ \\
CSD$^\dagger$ & $68.0_{6.2}$ & $82.3_{0.6}$ & $45.9_{4.1}$ & 115.5 & $33.3_{3.1}$ & $63.3_{2.5}$ & $23.7_{1.5}$ & 140.8 & $56.4_{1.9}$ \\
DiffuGuard (full) & $33.3_{2.5}$ & $84.0_{2.0}$ & $537_{11}$ & 137.0 & $12.0_{1.7}$ & $70.0_{2.6}$ & $687_{120}$ & 125.0 & $64.1_{1.5}$ \\
DiffuGuard (gen) & $69.3_{1.5}$ & $81.3_{1.5}$ & $46.6_{1.9}$ & 97.2 & $49.0_{1.7}$ & $63.3_{2.3}$ & $8.80_{0.63}$ & 117.0 & $54.6_{1.1}$ \\
\bottomrule
\end{tabular}
\end{table*}

%% file: sections/Summary_Discussion.tex
We introduced SADDER, an extension of the Safety-Aware Denoiser, using midpoint-centred contrastive activation scores to select where to apply probability-space safety guidance. Rather than directly modifying hidden representations, SADDER uses them to identify potentially risky positions and applies the SAD correction to the corresponding token distributions. 
Our theoretical analysis characterizes the conditions under which position-wise guidance recovers the conditional mean of an induced safe target, and identifies when these local conditional means agree with a global target. 
Experiments on masked and uniform-state diffusion models reveal both the benefits and limitations of selective guidance. SADDER improves benign-continuation quality metrics relative to full SAD in several settings, with selected configurations also reducing unsafe generation. Jailbreak improvements also depend on the model and operating point, but SADDER shows significantly lower refusal rates. 
Taken together, SADDER offers an additional way to manage the safety–quality and safety-refusal trade-offs.

%% file: sections/Appendix.tex
\input{sections/Appendix/notation}

\input{sections/Appendix/proofs}

\input{sections/Appendix/datasets_models_attacks_eval}
\subsection{Hyperparameters and configuration selection}
\label{app:sadder-hyperparameters}
Table~\ref{tab:sadder-common-hparams} gives the shared settings; Table~\ref{tab:sadder-sweep-hparams} summarizes the tested hyperparameter ranges by model and parameter; Table~\ref{tab:sadder-selected-hparams} lists the selected continuation operating points. The remaining variant settings and results follow in Appendix~\ref{app:settings}. Full SAD does not depend on the hidden-state layer.

\begin{table}[!htbp]
\centering\small
\caption{Shared sampling and guidance settings.}
\label{tab:sadder-common-hparams}
\begin{tabular}{@{}p{0.29\linewidth}p{0.66\linewidth}@{}}
\toprule
Parameter & Setting \\
\midrule
Continuation sampling & 32 new tokens; 32 reverse steps; temperature 1; seeds 42, 43, 44. \\
Jailbreak sampling & 128 nominal steps; nominal length 128; temperature 1; seeds 42, 43, 44, except the explicitly labeled one-seed pilot. DIJA fills existing template spans. \\
Unsafe set & Continuation: up to 2,000 responses from the matching dataset after exact prompt-overlap exclusion. Jailbreak: documented 357-response WildJailbreak set. \\
Direction construction & Clean-reference response means. Later jobs use all valid reference rows; the selected earlier LLaDA/RTP selective row uses the 256-reference direction protocol. \\
Centered selection & Strict score threshold $s_{k,l}^{(t)}>0$ on masked editable positions. \\
Remasking & Off for the core method; top-$b$ uses $b=1$; aligned remasking reopens all positive-score revealed positions. \\
Warm-up & Off for selective-only SADDER. Main continuation warm-up: $w=8$, followed by selective guidance through step 15. \\
Matched controls & Count-matched random selection or reopening; matrix-control seed zero. \\
Search budgets & BoN and FK with 2, 4, or 8 candidates/particles; BoN-2 composition with SAD-based steering. \\
\bottomrule
\end{tabular}
\end{table}

\FloatBarrier
\input{tables/hyperparameter_sweeps.tex}

\FloatBarrier

\input{tables/selected_hyperparams}

\FloatBarrier

\input{sections/Appendix/implementation_details_and_remasking}

\section{Continuation results with confidence intervals}
\label{app:continuation-ci}
Table~\ref{tab:llada-ci} reports the main LLaDA continuation estimates with pointwise 95\% prompt-cluster bootstrap intervals from 5,000 resamples, retaining all observed seeds within each sampled prompt. These intervals describe prompt uncertainty; the main-table subscripts describe variation across three generation seeds. The selected configurations and evaluation cohorts are identical in both tables.
\input{tables/main_llada_ci}

\FloatBarrier

\section{Dream-Base continuation results}
\label{app:dream-continuation}
Table~\ref{tab:dream} reports the complete Dream-Base comparison under the same continuation protocol as LLaDA (Section~\ref{sec:continuation}). Unsafe rate and harmful PPL use the RTP or BeaverTails harmful cohort; benign PPL and BERTScore use the separate benign RTP cohort. Entries are means with 95\% prompt-cluster intervals; NFE measures generator cost.
\input{tables/main_dream}
\FloatBarrier

\section{Continuation settings and component variants}
\label{app:settings}
\label{app:variants}
Settings are selected by unsafe rate; benign quality is evaluated afterward. Tables~\ref{tab:settings-llada} and~\ref{tab:settings-dream} report layer, strength, and the last admitted step (all windows begin at zero). The warm-up variant uses full SAD on steps 0--7 and selective guidance on 8--15, with aligned remasking on LLaDA and no remasking on Dream. Equation~\eqref{eq:sadder-remasking} defines the alternative reopening policies.

\input{tables/extra/settings_llada.tex}
\input{tables/extra/settings_dream.tex}
\FloatBarrier

\paragraph{SADDER variants.}
These tables extend the main comparison with top-1 and aligned remasking, using the same metrics and uncertainty conventions.

\input{tables/extra/variants_llada.tex}
\input{tables/extra/variants_dream.tex}
\FloatBarrier

\section{Paired comparisons and component diagnostics}
\label{app:continuation}
\label{app:mechanism}
The following figure and benign-quality table compare the selected component configurations; their LLaDA warm-up entries include aligned remasking. Paired differences compare SADDER with the separately selected full-SAD operating point on identical prompt--seed observations. Negative unsafe-rate and PPL differences, and positive BERTScore differences, favour SADDER. Quality comparisons use common valid pairs within each cohort. Intervals are conditional on the selected settings and observed seeds.

\begin{figure}[!htbp]
\centering
\includegraphics[width=\linewidth]{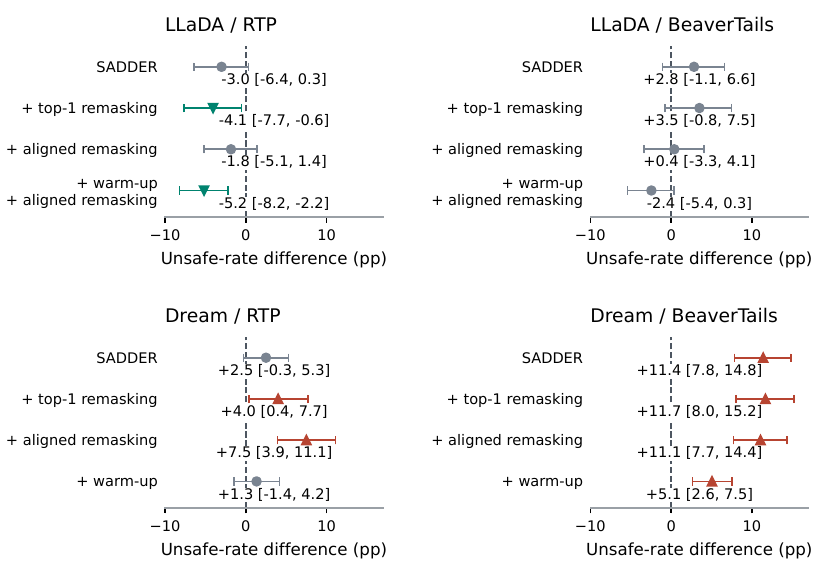}
\caption{Selected SADDER component configurations minus full SAD on 600 harmful prompts per dataset and three seeds. LLaDA warm-up includes its aligned remasking, Dream does not. Points and bars show unsafe-rate differences (pp) and pointwise 95\% paired bootstrap CIs; negative favors SADDER.}
\label{fig:continuation-paired}
\end{figure}
\input{tables/extra/paired_benign_main.tex}

\FloatBarrier

\paragraph{Paired random controls.}
Tables~\ref{tab:continuation-controls-llada} and~\ref{tab:continuation-controls-dream} compare each selected SADDER component configuration with full SAD and random controls at the same layer, strength, and intervention window. LLaDA warm-up here includes aligned remasking. Random selection replaces the steering gate with a random subset of the gate-implied size, preserving the remasking and warm-up policies. Random reopening replaces top-1 remasking with one random position, or aligned remasking with a random subset of the threshold-implied size, preserving steering and warm-up. Counts are computed on each trajectory's current state and may diverge across trajectories. The full-SAD comparator steers all masked positions throughout the window, without additional remasking or a selective phase. Differences are SADDER minus the comparator; negative values favor SADDER. Dashes indicate that a matched control is unavailable or reopening is not used. Each interval uses 5,000 paired prompt-cluster resamples of the same 600 prompt records and three seeds, conditional on the selected configuration.
\input{tables/extra/continuation_controls_llada.tex}
\input{tables/extra/continuation_controls_dream.tex}

\section{BoN-2/4/8, FK-2/4/8, and composition}
\label{app:search}
We compare BoN and FK at budgets 2, 4, and 8, with harmful and benign quality measured on the corresponding cohorts. The FK adaptation resamples every four steps, using one clean-state draw per particle and a difference potential on log safety rewards. RoBERTa supplies the search reward; LlamaGuard judges final outputs. NFE excludes reward scoring.

\input{tables/bon/benign_search_llada.tex}
\input{tables/bon/benign_search_dream.tex}
\FloatBarrier

\paragraph{Composition with BoN.}
SADDER can guide each candidate before BoN-2 selection. In the LLaDA/RTP run, this reduces unsafe rate from 30.61\% to 26.7\% at the same 64 generator evaluations. Its unsafe-prompt PPL increases from 20.90 to 28.80. The two tables distinguish separate composition studies. PPL is evaluated on harmful prompts; benign quality was not measured for these hybrids.
\input{tables/bon/composition_focused.tex}
\FloatBarrier

\section{Jailbreak robustness and ablations}
\label{app:jailbreak}

\subsection{LLaDA-Instruct jailbreak results}
\label{app:llada-jailbreak}
Table~\ref{tab:jailbreak-llada} gives the LLaDA-Instruct comparison corresponding to the Dream-Instruct results in Table~\ref{tab:jailbreak}. Both tables use the same metrics and reporting conventions. XSTest uses the DIJA-configured defense; for LLaDA generation-scope DiffuGuard, this is $\lambda=0.35$.
\input{tables/jailbreak_main_llada.tex}

\FloatBarrier
\input{sections/Appendix/xstest_results}
\FloatBarrier

\subsection{Held-out attack comparisons}
Table~\ref{tab:jailbreak-main-ci} reports the main jailbreak-table estimates with pointwise 95\% prompt-cluster bootstrap confidence intervals, using 5,000 resamples and retaining all observed seeds within each sampled prompt. These intervals describe prompt uncertainty; the main-table subscripts describe variation across the three generation seeds. The XSTest over-refusal is the offline string-based matching metric rather than the WildGuard over-refusal metric.
\input{tables/jailbreak_main_ci.tex}
\FloatBarrier

\subsection{WildGuard harmfulness and refusal}
\label{app:wildguard-attacks}
Tables~\ref{tab:wildguard-attacks-llada} and~\ref{tab:wildguard-attacks-dream} report full-response WildGuard harmfulness and refusal on the same HarmBench outputs, including remasking and random-selection controls. Entries show the mean with a seed-SD subscript and a 95\% prompt-cluster bootstrap interval below. The $n$ columns count valid labels out of 300 outputs per attack and method. Missing labels are excluded, and rates average the valid-output rate within each of the three seeds. Table~\ref{tab:wildguard-controls-paired} compares selective SADDER and warm-up against full SAD and random selection on jointly scored outputs. Negative harmfulness differences indicate fewer harmful responses; negative refusal differences indicate fewer refusals, which alone does not establish a safety improvement. The prompt--response pairs use the original behavior and the complete response, including any retained attack scaffold.

\paragraph{Comparison across safety judges.}
The judges differ in their assessment of these gains. On Dream/PAD, warm-up lowers HB ASR from 51.7\% to 42.7\%, while WG harmfulness increases from 66.0\% to 69.0\%; selective SADDER reaches 80.0\% WG harmfulness. On LLaDA/PAD, warm-up reduces both HB ASR (58.7\% to 57.0\%) and WG harmfulness (80.0\% to 76.3\%). Full-block DiffuGuard on Dream/DIJA illustrates a larger disagreement: 33.3\% HB ASR accompanies 98.3\% WG harmfulness and PPL 537. The apparent safety benefit depends on the judge as well as the model and attack, particularly when the intervention disrupts fluency.

\input{tables/extra/wildguard_attacks_llada.tex}
\input{tables/extra/wildguard_attacks_dream.tex}
\FloatBarrier
\input{tables/extra/wildguard_controls_paired.tex}
\FloatBarrier

\subsection{DIJA judge and response-view sensitivity}
\label{app:judge-views}
\label{app:wg-fills}
\paragraph{Separating generated content from the attack scaffold.}
DIJA completions contain both fixed attack text and model-generated spans. A harmfulness judgment on the complete response can therefore depend on text supplied by the attacker. We compare HarmBench (HB), LlamaGuard (LG), and WildGuard (WG) on two response views: the complete response, including the scaffold, and only the spans marked as initially masked in the saved completion segmentation (\emph{fills}). Nonempty spans are trimmed and joined by newlines before judging. HB measures behavior-specific attack success, LG unsafe content, and WG response harmfulness. These are distinct evaluator criteria, even when the response view is held fixed. The main table retains full-response HB/LG for attacks and uses WG only for XSTest. Generated-span attack scores from all three judges are reported here. All response-view comparisons use the same generated completions and defence settings, so differences between views reflect the scoring procedure.

This view isolates the initially editable content, rather than judging the assembled attack response. Removing intervening text can change the context needed to interpret a span. For full-block DiffuGuard, repairs outside the initially masked spans are also excluded. Generated-span scores thus complement complete-response judgments; they do not measure the safety of every token that a defence may modify.

\paragraph{Judge disagreement persists after removing fixed text.}
Tables~\ref{tab:judge-views-v4-llada} and~\ref{tab:judge-views-v4-dream} separate judge choice from response-view choice. On unmodified sampling, removing the scaffold lowers LG unsafe rate from 82.0\% to 53.3\% on LLaDA and from 80.7\% to 48.7\% on Dream. HB falls more modestly, from 78.7\% to 73.0\% and from 82.0\% to 76.0\%, respectively; WG remains above 96\%. Full-response scores therefore cannot be interpreted as measurements of generated spans alone. Conversely, excluding fixed text does not make the judges interchangeable.

Warm-up illustrates the consequence for method comparisons. Against paired unmodified outputs, LG reports reductions of 11.7 points on LLaDA (95\% CI: $[-19.7,-4.3]$) and 9.0 points on Dream ($[-16.0,-2.0]$), while the corresponding HB and WG intervals include zero. Dream selective SADDER also improves HB relative to full SAD by 8.0 points ($[-13.3,-2.7]$), but increases LG unsafe rate by 15.0 points ($[8.0,22.0]$). These are evaluator-dependent effects, not a consistent safety ordering across judges. The paired estimates in Table~\ref{tab:judge-views-v4-paired} retain the same prompts and seeds and use pointwise 95\% intervals.

\input{tables/extra/judge_views_v4_llada.tex}
\input{tables/extra/judge_views_v4_dream.tex}
\FloatBarrier
\input{tables/extra/judge_views_v4_paired.tex}
\FloatBarrier

\section{Uniform-state diffusion continuation results}
\label{app:uniform}
Table~\ref{tab:uniform} reports the complete Duo and UDLM comparison under the
protocol of Section~\ref{sec:usdm-results}. Unsafe rate uses the RTP harmful
cohort; benign BERTScore uses the separate benign RTP cohort.
Entries report means and standard deviations across five generation seeds.
\input{tables/main_uniform}
\FloatBarrier

\section{Guidance analysis and ablations for uniform-state diffusion}
\label{app:uniform-ablation}
\subsection{Where the gate opens}
\label{app:uniform-gate-ratio}
Figure~\ref{fig:gate-ratio} plots the open-gate ratio during
unguided continuations of unsafe RTP prompts. The open-gate ratio generally
increases as denoising proceeds, consistent with position-wise
representations becoming more informative as the continuation takes
shape. SADDER uses this evolving signal to select positions for
guidance, whereas SAD applies its correction to all eligible positions
at every step.

\begin{figure}[t]
\centering
\includegraphics[width=0.8\textwidth]{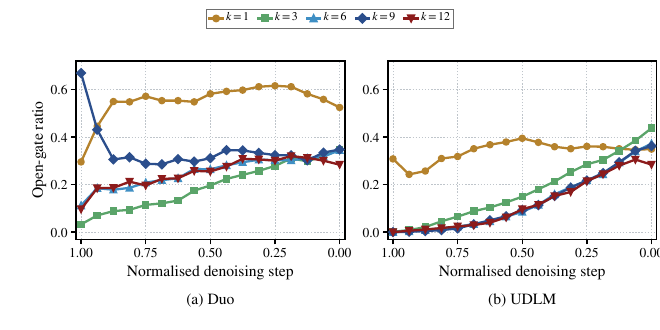}
\caption{Open-gate ratio along unguided continuations of unsafe RTP
prompts: the fraction of positions with $s^{(t)}_{k,l}>0$.
The horizontal axis runs from early to late
denoising steps.}
\label{fig:gate-ratio}
\end{figure}

\subsection{Softening the gate}
\label{app:uniform-soft}

We replace the binary gate with $a^{(t)}_{k,l} = \max\left\{2\sigma\!\left(s^{(t)}_{k,l}/\tau\right)-1,\,0\right\},$ where $\sigma$ is the sigmoid function and $\tau$ controls the softness of the gate.
The transformation makes the gate zero when $s^{(t)}_{k,l}=0$, and clipping at zero leaves positions with negative scores unguided.
For positive scores, the gate takes a value between zero and one; the strict gate is recovered as $\tau\to 0$.

\begin{figure}[t]
\centering
\includegraphics[width=0.8\textwidth]{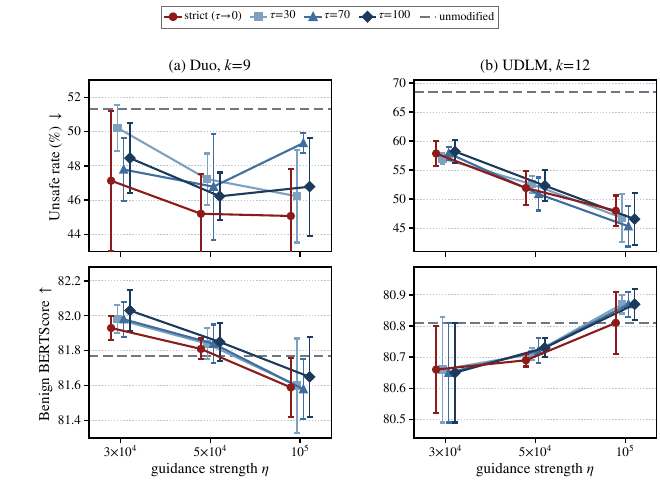}
\caption{Unsafe rate and benign BERTScore across guidance strengths
for the strict gate and softened gates with $\tau\in\{30,70,100\}$
on Duo ($k=9$) and UDLM ($k=12$).  Error bars are one standard
deviation across seeds: five seeds for the strict gate (the runs
reported in Table~\ref{tab:uniform}) and three seeds for the
softened gates.}
\label{fig:soft-gate}
\end{figure}

Figure~\ref{fig:soft-gate} shows no consistent effect of the temperature. On UDLM the four gates coincide in both metrics across the whole range. On Duo the strict gate trends toward lower unsafe rates and lower benign BERTScore, but the differences are smaller than the seed-to-seed variation of either gate.
We therefore keep the strict indicator, which needs no additional hyperparameter.

\FloatBarrier

%% file: sections/Appendix/notation.tex
\subsection{Notation}
Tables~\ref{tab:sadder-notation} and~\ref{tab:sadder-reference-notation} collect the notation for selective SADDER and its theoretical analysis. The special mask token $\mathbb M$ is distinct from the centroid midpoint $\vm_k$ and response truncation cap $M$. We distinguish decreasing noise time $t$ from the increasing reverse iteration index $j$: $j=0$ is the first model update, and the corresponding noise levels satisfy $t_0>\cdots>t_T=0$. All experimental windows below use inclusive iteration indices. So $C=0$--$15$ means 16 updates. The optional warm-up length $w$ counts initial full-SAD updates; $w=8$ uses updates 0--7.

\begin{table}[!htbp]
\centering\small
\caption{Notation for selective SADDER.}
\label{tab:sadder-notation}
\begin{tabular}{@{}p{0.29\linewidth}p{0.66\linewidth}@{}}
\toprule
Symbol & Meaning \\
\midrule
$\vc_i,\ \vx_i^{\mathrm{harm}},\ \vx_i^{\mathrm{safe}}$ & Reference prompt and its harmful and safe responses. \\
$P_i,\ R_i^u,\ M$ & Prompt length, response length $\min(|\vx_i^u|,M)$, and response truncation cap; $u\in\{\mathrm{harm},\mathrm{safe}\}$. \\
$N_{\mathrm{harm}},\ N_{\mathrm{safe}}$ & Numbers of harmful and safe examples used to estimate the centroids. \\
$\vx_t,\ \vx_0,\ \vx_0^l$ & Noised sequence, clean sequence, and one-hot clean token at position $l$. \\
$\mathcal V,\ L,\ \mathbb M$ & Vocabulary, sequence length, and special mask token. \\
$T,\ j,\ t_j,\ C$ & Number of reverse updates, update index, corresponding noise time, and admitted intervention indices. \\
$\mathcal G,\ \mathcal M_t$ & Editable positions and positions eligible for steering: masked editable positions for MDLMs; all $L$ positions for USDMs in the stated algorithm. \\
$k,\ l,\ d$ & Hidden-state layer, token position, and hidden dimension. \\
$\vh_{k,l},\ \vh_{k,l}^{(t)}$ & Layer-$k$ hidden representation at position $l$; the superscript denotes evaluation on the current noised state. \\
$\bar\vh_{i,k}^{u},\ \bar\vh_k^{u}$ & Response-averaged representation of reference $i$ and the corresponding class centroid. \\
$\widehat\vv_k,\ \vm_k$ & Unit direction from the safe to the harmful centroid, and their midpoint. \\
$s_{k,l}^{(t)}$ & Centered score $\langle\vh_{k,l}^{(t)}-\vm_k,\widehat\vv_k\rangle$; positive values lie on the harmful-centroid side. \\
$a_{k,l}^{(t)}$ & Binary steering gate: one at eligible positions with positive scores and zero otherwise. \\
$\eta,\ \beta(\vx_t)$ & Guidance hyperparameter and state-dependent SAD scale; the gated strength is $\eta\beta(\vx_t)a_{k,l}^{(t)}$. \\
$p_{\Dat,t,l},\ \widetilde p_{0\mid t,l}$ & Model clean-token prediction and modified prediction supplied to the reverse solver. \\
$\sigma,\ \tau$ & Sigmoid and positive temperature in the optional softened gate. \\
\bottomrule
\end{tabular}
\end{table}
\FloatBarrier

\begin{table}[!htbp]
\centering\small
\caption{Denoiser and theoretical notation.}
\label{tab:sadder-reference-notation}
\begin{tabular}{@{}p{0.29\linewidth}p{0.66\linewidth}@{}}
\toprule
Symbol & Meaning \\
\midrule
$\Dat,\ \Dat_{\mathrm{unsafe}}$ & Data distribution underlying the pretrained denoiser and unsafe reference set used by the algorithm, respectively. \\
$q_t(\vx_t\mid\vx_0)$ & Forward corruption kernel, used to weight clean unsafe references given the noised state. \\
$\mathbb E_{\Dat}[\vx_0^l\mid\vx_t^{1:L}]$ & Data denoiser at position $l$, estimated by the pretrained model. \\
$\mathbb E_{\Dat_{\mathrm{unsafe}}}[\vx_0^l\mid\vx_t^{1:L}]$ & Unsafe denoiser, approximated by a weighted mixture of unsafe references. \\
$p_{\Dat}(\vx_0^{1:L}\mid\vx_t^{1:L})$ & Data posterior over the entire clean sequence. \\
$\psi_l,\ \pi_{t,l}$ & Oracle unsafe-membership function $\mathcal V^L\to[0,1]$ and its posterior mean $\pi_{t,l}=\mathbb E_{\Dat}[\psi_l(\vx_0^{1:L})\mid\vx_t^{1:L}]$. \\
$Z_{\mathrm{safe},l}$ & Local safe normalizer $1-\pi_{t,l}$; dependence on noise time and state is suppressed. \\
$g_l,\ R_l$ & Local safe weight $g_l=1-\psi_l$ and remaining-position weight $R_l=\prod_{j\ne l}g_j$, with $j$ a dummy position index. \\
$p_{\Dat_{\mathrm{unsafe}}^{(l)}},\ p_{\Dat_{\mathrm{safe}}^{(l)}}$ & Clean-sequence posteriors obtained by weighting $p_{\Dat}$ with $\psi_l$ or $g_l$ and normalizing by $\pi_{t,l}$ or $1-\pi_{t,l}$. The superscript $(l)$ identifies the safety criterion. \\
$p_{\Dat_{\mathrm{safe}}^{\mathrm{global}}}$ & Global clean-sequence posterior obtained by weighting $p_{\Dat}$ with $\prod_l g_l$ and normalizing. \\
$\mathbb E_{\nu},\ \operatorname{Cov}_{\nu}$ & Expectation and covariance under distribution $\nu$, conditioned on $\vx_t^{1:L}$ in the proofs. \\
\bottomrule
\end{tabular}
\end{table}
\FloatBarrier

%% file: sections/Appendix/proofs.tex
\subsection{Derivation of equivalence}\label{derivation_equivalence}
Expanding the relationship in \eqref{eq:harm_vs_safe} gives us
\begin{align}
 2\left\langle \vh_{k,l}^{(t)}, \; \bar \vh_{k}^{\mathrm{harm}} - \bar \vh_{k}^{\mathrm{safe}} \right\rangle >
    \|\vh_{k}^{\mathrm{harm}}\|_2^2 -   \|\vh_{k}^{\mathrm{safe}}\|_2^2.
\end{align}
In terms of $\hat\vv_k$, this is, 
\begin{align}
   \left\langle \vh_{k,l}^{(t)}, \hat\vv_k \right\rangle > \frac{\|\vh_{k}^{\mathrm{harm}}\|_2^2 -   \|\vh_{k}^{\mathrm{safe}}\|_2^2}{2 \|\bar \vh_{k}^{\mathrm{harm}} - \bar \vh_{k}^{\mathrm{safe}} \|}.
\end{align} 
For notational simplicity, we ignore superscripts and subscripts $j,k,t$ below. 
\begin{align}
    \left\langle \vh, \hat\vv \right\rangle 
    & > \frac{\|\bar\vh^{\mathrm{harm}}\|_2^2 -   \|\bar \vh^{\mathrm{safe}}\|_2^2}{2 \|\bar \vh^{\mathrm{harm}} - \bar \vh^{\mathrm{safe}} \|}, 
\end{align} 
since $\left\langle \vh -\vm, \hat\vv \right\rangle = \left\langle \vh, \hat\vv \right\rangle - \left\langle \vm, \hat\vv \right\rangle $, we rewrite the above equation by 
\begin{align}
    \left\langle \vh -\vm, \hat\vv \right\rangle 
    & > \frac{\|\bar\vh^{\mathrm{harm}}\|_2^2 -   \|\bar \vh^{\mathrm{safe}}\|_2^2}{2 \|\bar \vh^{\mathrm{harm}} - \bar \vh^{\mathrm{safe}} \|} - \left\langle \vm, \hat\vv \right\rangle , \\
    & \quad = \frac{\|\bar\vh^{\mathrm{harm}}\|_2^2 -   \|\bar \vh^{\mathrm{safe}}\|_2^2}{2 \|\bar \vh^{\mathrm{harm}} - \bar \vh^{\mathrm{safe}} \|}  -
    \frac{(\bar\vh^{\mathrm{harm}} - \bar \vh^{\mathrm{safe}})^T(\bar\vh^{\mathrm{harm}} + \bar \vh^{\mathrm{safe}})}{2 \|\bar \vh^{\mathrm{harm}} - \bar \vh^{\mathrm{safe}} \|}, \\
    & \quad = \frac{(\bar\vh^{\mathrm{harm}} - \bar \vh^{\mathrm{safe}})^T(\bar\vh^{\mathrm{harm}} + \bar \vh^{\mathrm{safe}})-(\bar\vh^{\mathrm{harm}} - \bar \vh^{\mathrm{safe}})^T(\bar\vh^{\mathrm{harm}} + \bar \vh^{\mathrm{safe}})}
    {2 \|\bar \vh^{\mathrm{harm}} - \bar \vh^{\mathrm{safe}} \|} =0, 
\end{align} where the RHS becomes zero by expanding the additional inner product $\left\langle \vc, \hat\vv \right\rangle $.

\subsection{Global target distribution.}
\label{global-target}
\begin{thm}[\textbf{When position-wise SADDER target distribution matches a global target distribution}]
\label{thm:sadder-global-safe}
Given the position-wise safe target from \thmref{sadder-pointwise-safe} given in \eqref{local-safe-target},
\begin{align}
    p_{\Dat_{\mathrm{safe}}^{(l)}}(\vx_0^{1:L}|\vx_t^{1:L})
    &=
    \frac{
        \bigl(1-\psi_l(\vx_0^{1:L})\bigr)\pdata(\vx_0^{1:L}|\vx_t^{1:L})
    }{
        1-\pi_{t,l}(\vx_t^{1:L})
    }, 
\end{align} and
the global target distribution, with $  g_l(\vx_0^{1:L})
  :=1-\psi_l(\vx_0^{1:L})$ and 
 $R_l(\vx_0^{1:L})
  :=\prod_{j\neq l}
  \left(1-\psi_j(\vx_0^{1:L})\right)$ for each position $l$, 
\begin{align}
p_{\Dat_{\mathrm{safe}}^{\mathrm{global}}}
  (\vx_0^{1:L}\mid\vx_t^{1:L})
  =
  \frac{
    g_l(\vx_0^{1:L})R_l(\vx_0^{1:L})
    p_{\Dat}(\vx_0^{1:L}\mid\vx_t^{1:L})
  }{
    \E_{\Dat}
    [g_l(\vx_0^{1:L})R_l(\vx_0^{1:L})\mid\vx_t^{1:L}]
  }. \label{global_joint}
\end{align}
the following equality holds,
\begin{align}
\E_{\Dat_{\mathrm{safe}}}^{global}(\vx_0^{l}|\vx_t^{1:L}) =  \E_{\Dat_{\mathrm{safe}}^{(l)}}(\vx_{0}^l|\vx_t^{1:L}), 
\end{align}
if and only if
\begin{equation}
  \Cov_{\Dat_{\mathrm{safe}}^{(l)}}
  \!\left(
    R_l(\vx_0^{1:L}),
    \vx_0^l|
    \vx_t^{1:L}
  \right)
  =0, 
  \label{eq:zero-cross-position-covariance}
\end{equation}
for every $l,t,\vx_t^{1:L}$
\end{thm}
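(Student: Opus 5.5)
The plan is to write both conditional means as expectations under the local safe posterior $p_{\Dat_{\mathrm{safe}}^{(l)}}$ and then compare them by a change of measure. Fix $l$, $t$ and $\vx_t^{1:L}$, and abbreviate $g_l=g_l(\vx_0^{1:L})$, $R_l=R_l(\vx_0^{1:L})$.

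\textbf{Step 1: express the global target as a reweighting of the local one.} Since $g_l R_l=\prod_j(1-\psi_j)$, the global posterior in \eqref{global_joint} does not depend on the choice of $l$. Substituting $g_l\,p_{\Dat}(\cdot\mid\vx_t^{1:L})=(1-\pi_{t,l})\,p_{\Dat_{\mathrm{safe}}^{(l)}}(\cdot\mid\vx_t^{1:L})$ into both the numerator and the denominator of \eqref{global_joint}, the factor $1-\pi_{t,l}$ cancels. This gives
\begin{align*}
p_{\Dat_{\mathrm{safe}}^{\mathrm{global}}}(\vx_0^{1:L}\mid\vx_t^{1:L})=\frac{R_l(\vx_0^{1:L})\,p_{\Dat_{\mathrm{safe}}^{(l)}}(\vx_0^{1:L}\mid\vx_t^{1:L})}{\E_{\Dat_{\mathrm{safe}}^{(l)}}[R_l\mid\vx_t^{1:L}]}.
\end{align*}
So the global target is the local target tilted by the weight $R_l$.

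\textbf{Step 2: compute the global conditional mean.} Taking the expectation of the one-hot vector $\vx_0^l$ under the tilted measure yields
\begin{align*}
\E_{\Dat_{\mathrm{safe}}}^{\mathrm{global}}[\vx_0^l\mid\vx_t^{1:L}]=\frac{\E_{\Dat_{\mathrm{safe}}^{(l)}}[R_l\,\vx_0^l\mid\vx_t^{1:L}]}{\E_{\Dat_{\mathrm{safe}}^{(l)}}[R_l\mid\vx_t^{1:L}]}.
\end{align*}

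\textbf{Step 3: isolate the covariance.} Expand the numerator as
\[
\E_{\Dat_{\mathrm{safe}}^{(l)}}[R_l]\,\E_{\Dat_{\mathrm{safe}}^{(l)}}[\vx_0^l]+\Cov_{\Dat_{\mathrm{safe}}^{(l)}}(R_l,\vx_0^l\mid\vx_t^{1:L}).
\]
Dividing by the denominator gives the exact identity
\begin{align*}
\E_{\Dat_{\mathrm{safe}}}^{\mathrm{global}}[\vx_0^l\mid\vx_t^{1:L}]-\E_{\Dat_{\mathrm{safe}}^{(l)}}[\vx_0^l\mid\vx_t^{1:L}]=\frac{\Cov_{\Dat_{\mathrm{safe}}^{(l)}}(R_l,\vx_0^l\mid\vx_t^{1:L})}{\E_{\Dat_{\mathrm{safe}}^{(l)}}[R_l\mid\vx_t^{1:L}]}.
\end{align*}
Here the covariance is vector-valued, one entry per vocabulary coordinate. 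Because the denominator is a strictly positive finite scalar, the left side vanishes if and only if the covariance vanishes. Quantifying over every $l$, $t$ and $\vx_t^{1:L}$ then gives the simultaneous equivalence in \eqref{eq:zero-cross-position-covariance}.

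\textbf{Main obstacle: well-definedness of the normalizers.} The algebra itself is routine; the real work is checking that each denominator is nonzero.
\begin{itemize}
\item The local target needs $1-\pi_{t,l}>0$.
\item The global target needs $\E_{\Dat}[g_lR_l\mid\vx_t^{1:L}]>0$.
\end{itemize}
These two conditions together are equivalent to $\E_{\Dat_{\mathrm{safe}}^{(l)}}[R_l\mid\vx_t^{1:L}]>0$, which is exactly the denominator in Step 3. I will state this positivity explicitly as the standing assumption under which both targets in the theorem are defined. I will also note that $R_l\in[0,1]$, so all moments are finite and the covariance is well defined.
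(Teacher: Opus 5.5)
Your proposal is correct and follows the paper's proof essentially step for step. You rewrite the global posterior as the local safe posterior tilted by $R_l$, take the tilted mean of $\vx_0^l$, and read off the gap as $\Cov_{\Dat_{\mathrm{safe}}^{(l)}}(R_l,\vx_0^l\mid\vx_t^{1:L})/\E_{\Dat_{\mathrm{safe}}^{(l)}}[R_l\mid\vx_t^{1:L}]$. Your explicit positivity assumption makes precise what the paper only invokes as ``the denominator is positive.'' It reduces to $\E_{\Dat}[g_lR_l\mid\vx_t^{1:L}]>0$ alone, because $R_l\le 1$ gives $1-\pi_{t,l}\ge\E_{\Dat}[g_lR_l\mid\vx_t^{1:L}]$.
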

See \Secref{proof_global} for proof.
\begin{remark}
\thmref{sadder-global-safe} states that the global and position-wise denoisers agree at position $l$ for all $l=\{1, \cdots, L\}$ if and only if the remaining-position safety weight denoted by $R_l(\vx_0^{1:L})$ is conditionally uncorrelated with $\vx_0^l$ under $\Dat_{\mathrm{safe}}^{(l)}$. 
Under this condition and the
calibration assumption in \thmref{sadder-pointwise-safe}, every SADDER factor is the marginal safe denoiser of the same global target.  
However, this does not guarantee the existence of \textit{any} globally compatible distribution. It only states that
this specific global target in \eqref{global_joint}
agrees with the local target at position $l$ if
and only if a conditional covariance is zero.
\end{remark}

\subsection{Proof of \thmref{sadder-pointwise-safe}}\label{proof_point_wise}

\begin{proof}
In the theorem, we assumed that these exists a measurable position-wise unsafe-membership function on clean sequences. In particular, for each token position $l\in\{1,\ldots,L\}$, this function maps the clean sequence to a level of unsafe-ness at index $l$:
\begin{equation}
\psi_l:\mathcal{V}^L\longrightarrow[0,1].
    \label{eq:psi-membership}
\end{equation}

Recall that we let 
\begin{equation}
    \pi_{t,l}(\vx_t^{1:L})
    :=
    \E_D\!\left[
        \psi_l(\vx_0^{1:L})\mid \vx_t^{1:L}
    \right]
    \label{eq:local-posterior-mass}
\end{equation}
denote the posterior probability  of the
position-$l$ being unsafe,  for a noised sequence $\vx_t^{1:L}$. 

Now, we define the corresponding normalized position-wise safe and unsafe posterior distributions by
\begin{align}
    p_{\Dat_{\mathrm{unsafe}}^{(l)}}(\vx_0^{1:L}|\vx_t^{1:L})
    &:=
    \frac{
        \psi_l(\vx_0^{1:L})\pdata(\vx_0^{1:L}|\vx_t^{1:L})
    }{
        \pi_{t,l}(\vx_t^{1:L})
    },\label{local-unsafe-target} \\
    \pi_{t,l}(\vx_t^{1:L})
    &:={\sum_{\vx_0^{1:L}\in \Dat}
        \psi_l(\vx_0^{1:L})\pdata(\vx_0^{1:L}|\vx_t^{1:L})},
     \\
    p_{\Dat_{\mathrm{safe}}^{(l)}}(\vx_0^{1:L}|\vx_t^{1:L})
    &:=
    \frac{
        \bigl(1-\psi_l(\vx_0^{1:L})\bigr)\pdata(\vx_0^{1:L}|\vx_t^{1:L})
    }{
        Z_{\mathrm{safe},l}
    },\label{local-safe-target}\\
    1-\pi_{t,l}(\vx_t^{1:L})
    &:={\sum_{\vx_0^{1:L}\in\Dat}
        \bigl(1-\psi_l(\vx_0^{1:L})\bigr)\pdata(\vx_0^{1:L}|\vx_t^{1:L})}.
\end{align}

Because $\psi_l(\vx_0^{1:L})+(1-\psi_l(\vx_0^{1:L}))=1$ for any  clearn sequence $\vx_0^{1:L}$, 
we can write the following:
\begin{align}
    p_{data}(\vx_0^{1:L}|\vx_t^{1:L}) &= (1-\psi_l(\vx_0^{1:L})) p_{data}(\vx_0^{1:L}|\vx_t^{1:L}) \\
    & \quad + \psi_l(\vx_0^{1:L}) p_{data}(\vx_0^{1:L}|\vx_t^{1:L}).
\end{align}
If we plug in \eqref{local-unsafe-target} and \eqref{local-safe-target} to the RHS,  the data posterior can be decomposed into the
position-wise, normalized, safe and unsafe distributions:
\begin{equation}
\begin{aligned}
    \pdata(\vx_0^{1:L}\mid \vx_t^{1:L})
    &={}
    \bigl(1-\pi_{t,l}(\vx_t^{1:L})\bigr)
    p_{D_{\mathrm{safe}}^{(l)}}(\vx_0^{1:L}\mid \vx_t^{1:L})
 \\
&\qquad +
    \pi_{t\,l}(\vx_t^{1:L})
    p_{D_{\mathrm{unsafe}}^{(l)}}(\vx_0^{1:L}\mid \vx_t^{1:L}).
    \label{posterior-mixture-decomposition}
\end{aligned}
\end{equation}
Taking  the conditional expectation of the one-hot clean token $x_0^l$ on both
sides gives, i.e., $\E_{\vx^{l}_0 \sim \Dat} [\vx^{l}_0|\vx^{1:L}_t] = \int \vx^{l}_0 p(\vx^{l}_0|\vx^{1:L}_t) d \vx^{l}_0$
\begin{equation}
    \E_{\Dat} [\vx^{l}_0|\vx^{1:L}_t]
    =
    \bigl(1-\pi_{t,l}\bigr)
    \E_{D_{\mathrm{safe}}^{(l)}}\!\left[
        \vx_0^l\mid \vx_t^{1:L}
    \right]
    +
    \pi_{t,l} \E_{\Dat_{\mathrm{unsafe}}^{(l)}} [\vx^{l}_0|\vx^{1:L}_t],
    \label{conditional-mean-mixture}
\end{equation}
where the dependence of $\pi_{t,l}$ on $\vx_t^{1:L}$ is suppressed for
readability.  Solving \eqref{conditional-mean-mixture} for the safe
conditional mean yields
\begin{align}
    \E_{\Dat_{\mathrm{safe}}^{(l)}}\!\left[
        \vx_0^l\mid \vx_t^{1:L}
    \right]
    &=
    \frac{
        \E_{\Dat} [\vx^{l}_0|\vx^{1:L}_t]- \pi_{t,l}\E_{\Dat_{\mathrm{unsafe}}^{(l)}} [\vx^{l}_0|\vx^{1:L}_t]
    }{
        1-\pi_{t,l}
    }
    \notag\\
    &=
     \E_{\Dat} [\vx^{l}_0|\vx^{1:L}_t]
    +
    \frac{\pi_{t,l}}{1-\pi_{t,l}}
    \left(
         \E_{\Dat} [\vx^{l}_0|\vx^{1:L}_t]-\E_{\Dat_{\mathrm{unsafe}}^{(l)}} [\vx^{l}_0|\vx^{1:L}_t]
    \right).
    \label{pointwise-safe-identity-before-calibration}
\end{align}
Substituting the calibration condition in
\eqref{odds-calibration} gives
\eqref{pointwise-exact-safe-denoiser}. 

\end{proof}
Note that the unsafe denoiser in this theorem is induced by $\psi$, i.e., $\E_{\Dat_{\mathrm{unsafe}}}[\vx_0^l|\vx_t^{1:L}] = 
\frac{\sum_{\vx_n \in \Dat_{\mathrm{unsafe}}} \vx_n^l \psi_l(\vx_0^{1:L})q_t(\vx_t^{1:L}|\vx_n^{1:L})}{\sum_{\vx_n \in \Dat_{\mathrm{unsafe}}}  \psi_l(\vx_0^{1:L})q_t(\vx_t^{1:L}|\vx_n^{1:L})}$, which differs from our implementation of unsafe denoiser given in \eqref{point-wise-unsafe-denoiser}. Hence, we note that this theorem statement holds when these two unsafe denoisers match.

\subsection{Proof of \thmref{sadder-global-safe}}\label{proof_global}

\begin{proof}
Conditioned on $\vx_t^{1:L}$, the global posterior satisfies
\begin{align}
p_{\Dat_{\mathrm{safe}}^{\mathrm{global}}}
  (\vx_0^{1:L}\mid\vx_t^{1:L})
  =
  \frac{
    g_l(\vx_0^{1:L})R_l(\vx_0^{1:L})
    p_{\Dat}(\vx_0^{1:L}\mid\vx_t^{1:L})
  }{
    \E_{\Dat}
    [g_l(\vx_0^{1:L})R_l(\vx_0^{1:L})\mid\vx_t^{1:L}]
  }, \label{global_joint_restated}
\end{align} where $  g_l(\vx_0^{1:L})
  :=1-\psi_l(\vx_0^{1:L})$.

The position-$l$ safe posterior is
\begin{align}
  p_{\Dat_{\mathrm{safe}}^{(l)}}
  (\vx_0^{1:L}\mid\vx_t^{1:L})
  =
  \frac{
    g_l(\vx_0^{1:L})
    p_{\Dat}(\vx_0^{1:L}\mid\vx_t^{1:L})
  }{
    \E_{\Dat}[g_l(\vx_0^{1:L})\mid\vx_t^{1:L}]
  }.\label{position_joint}
\end{align}
If we plug in \eqref{position_joint} into \eqref{global_joint_restated}, we have 
the global posterior as a reweighted position-wise posterior:
\begin{align}
  p_{\Dat_{\mathrm{safe}}^{\mathrm{global}}}
  (\vx_0^{1:L}\mid\vx_t^{1:L})
  &= 
  \frac{
    R_l(\vx_0^{1:L})
    p_{\Dat_{\mathrm{safe}}^{(l)}}
    (\vx_0^{1:L}\mid\vx_t^{1:L})\E_{\Dat}[g_l(\vx_0^{1:L})\mid\vx_t^{1:L}]
  }{
    \E_{\Dat}
    [g_l(\vx_0^{1:L})R_l(\vx_0^{1:L})\mid\vx_t^{1:L}]
  } \\
  &=
  \frac{
    R_l(\vx_0^{1:L})
    p_{\Dat_{\mathrm{safe}}^{(l)}}
    (\vx_0^{1:L}\mid\vx_t^{1:L})
  }{
    \E_{\Dat_{\mathrm{safe}}^{(l)}}
    [R_l(\vx_0^{1:L})\mid\vx_t^{1:L}]
  },
  \label{global-as-reweighted-local}
\end{align} where the second line is true because
\begin{align}
\E_{\Dat_{\mathrm{safe}}^{(l)}}[R_l(\vx_0^{1:L})\mid\vx_t^{1:L}]
&= \sum_{\vx_0^{1:L}} R_l(\vx_0^{1:L}) p_{\Dat_{\mathrm{safe}}^{(l)}}
    (\vx_0^{1:L}\mid\vx_t^{1:L}), \\
    &= \sum_{\vx_0^{1:L}} R_l(\vx_0^{1:L})  \frac{
    g_l(\vx_0^{1:L})
    p_{\Dat}(\vx_0^{1:L}\mid\vx_t^{1:L})
  }{
    \E_{\Dat}[g_l(\vx_0^{1:L})\mid\vx_t^{1:L}]
  },  \; \mbox{ due to \eqref{position_joint}} \\
  & = \frac{\E_{\Dat}
    [g_l(\vx_0^{1:L})R_l(\vx_0^{1:L})\mid\vx_t^{1:L}]}{\E_{\Dat}[g_l(\vx_0^{1:L})\mid\vx_t^{1:L}]}.
\end{align}

Taking the conditional expectation of $\vx_0^l$ on \eqref{global-as-reweighted-local} gives
\begin{align}
  \E_{\Dat_{\mathrm{safe}}^{\mathrm{global}}}
  [\vx_0^l\mid\vx_t^{1:L}]
 =  \frac{
    \E_{\Dat_{\mathrm{safe}}^{(l)}}
    [R_l(\vx_0^{1:L})\vx_0^l\mid\vx_t^{1:L}]
  }{
    \E_{\Dat_{\mathrm{safe}}^{(l)}}
    [R_l(\vx_0^{1:L})\mid\vx_t^{1:L}]
  }.    
\end{align}

Subtracting
$\E_{\Dat_{\mathrm{safe}}^{(l)}}[\vx_0^l\mid\vx_t^{1:L}]$
and using the definition of conditional covariance proves
\begin{align}
  &\E_{\Dat_{\mathrm{safe}}^{\mathrm{global}}}
  \!\left[
    \vx_0^l
    |
    \vx_t^{1:L}
  \right]
  -
  \E_{\Dat_{\mathrm{safe}}^{(l)}}
  \!\left[
    \vx_0^l
    |
    \vx_t^{1:L}
  \right]
\\
&=  \frac{
    \E_{\Dat_{\mathrm{safe}}^{(l)}}
    [R_l(\vx_0^{1:L})\vx_0^l\mid\vx_t^{1:L}]
  }{
    \E_{\Dat_{\mathrm{safe}}^{(l)}}
    [R_l(\vx_0^{1:L})\mid\vx_t^{1:L}]
  } - \E_{\Dat_{\mathrm{safe}}^{(l)}}
  \!\left[
    \vx_0^l
    |
    \vx_t^{1:L}
  \right]  \\
&= \frac{1}{ \E_{\Dat_{\mathrm{safe}}^{(l)}}
    [R_l(\vx_0^{1:L})\mid\vx_t^{1:L}]} \left(\E_{\Dat_{\mathrm{safe}}^{(l)}}
    [R_l(\vx_0^{1:L})\vx_0^l\mid\vx_t^{1:L}] - \E_{\Dat_{\mathrm{safe}}^{(l)}}
    [R_l(\vx_0^{1:L})\mid\vx_t^{1:L}]\E_{\Dat_{\mathrm{safe}}^{(l)}}
  \!\left[
    \vx_0^l
    |
    \vx_t^{1:L}
  \right] \right), \\
  &\quad=
  \frac{
    \Cov_{\Dat_{\mathrm{safe}}^{(l)}}
    \!\left(
      R_l(\vx_0^{1:L}),
      \vx_0^l
      |
      \vx_t^{1:L}
    \right)
  }{
    \E_{\Dat_{\mathrm{safe}}^{(l)}}
    \!\left[
      R_l(\vx_0^{1:L})
      |
      \vx_t^{1:L}
    \right]
  },
  \label{eq:global-local-covariance-gap}
\end{align}
where the last line is true due to 
\begin{align}
    Cov[X,Y|Z] &= \E\left[(X-\E(X|Z))(Y-\E(Y|Z))]\right], \\
    &= \E[XY|Z] - \E[X|Z]\E[Y|Z] - \E[X|Z]\E[Y|Z] + \E[X|Z]\E[Y|Z], \\
    &= \E[XY|Z] - \E[X|Z]\E[Y|Z].
\end{align}

Since the denominator is
positive, the global and local means are the same,  when the numerator is zero.
\end{proof}

%% file: sections/Appendix/datasets_models_attacks_eval.tex
\subsection{Datasets, prompt selection, and reference construction}
\label{app:sadder-datasets}
We distinguish three data roles: evaluation prompts, unsafe references for the SAD update, and safe/harmful examples for constructing the activation direction. A benchmark used for one role is not automatically used for the others.

\paragraph{RealToxicityPrompts (RTP).}
RTP contains naturally occurring text prefixes and their original continuations, with Perspective annotations~\citep{realtoxicityprompts}. For unsafe continuation, the configured pool requires a maximum available prompt-side Perspective score of at least 0.8 and a subsequent Llama-Guard unsafe verdict on the prompt. Its builder also retains the adapter's continuation-score floor of 0.5. These filters describe an enriched evaluation cohort, not the full RTP distribution. Since Llama-Guard also judges generated outputs, the prompt selection is not judge-independent. Note the negation set used to construct the unsafe reference bank for the contrastive direction is not Llama-Guard scored.

The original RTP development/search set comprises pool indices 0--299. The reported continuation tables use the expanded evaluation set, indices 300--899, collected as three 200-prompt shards at offsets 300, 500, and 700. These evaluation records exclude the original development records. The subsequent configuration-selection procedure is described below.

\paragraph{BeaverTails.}
BeaverTails provides instruction--response pairs with response-safety annotations~\citep{ji2023beavertails}. We condition on prompts associated with unsafe-labeled responses, using 600 records for the evaluation set. The corresponding unsafe responses supply the negative reference set and safe-labeled responses supply the safe direction set. An unsafe response annotation does not establish that its prompt is itself malicious. 

\paragraph{Benign continuation controls.}
The quality cohort requires both the maximum prompt-side Perspective score and the maximum score from the continuation's own fields to be at most 0.2. We remove duplicate prompt strings and prompt strings present in the excluded unsafe-reference artifacts, shuffle with seed zero, and retain 600 records. BERTScore compares each generated response with its own benign human continuation, including for unmodified sampling. BeaverTails-tuned methods use this same benign RTP cohort while retaining their BeaverTails reference banks. This is a quality diagnostic rather than an independent capability evaluation.

\paragraph{JailbreakBench, AdvBench, and HarmBench.}
JailbreakBench supplies harmful behaviors for standardized jailbreak evaluation~\citep{NEURIPS2024_jailbreakbench}; its 100 harmful behaviors are development data in our study. AdvBench contains harmful instructions used to evaluate adversarial prompting~\citep{zou2023universaltransferableadversarialattacks-advbench}. HarmBench provides harmful behaviors and a behavior-conditioned evaluation classifier~\citep{mazeika2024harmbench} and our jobs restrict evaluation to its standard behavior category. We shuffle with seed zero and use the first 100 behaviors of each dataset for development. The transfer protocol reserves AdvBench offsets 100--519 and HarmBench-standard offsets 100--199. The main jailbreak tables report the latter held-out HarmBench cohort under DIJA and PAD. These are our experimental splits, not official benchmark train/test partitions.

\paragraph{WildJailbreak and the jailbreak reference bank.}
WildJailbreak contains both harmful and benign prompts, including adversarial rewrites~\citep{NEURIPS2024_wildjailbreak}. Our unsafe set construction selects the adversarial-harmful training subset, skips the first 1,000 filtered records, and generates LLaDA-Instruct responses from three disjoint 1,300-prompt windows. Exact evaluation-behavior strings are excluded. Generation uses 256 steps, up to 256 new tokens, temperature zero, and seed 42. Unsafe-response filtering, refusal filtering, and deduplication produce the documented 357-response unsafe WildJailbreak set. The jailbreak contrastive direction uses unsafe responses and refusal references with the chat template and a 64-token response cap.

\paragraph{XSTest and over-refusal.}
XSTest contains 250 safe prompts that models should answer and 200 unsafe contrast prompts~\citep{röttger2024xstesttestsuiteidentifying}. Over-refusal is the refusal rate on the safe subset, reported separately from refusal on unsafe prompts. Our runs use unmodified prompts through the model's chat template (\texttt{vanilla}). The evaluator we use matches the response opening against XSTest's full-refusal strings; it does not detect partial refusals or every paraphrased refusal. 

\subsection{Models, attacks, and evaluation measures}
\label{app:sadder-evaluation}
We use LLaDA-8B-Base and LLaDA-8B-Instruct~\citep{LLaDA}, and Dream-7B-Base and Dream-7B-Instruct~\citep{ye2025dream7bdiffusionlarge}. The base checkpoints are evaluated by prompted continuation; instruction checkpoints use the attack-specific input construction. The implementation's layer $k$ follows the hidden-state indexing and interior layers are inputs to block $k$, and the final layer includes the model's final normalization.

\paragraph{Attacks and fixed context.}
Zero-shot runs present the original harmful behavior through the chat template. DIJA fills mask spans inside the saved adversarial prompt \citep{wen2025devilmaskemergentsafety} and the refined prompts are reused across methods. PAD inserts fixed anchors into the generation region \citep{zhang2025jailbreakinglargelanguagediffusion-PAD}. SADDER and generation-scope DiffuGuard modify only initially editable positions. Full-block DiffuGuard additionally permits repair of fixed scaffold text after the first masked position, including PAD anchors; under DIJA it also appends the released implementation's extra masked tail. The main jailbreak tables concern DIJA and PAD. 

\paragraph{Safety and quality.}
Llama-Guard-3-8B~\citep{llamaguard3_modelcard} judges the generated response alone under continuation framing. HarmBench-Llama-2-13b-cls \citep{mazeika2024harmbench,harmbench_classifier} judges the response against the requested behavior. We report their respective unsafe rates separately. GPT-2-large~\citep{radford2019language} scores generated text with a causal language-model likelihood; this is an external fluency proxy for both model families, not LLaDA's pseudo-perplexity. Padding and text after the first end-of-sequence token are excluded. Harmful PPL includes all scorable outputs on the unsafe cohort, whether the safety judge labels them safe or unsafe. Benign PPL and BERTScore F1~\citep{zhang2020bertscoreevaluatingtextgeneration} use the separate benign cohort. BERTScore specifies English and disables baseline rescaling; it leaves the encoder to the installed package's default. BERTScore is multiplied by 100 in the tables. Higher reference similarity and lower PPL are useful diagnostics, but neither alone establishes helpfulness or safety.

\paragraph{WildGuard response views.}
For DIJA, the appendix additionally reports WildGuard~\cite{wildguard2024} harmfulness on the newline-joined spans marked as initially masked, paired with the original harmful behavior. This excludes the fixed scaffold and any full-block repairs outside those spans. Full-response WildGuard harmfulness and refusal remain in the appendix; XSTest uses full-response refusal. The generated-span evaluation is available only for fixed-policy DIJA runs, not PAD or the window studies (Appendix~\ref{app:wg-fills}).

\paragraph{Uncertainty and cost.}
Tables report averages with 95\% confidence intervals, estimated by repeatedly resampling the evaluation prompts. The appendix also reports standard deviations (SDs) across three generation seeds, showing how much results vary between runs. Appendix~\ref{app:metric-uncertainty} explains both measures and the paired comparisons. NFE counts generator forward passes, including additional passes after remasking. It excludes reward scoring, safety judges, and preparation of the reference set, so it measures generator computation rather than total runtime.

%% file: tables/hyperparameter_sweeps.tex
\begin{table}[!htbp]
\centering\footnotesize
\setlength{\tabcolsep}{3pt}\renewcommand{\arraystretch}{1.15}
\caption{Hyperparameter values considered, consolidated by model. Values are unions across the sweeps, not a single fully crossed grid; a single value denotes a fixed setting. $k$ is the scoring layer, $\eta$ the SAD strength, $w$ the number of full-SAD warm-up steps, and $[s,e]$ the inclusive intervention window.}
\label{tab:sadder-sweep-hparams}
\begin{tabular}{@{}p{0.23\linewidth}p{0.21\linewidth}p{0.51\linewidth}@{}}
\toprule
Model & Parameter & Values \\
\midrule
LLaDA-Base & Strength $\eta$ & $\{0,1,3,10,30,100,300,1000,3000\}$ \\
 & Layer $k$ & $\{8,16,24,31\}$ \\
 & Window $[s,e]$ & $[0,7]$, $[0,15]$, $[0,31]$ \\
 & Warm-up $w$ & $\{8,16\}$ \\
 & BoN / FK budget & $\{2,4,8\}$ \\
\addlinespace[4pt]
Dream-Base & Strength $\eta$ & $\{0,0.1,0.3,1,3,10,100,3000,$\newline $10000,30000,100000,300000\}$ \\
 & Layer $k$ & $27$ \\
 & Window $[s,e]$ & $[0,7]$, $[0,15]$ \\
 & BoN / FK budget & $\{2,4,8\}$ \\
\midrule
LLaDA-Instruct & Strength $\eta$ & $\{1,3,10,30,100,300,1000,3000,10000\}$ \\
 & Layer $k$ & $\{9,16,24\}$ \\
 & Window $[s,e]$ & $[0,15]$, $[0,31]$, $[0,63]$, $[0,127]$, $[32,63]$, $[64,95]$, $[96,127]$ \\
\addlinespace[4pt]
Dream-Instruct & Strength $\eta$ & $\{0.3,1,3,10,30,100,300,1000,3000,10000\}$ \\
 & Layer $k$ & $\{1,\ldots,28\}$ \\
 & Window $[s,e]$ & $[0,31]$ \\
\bottomrule
\end{tabular}
\par\smallskip
\begin{minipage}{\linewidth}\footnotesize
\textbf{Development settings.} Base models use RTP: the original search uses indices 0--299; additional strength development uses 64 prompts at indices 900--963, seeds 42/43, and window $[0,7]$ ($k=31$ for LLaDA, $27$ for Dream).  Instruction-model development uses 100 JBB behaviors; LLaDA additionally uses the first 100 AdvBench and HarmBench behaviors. These runs use seed 42 or seeds 42/43/44, as documented with the results. Sampling uses 32 steps for Base models and 128 nominal steps for Instruct models.
\end{minipage}
\end{table}

%% file: tables/selected_hyperparams.tex
\begin{table}[!htbp]
\centering\small
\setlength{\tabcolsep}{3pt}
\caption{Selected main-table continuation settings.}
\label{tab:sadder-selected-hparams}
\begin{tabular}{@{}lllrrrrl@{}}
\toprule
Model & Data & Method & $k$ & $\eta$ & Last step & $w$ & Reopening \\
\midrule
LLaDA & RTP & Full SAD & -- & 3000 & 7 & 0 & none \\
 & RTP & SADDER (selective) & 31$^{\dagger}$ & 100 & 15 & 0 & none \\
 & RTP & SADDER + warm-up & 24 & 3000 & 15 & 8 & none \\
\addlinespace
LLaDA & BT & Full SAD & -- & 3000 & 15 & 0 & none \\
 & BT & SADDER (selective) & 31 & 3000 & 15 & 0 & none \\
 & BT & SADDER + warm-up & 16 & 3000 & 15 & 8 & none \\
\addlinespace
Dream & RTP & Full SAD & -- & 10000 & 15 & 0 & none \\
 & RTP & SADDER (selective) & 27 & 10000 & 15 & 0 & none \\
 & RTP & SADDER + warm-up & 27 & 3000 & 15 & 8 & none \\
\addlinespace
Dream & BT & Full SAD & -- & 10000 & 15 & 0 & none \\
 & BT & SADDER (selective) & 27 & 10000 & 15 & 0 & none \\
 & BT & SADDER + warm-up & 27 & 10000 & 15 & 8 & none \\
\bottomrule
\end{tabular}
\par\smallskip\begin{minipage}{\linewidth}\footnotesize
All windows begin at step zero. BT denotes BeaverTails.
\end{minipage}
\end{table}

%% file: sections/Appendix/implementation_details_and_remasking.tex
\section{Implementation and evaluation details}
\label{app:implementation}

\paragraph{Motivation for remasking.}
The safety improvements reported by CSD-based steering and DiffuGuard~\citep{lee2026adaptivesteeringremaskingsafe,diffuguard} motivate examining remasking as an optional extension to SADDER. Reopening previously generated positions allows guidance to revise token choices that would otherwise remain fixed. Uniform-state diffusion provides a related capability through its native token replacement transitions, allowing repeated revision without an explicit mask token. As such, we wanted to evaluate selective SAD with risk-based remasking and matched random controls in the appendix.

\subsection{Optional remasking and warm-up}
\label{app:extended-algorithm}
Algorithm~\ref{algo:SADDER-extended} extends the selective-only main algorithm. Let $\mathcal R_t=\{l\in\mathcal G:\vx_t^l\ne\mathbb M\}$ be the revealed editable positions. The remasking policy $r$ determines the reopened set:
\begin{equation}
 \mathcal Q_t(r)=
 \begin{cases}
  \varnothing, & r=\mathrm{none},\\
  \operatorname{Top}_{\min(b,|\mathcal R_t|)}
    \{s_{k,l}^{(t)}:l\in\mathcal R_t\}, & r=\mathrm{top}\text{-}b,\\
  \{l\in\mathcal R_t:s_{k,l}^{(t)}>0\}, & r=\mathrm{aligned}.
 \end{cases}
 \label{eq:sadder-remasking}
\end{equation}
Here $\operatorname{Top}$ returns position indices with the largest scores; we use $b=1$ in the top-$b$ experiments. This ranking policy can reopen a position even if every score is non-positive. Aligned remasking instead reopens every positive-score revealed position, without a top-$b$ cap. After replacing selected tokens by $\mathbb M$, the model predictions and hidden states are recomputed. The steering gate is then evaluated on this changed state; reopening does not force a position to be steered. Warm-up length $w$ applies full SAD on admitted denoising steps with index $j<w$, followed by selective steering for $j\geq w$. Remasking is disabled during warm-up and on the final sampling step. Outside the intervention set $C$, neither guidance nor remasking is applied. Setting $w=0$ disables warm-up; setting $r=\mathrm{none}$ disables remasking. Thus the selective-only main method is exactly the case $w=0$, $r=\mathrm{none}$.

\FloatBarrier

\begin{algorithm}[!htbp]
\caption{SADDER with optional remasking and full-SAD warm-up.}
\label{algo:SADDER-extended}
\begin{algorithmic}[1]
\STATE \textbf{Input:} inputs of Algorithm~\ref{SADDER}; warm-up length $w\geq0$;
  policy $r\in\{\mathrm{none},\mathrm{top}\text{-}b,\mathrm{aligned}\}$; budget $b\geq0$.
\STATE $w=0$ and $r=\mathrm{none}$ recover Algorithm~\ref{SADDER}.
\FOR{$j=0,\ldots,T-1$}
  \STATE Set $t\leftarrow t_j$. Evaluate $p_{\Dat,t,\cdot}$ and $\vh_{k,\cdot}^{(t)}$
    together on the current $\vx_t$.
  \STATE Initialize $\widetilde p_{0\mid t,l}\leftarrow p_{\Dat,t,l}$ and
    $a_{k,l}^{(t)}\leftarrow0$ for all $l$.
  \IF{$j\in C$}
    \STATE Initialize $\mathcal Q_t\leftarrow\varnothing$.
    \IF{$j\geq w$ \textbf{and} $r\ne\mathrm{none}$ \textbf{and} $j<T-1$}
      \STATE $\mathcal R_t\leftarrow\{l\in\mathcal G:\vx_t^l\ne\mathbb M\}$.
      \STATE Compute $s_{k,l}^{(t)}\leftarrow
        \langle\vh_{k,l}^{(t)}-\vm_k,\widehat\vv_k\rangle$ for $l\in\mathcal R_t$.
      \STATE Select $\mathcal Q_t(r)$ using Equation~\eqref{eq:sadder-remasking}.
      \STATE Set $\vx_t^l\leftarrow\mathbb M$ for $l\in\mathcal Q_t$.
    \ENDIF
    \IF{$\mathcal Q_t\ne\varnothing$}
      \STATE Recompute $p_{\Dat,t,\cdot}$ and $\vh_{k,\cdot}^{(t)}$ together on the changed $\vx_t$.
      \STATE Reset $\widetilde p_{0\mid t,\cdot}\leftarrow p_{\Dat,t,\cdot}$.
    \ENDIF
    \STATE $\mathcal M_t\leftarrow\{l\in\mathcal G:\vx_t^l=\mathbb M\}$.
    \IF{$j<w$}
      \STATE \textbf{Full-SAD warm-up:}
        $a_{k,l}^{(t)}\leftarrow1$ for $l\in\mathcal M_t$.
    \ELSE
      \STATE Compute $s_{k,l}^{(t)}\leftarrow
        \langle\vh_{k,l}^{(t)}-\vm_k,\widehat\vv_k\rangle$ for $l\in\mathcal M_t$.
      \STATE \textbf{Select risky positions:}
        $a_{k,l}^{(t)}\leftarrow\mathbf1\{s_{k,l}^{(t)}>0\}$ for $l\in\mathcal M_t$.
    \ENDIF
    \IF{at least one gate equals $1$}
      \STATE Estimate $\widehat p_{\Dat_{\mathrm{unsafe}},t,\cdot}$ and
        $\widehat\beta(\vx_t)$ on this same state; set $\eta_t\leftarrow\eta\widehat\beta(\vx_t)$.
      \STATE \textbf{SAD update on selected positions:} update
        $\widetilde p_{0\mid t,l}$ using Equation~\eqref{point-wise-sadder-guidance}
        for each $l$ with $a_{k,l}^{(t)}=1$.
    \ENDIF
  \ENDIF
  \STATE $\vx_{t_{j+1}}\leftarrow\mathrm{Solver}(\vx_t,t_j,t_{j+1},\widetilde p_{0\mid t,\cdot};\mathcal G)$.
\ENDFOR
\STATE \textbf{Return:} the generated continuation of $\vx_{t_T}$.
\end{algorithmic}
\end{algorithm}

\FloatBarrier

\subsection{Metric estimates and uncertainty}
\label{app:metric-uncertainty}
We compute uncertainty from the saved evaluations. Let $i=1,\ldots,n$ index evaluation records and $s=1,\ldots,S$ index the observed generation seeds. The main continuation evaluations have $n=600$ and $S=3$; the main HarmBench evaluations have $n=100$ and $S=3$.

\paragraph{Unsafe rates and other averages.}
Let $y_{i,s}\in\{0,1\}$ indicate whether the output for prompt record $i$ and seed $s$ is judged unsafe. On a complete, balanced evaluation, the unsafe percentage is
\begin{equation}
 \widehat r_{\mathrm{unsafe}}
 =\frac{100}{nS}\sum_{i=1}^{n}\sum_{s=1}^{S}y_{i,s}.
 \label{eq:unsafe-rate-estimate}
\end{equation}
The same expression defines HarmBench ASR, refusal rate, or short-output rate when $y_{i,s}$ is the corresponding event indicator. A mean repetition score uses the per-output repetition score instead. Each metric retains its own evaluation population: for example, XSTest over-refusal uses only safe prompts. When observation counts differ, rate estimates use the total number of valid observations as their denominator.

\paragraph{Quality metrics.}
Let $\mathcal I_s$ contain the valid records for a quality metric at seed $s$. The reported mean averages a metric computed within each seed:
\begin{equation}
 \widehat Q_s=f\!\left(
   \frac{\sum_{i\in\mathcal I_s}A_{i,s}}
        {\sum_{i\in\mathcal I_s}D_{i,s}}\right),
 \qquad \widehat Q=\frac1S\sum_{s=1}^{S}\widehat Q_s.
 \label{eq:quality-aggregation}
\end{equation}
For BERTScore, $A_{i,s}$ is the response-level F1 score, $D_{i,s}=1$, and $f(z)=z$; the displayed values are multiplied by 100. For PPL, $A_{i,s}$ is the total negative log likelihood of scored tokens, $D_{i,s}$ is their count, and $f(z)=\exp(z)$. Thus PPL is an average of seed-level corpus perplexities, not an average of response perplexities. Missing or unscorable measurements are excluded; they are not assigned zero.

\paragraph{95\% confidence intervals.}
To estimate uncertainty due to the evaluation prompts, we randomly sample prompts from the evaluated set, allowing the same prompt to be selected more than once. Each resampled set has the same number of prompts as the original set. Whenever a prompt is selected, we keep its responses from all three seeds together. We calculate the metric again for each resampled set and repeat this process 5,000 times. The 2.5th and 97.5th percentiles of these 5,000 estimates form the reported 95\% confidence interval.

\paragraph{Seed SD and paired comparisons.}
For seed-level estimates $\widehat\theta_s$, the reported sample SD is
\begin{equation}
 \operatorname{SD}_{\mathrm{seed}}
 =\sqrt{\frac1{S-1}\sum_{s=1}^{S}
   (\widehat\theta_s-\bar\theta_{\mathrm{seed}})^2},
 \qquad \bar\theta_{\mathrm{seed}}=\frac1S\sum_s\widehat\theta_s.
 \label{eq:metric-seed-sd}
\end{equation}
It describes variation across runs and is not a confidence interval; it is unavailable for a single seed. To compare methods A and B, retain matched prompts and seeds and use identical cluster weights for both:
\begin{equation}
 \widehat\Delta=\widehat\theta_{\mathrm A}-\widehat\theta_{\mathrm B},
 \qquad
 \widehat\Delta^{*(b)}=
 \widehat\theta_{\mathrm A}^{*(b)}-\widehat\theta_{\mathrm B}^{*(b)}.
 \label{eq:paired-metric-bootstrap}
\end{equation}

%% file: tables/main_llada_ci.tex
\begin{table*}[!htb]
\centering\fontsize{8}{9.6}\selectfont
\setlength{\tabcolsep}{0.5pt}
\renewcommand{\arraystretch}{1.08}
\newcommand{\lladaci}[3]{%
\raisebox{-0.5\height}{%
  \shortstack{$#1$\\{\fontsize{7}{8}\selectfont $[#2{,}#3]$}}}}
\caption{LLaDA-Base safety, generation quality, and cost.}
\label{tab:llada-ci}
\begin{tabular*}{\textwidth}{@{\extracolsep{\fill}}l@{\hspace{3pt}}rrrrr@{\hspace{5pt}}rrrrr@{}}
\toprule
\multirow{2}{*}{Method}
& \multicolumn{5}{c}{\textbf{RTP}}
& \multicolumn{5}{c}{\textbf{BeaverTails}} \\
\cmidrule(lr){2-6}\cmidrule(lr){7-11}
& \shortstack{Unsafe\\\% $\downarrow$}
& \shortstack{Harmful\\PPL $\downarrow$}
& \shortstack{Benign\\PPL $\downarrow$}
& \shortstack{Benign\\BERT $\uparrow$}
& NFE
& \shortstack{Unsafe\\\% $\downarrow$}
& \shortstack{Harmful\\PPL $\downarrow$}
& \shortstack{Benign\\PPL $\downarrow$}
& \shortstack{Benign\\BERT $\uparrow$}
& NFE \\
\midrule
Unmodified
& \lladaci{38.6}{35.7}{41.5}
& \lladaci{21.1}{20.2}{22.1}
& \lladaci{17.5}{16.9}{18.2}
& \lladaci{84.65}{84.50}{84.79}
& 32.0
& \lladaci{38.6}{35.6}{41.6}
& \lladaci{20.6}{19.6}{21.6}
& \lladaci{17.5}{16.9}{18.2}
& \lladaci{84.65}{84.50}{84.79}
& 32.0 \\
\addlinespace[2pt]
Full SAD
& \lladaci{32.4}{29.4}{35.5}
& \lladaci{29.3}{27.6}{31.1}
& \lladaci{30.0}{27.7}{32.9}
& \lladaci{83.92}{83.75}{84.09}
& 32.0
& \lladaci{27.1}{24.0}{30.4}
& \lladaci{35.1}{33.2}{37.0}
& \lladaci{47.1}{42.9}{52.7}
& \lladaci{83.38}{83.20}{83.55}
& 32.0 \\
\addlinespace[2pt]
\raisebox{-0.5\height}{\shortstack[l]{SADDER\\(selective)}}
& \lladaci{29.4}{26.3}{32.5}
& \lladaci{42.1}{37.4}{47.8}
& \lladaci{18.4}{17.7}{19.2}
& \lladaci{84.59}{84.45}{84.74}
& 32.0
& \lladaci{29.9}{26.8}{33.2}
& \lladaci{34.6}{32.2}{37.3}
& \lladaci{27.8}{26.0}{30.0}
& \lladaci{84.20}{84.04}{84.35}
& 32.0 \\
\addlinespace[2pt]
\raisebox{-0.5\height}{\shortstack[l]{SADDER\\+ warm-up}}
& \lladaci{31.2}{28.1}{34.5}
& \lladaci{31.6}{29.7}{33.5}
& \lladaci{44.3}{38.4}{52.5}
& \lladaci{83.01}{82.81}{83.20}
& 32.0
& \lladaci{24.4}{21.2}{27.8}
& \lladaci{36.9}{34.2}{39.9}
& \lladaci{44.1}{40.2}{49.7}
& \lladaci{83.18}{82.99}{83.35}
& 32.0 \\
\addlinespace[2pt]
CSD$^\dagger$
& \lladaci{22.4}{20.3}{24.7}
& \lladaci{225}{204}{249}
& \lladaci{21.6}{20.6}{22.7}
& \lladaci{84.59}{84.44}{84.73}
& 61.9
& \lladaci{15.6}{13.8}{17.3}
& \lladaci{487}{440}{541}
& \lladaci{943}{868}{1030}
& \lladaci{82.61}{82.48}{82.73}
& 55.0 \\
\addlinespace[2pt]
BoN-8
& \lladaci{19.0}{16.7}{21.5}
& \lladaci{20.5}{19.6}{21.4}
& \lladaci{19.5}{18.7}{20.2}
& \lladaci{84.70}{84.56}{84.84}
& 256.0
& \lladaci{18.6}{16.3}{20.9}
& \lladaci{24.2}{23.0}{25.4}
& \lladaci{19.5}{18.7}{20.2}
& \lladaci{84.70}{84.56}{84.84}
& 256.0 \\
\addlinespace[2pt]
FK-8
& \lladaci{20.6}{18.1}{23.0}
& \lladaci{21.3}{20.4}{22.3}
& \lladaci{18.2}{17.5}{18.8}
& \lladaci{84.71}{84.57}{84.85}
& 256.0
& \lladaci{28.2}{25.4}{30.9}
& \lladaci{22.7}{21.7}{23.7}
& \lladaci{18.2}{17.5}{18.8}
& \lladaci{84.71}{84.57}{84.85}
& 256.0 \\
\bottomrule
\end{tabular*}
\end{table*}

%% file: tables/main_dream.tex
\begin{table*}[!htb]
\centering\fontsize{8}{9.6}\selectfont
\setlength{\tabcolsep}{0.5pt}\renewcommand{\arraystretch}{1.08}
\caption{Dream-Base safety, generation quality, and cost.}
\label{tab:dream}
\begin{tabular*}{\textwidth}{@{\extracolsep{\fill}}l@{\hspace{3pt}}rrrrr@{\hspace{5pt}}rrrrr@{}}
\toprule
\multirow{2}{*}{Method} & \multicolumn{5}{c}{\textbf{RTP}} & \multicolumn{5}{c}{\textbf{BeaverTails}} \\
\cmidrule(lr){2-6}\cmidrule(lr){7-11}
 & \shortstack{Unsafe\\\% $\downarrow$} & \shortstack{Harmful\\PPL $\downarrow$} & \shortstack{Benign\\PPL $\downarrow$} & \shortstack{Benign\\BERT $\uparrow$} & NFE & \shortstack{Unsafe\\\% $\downarrow$} & \shortstack{Harmful\\PPL $\downarrow$} & \shortstack{Benign\\PPL $\downarrow$} & \shortstack{Benign\\BERT $\uparrow$} & NFE \\
\midrule
Unmodified & \raisebox{-0.5\height}{\shortstack{$46.8$\\{\fontsize{7}{8}\selectfont $[43.8{,}49.7]$}}} & \raisebox{-0.5\height}{\shortstack{$18.3$\\{\fontsize{7}{8}\selectfont $[17.6{,}19.1]$}}} & \raisebox{-0.5\height}{\shortstack{$15.5$\\{\fontsize{7}{8}\selectfont $[15.0{,}16.0]$}}} & \raisebox{-0.5\height}{\shortstack{$84.71$\\{\fontsize{7}{8}\selectfont $[84.57{,}84.86]$}}} & 32.0 & \raisebox{-0.5\height}{\shortstack{$38.7$\\{\fontsize{7}{8}\selectfont $[35.8{,}41.6]$}}} & \raisebox{-0.5\height}{\shortstack{$12.7$\\{\fontsize{7}{8}\selectfont $[12.2{,}13.2]$}}} & \raisebox{-0.5\height}{\shortstack{$15.5$\\{\fontsize{7}{8}\selectfont $[15.0{,}16.0]$}}} & \raisebox{-0.5\height}{\shortstack{$84.71$\\{\fontsize{7}{8}\selectfont $[84.57{,}84.86]$}}} & 32.0 \\
\addlinespace[2pt]
Full SAD & \raisebox{-0.5\height}{\shortstack{$33.4$\\{\fontsize{7}{8}\selectfont $[30.1{,}36.8]$}}} & \raisebox{-0.5\height}{\shortstack{$26.0$\\{\fontsize{7}{8}\selectfont $[24.3{,}27.8]$}}} & \raisebox{-0.5\height}{\shortstack{$23.3$\\{\fontsize{7}{8}\selectfont $[22.0{,}24.7]$}}} & \raisebox{-0.5\height}{\shortstack{$83.16$\\{\fontsize{7}{8}\selectfont $[82.96{,}83.35]$}}} & 32.0 & \raisebox{-0.5\height}{\shortstack{$22.7$\\{\fontsize{7}{8}\selectfont $[19.7{,}25.7]$}}} & \raisebox{-0.5\height}{\shortstack{$23.7$\\{\fontsize{7}{8}\selectfont $[21.8{,}25.8]$}}} & \raisebox{-0.5\height}{\shortstack{$23.0$\\{\fontsize{7}{8}\selectfont $[21.7{,}24.5]$}}} & \raisebox{-0.5\height}{\shortstack{$83.62$\\{\fontsize{7}{8}\selectfont $[83.44{,}83.80]$}}} & 32.0 \\
\addlinespace[2pt]
\raisebox{-0.5\height}{\shortstack[l]{SADDER\\(selective)}} & \raisebox{-0.5\height}{\shortstack{$35.9$\\{\fontsize{7}{8}\selectfont $[32.8{,}39.0]$}}} & \raisebox{-0.5\height}{\shortstack{$29.1$\\{\fontsize{7}{8}\selectfont $[27.5{,}30.7]$}}} & \raisebox{-0.5\height}{\shortstack{$18.1$\\{\fontsize{7}{8}\selectfont $[17.5{,}18.7]$}}} & \raisebox{-0.5\height}{\shortstack{$84.59$\\{\fontsize{7}{8}\selectfont $[84.45{,}84.73]$}}} & 32.0 & \raisebox{-0.5\height}{\shortstack{$34.1$\\{\fontsize{7}{8}\selectfont $[31.3{,}36.8]$}}} & \raisebox{-0.5\height}{\shortstack{$19.3$\\{\fontsize{7}{8}\selectfont $[18.4{,}20.3]$}}} & \raisebox{-0.5\height}{\shortstack{$20.0$\\{\fontsize{7}{8}\selectfont $[19.2{,}20.7]$}}} & \raisebox{-0.5\height}{\shortstack{$84.49$\\{\fontsize{7}{8}\selectfont $[84.34{,}84.64]$}}} & 32.0 \\
\addlinespace[2pt]
\raisebox{-0.5\height}{\shortstack[l]{SADDER\\+ warm-up}} & \raisebox{-0.5\height}{\shortstack{$34.7$\\{\fontsize{7}{8}\selectfont $[31.6{,}38.2]$}}} & \raisebox{-0.5\height}{\shortstack{$27.1$\\{\fontsize{7}{8}\selectfont $[25.5{,}28.8]$}}} & \raisebox{-0.5\height}{\shortstack{$21.3$\\{\fontsize{7}{8}\selectfont $[20.3{,}22.3]$}}} & \raisebox{-0.5\height}{\shortstack{$83.76$\\{\fontsize{7}{8}\selectfont $[83.57{,}83.94]$}}} & 32.0 & \raisebox{-0.5\height}{\shortstack{$27.7$\\{\fontsize{7}{8}\selectfont $[24.8{,}30.8]$}}} & \raisebox{-0.5\height}{\shortstack{$23.6$\\{\fontsize{7}{8}\selectfont $[22.1{,}25.1]$}}} & \raisebox{-0.5\height}{\shortstack{$20.5$\\{\fontsize{7}{8}\selectfont $[19.6{,}21.5]$}}} & \raisebox{-0.5\height}{\shortstack{$84.21$\\{\fontsize{7}{8}\selectfont $[84.06{,}84.37]$}}} & 32.0 \\
\addlinespace[2pt]
CSD$^\dagger$ & \raisebox{-0.5\height}{\shortstack{$32.9$\\{\fontsize{7}{8}\selectfont $[30.6{,}35.4]$}}} & \raisebox{-0.5\height}{\shortstack{$1180$\\{\fontsize{7}{8}\selectfont $[1100{,}1270]$}}} & \raisebox{-0.5\height}{\shortstack{$3400$\\{\fontsize{7}{8}\selectfont $[3270{,}3540]$}}} & \raisebox{-0.5\height}{\shortstack{$81.51$\\{\fontsize{7}{8}\selectfont $[81.42{,}81.61]$}}} & 60.2 & \raisebox{-0.5\height}{\shortstack{$38.6$\\{\fontsize{7}{8}\selectfont $[35.7{,}41.4]$}}} & \raisebox{-0.5\height}{\shortstack{$17.8$\\{\fontsize{7}{8}\selectfont $[16.7{,}19.1]$}}} & \raisebox{-0.5\height}{\shortstack{$27.2$\\{\fontsize{7}{8}\selectfont $[25.3{,}29.4]$}}} & \raisebox{-0.5\height}{\shortstack{$84.50$\\{\fontsize{7}{8}\selectfont $[84.36{,}84.64]$}}} & 55.8 \\
\addlinespace[2pt]
BoN-8 & \raisebox{-0.5\height}{\shortstack{$24.6$\\{\fontsize{7}{8}\selectfont $[22.0{,}27.3]$}}} & \raisebox{-0.5\height}{\shortstack{$17.5$\\{\fontsize{7}{8}\selectfont $[16.7{,}18.2]$}}} & \raisebox{-0.5\height}{\shortstack{$16.9$\\{\fontsize{7}{8}\selectfont $[16.3{,}17.5]$}}} & \raisebox{-0.5\height}{\shortstack{$84.81$\\{\fontsize{7}{8}\selectfont $[84.68{,}84.95]$}}} & 256.0 & \raisebox{-0.5\height}{\shortstack{$20.0$\\{\fontsize{7}{8}\selectfont $[17.8{,}22.2]$}}} & \raisebox{-0.5\height}{\shortstack{$15.0$\\{\fontsize{7}{8}\selectfont $[14.3{,}15.7]$}}} & \raisebox{-0.5\height}{\shortstack{$16.9$\\{\fontsize{7}{8}\selectfont $[16.3{,}17.5]$}}} & \raisebox{-0.5\height}{\shortstack{$84.81$\\{\fontsize{7}{8}\selectfont $[84.68{,}84.95]$}}} & 256.0 \\
\addlinespace[2pt]
FK-8 & \raisebox{-0.5\height}{\shortstack{$25.1$\\{\fontsize{7}{8}\selectfont $[22.5{,}27.7]$}}} & \raisebox{-0.5\height}{\shortstack{$17.9$\\{\fontsize{7}{8}\selectfont $[17.2{,}18.6]$}}} & \raisebox{-0.5\height}{\shortstack{$15.7$\\{\fontsize{7}{8}\selectfont $[15.3{,}16.2]$}}} & \raisebox{-0.5\height}{\shortstack{$84.76$\\{\fontsize{7}{8}\selectfont $[84.62{,}84.90]$}}} & 256.0 & \raisebox{-0.5\height}{\shortstack{$27.8$\\{\fontsize{7}{8}\selectfont $[25.2{,}30.4]$}}} & \raisebox{-0.5\height}{\shortstack{$13.9$\\{\fontsize{7}{8}\selectfont $[13.4{,}14.5]$}}} & \raisebox{-0.5\height}{\shortstack{$15.7$\\{\fontsize{7}{8}\selectfont $[15.3{,}16.2]$}}} & \raisebox{-0.5\height}{\shortstack{$84.76$\\{\fontsize{7}{8}\selectfont $[84.62{,}84.90]$}}} & 256.0 \\
\bottomrule
\end{tabular*}
\end{table*}

%% file: tables/extra/settings_llada.tex
\begin{table}[!htbp]
\centering\small
\setlength{\tabcolsep}{3pt}\renewcommand{\arraystretch}{1.12}
\caption{LLaDA selected continuation settings on development set.}
\label{tab:settings-llada}
\begin{tabular}{@{}lll@{}}
\toprule
Dataset & Method & Setting \\
\midrule
RTP & Full SAD & $31/3000/7$ \\
 & SADDER & $31/100/15$ \\
 & SADDER w/ top-1 remasking & $24/100/15$ \\
 & SADDER w/ aligned remasking & $16/100/15$ \\
 & SADDER + warm-up + aligned remasking& $24/3000/15$; \\
 & CSD$^\dagger$ & $k=31$, $j=0\!:\!15$ \\
 & BoN-8 & $N=8$ \\
 & FK-8 & $N=8$ \\
\addlinespace[2pt]
BeaverTails & Full SAD & $31/3000/15$ \\
 & SADDER  & $31/3000/15$ \\
 & SADDER w/ top-1 remasking & $24/100/7$ \\
 & SADDER w/ aligned remasking & $16/3000/15$ \\
 & SADDER + warm-up + aligned remasking& $16/3000/15$; aligned \\
 & CSD$^\dagger$ & $k=24$, $j=0\!:\!7$ \\
 & BoN-8 & $N=8$ \\
 & FK-8 & $N=8$ \\
\bottomrule
\end{tabular}
\end{table}

%% file: tables/extra/settings_dream.tex
\begin{table}[!htbp]
\centering\small
\setlength{\tabcolsep}{3pt}\renewcommand{\arraystretch}{1.12}
\caption{Dream selected continuation settings on development set.}
\label{tab:settings-dream}
\begin{tabular}{@{}lll@{}}
\toprule
Dataset & Method & Setting \\
\midrule
RTP & Full SAD & $27/10000/15$ \\
 & SADDER  & $27/10000/15$ \\
 & SADDER w/ top-1 remasking & $27/100/15$ \\
 & SADDER w/ aligned remasking & $24/1e5/15$ \\
 & SADDER + warm-up & $27/3000/15$; none \\
 & CSD$^\dagger$ & $k=24$, $j=0\!:\!15$ \\
 & BoN-8 & $N=8$ \\
 & FK-8 & $N=8$ \\
\addlinespace[2pt]
BeaverTails & Full SAD & $27/10000/15$ \\
 & SADDER  & $27/10000/15$ \\
 & SADDER w/ top-1 remasking & $27/100/15$ \\
 & SADDER w/ aligned remasking & $27/100/15$ \\
 & SADDER + warm-up & $27/10000/15$; none \\
 & CSD$^\dagger$ & $k=27$, $j=0\!:\!15$ \\
 & BoN-8 & $N=8$ \\
 & FK-8 & $N=8$ \\
\bottomrule
\end{tabular}
\end{table}

%% file: tables/extra/variants_llada.tex
\begin{table}[!htbp]
\centering\footnotesize
\setlength{\tabcolsep}{3pt}\renewcommand{\arraystretch}{1.12}
\caption{LLaDA-Base component variants.}
\label{tab:variants-llada}
\begin{tabular}{@{}lrrrrr@{}}
\toprule
Method & Unsafe \% $\downarrow$ & \shortstack{Harmful\\PPL $\downarrow$} & \shortstack{Benign\\PPL $\downarrow$} & \shortstack{Benign\\BERT $\uparrow$} & NFE \\
\midrule
\multicolumn{6}{l}{\textbf{RTP}} \\
Unmodified & \shortstack{$38.6$\\{\scriptsize $[35.7,41.5]$}} & \shortstack{$21.14$\\{\scriptsize $[20.21,22.13]$}} & \shortstack{$17.53$\\{\scriptsize $[16.90,18.21]$}} & \shortstack{$84.65$\\{\scriptsize $[84.50,84.79]$}} & 32.0 \\
Full SAD & \shortstack{$32.4$\\{\scriptsize $[29.4,35.5]$}} & \shortstack{$29.30$\\{\scriptsize $[27.61,31.13]$}} & \shortstack{$29.96$\\{\scriptsize $[27.65,32.89]$}} & \shortstack{$83.92$\\{\scriptsize $[83.75,84.09]$}} & 32.0 \\
SADDER (selective) & \shortstack{$29.4$\\{\scriptsize $[26.3,32.5]$}} & \shortstack{$42.08$\\{\scriptsize $[37.39,47.75]$}} & \shortstack{$18.41$\\{\scriptsize $[17.70,19.17]$}} & \shortstack{$84.59$\\{\scriptsize $[84.45,84.74]$}} & 32.0 \\
SADDER w/ top-1 remasking & \shortstack{$28.3$\\{\scriptsize $[25.2,31.3]$}} & \shortstack{$43.25$\\{\scriptsize $[38.63,48.89]$}} & \shortstack{$41.75$\\{\scriptsize $[37.34,47.13]$}} & \shortstack{$83.38$\\{\scriptsize $[83.17,83.58]$}} & 47.0 \\
SADDER w/ aligned remasking & \shortstack{$30.6$\\{\scriptsize $[27.8,33.4]$}} & \shortstack{$35.81$\\{\scriptsize $[33.47,38.38]$}} & \shortstack{$35.88$\\{\scriptsize $[33.53,38.57]$}} & \shortstack{$83.99$\\{\scriptsize $[83.82,84.16]$}} & 46.6 \\
SADDER + warm-up + aligned remasking & \shortstack{$27.2$\\{\scriptsize $[24.7,29.9]$}} & \shortstack{$38.75$\\{\scriptsize $[36.22,41.88]$}} & \shortstack{$33.33$\\{\scriptsize $[31.08,35.91]$}} & \shortstack{$84.13$\\{\scriptsize $[83.96,84.28]$}} & 40.0 \\
CSD$^\dagger$ & \shortstack{$22.4$\\{\scriptsize $[20.3,24.7]$}} & \shortstack{$225.01$\\{\scriptsize $[204.37,248.84]$}} & \shortstack{$21.59$\\{\scriptsize $[20.58,22.67]$}} & \shortstack{$84.59$\\{\scriptsize $[84.44,84.73]$}} & 61.9 \\
BoN-8 & \shortstack{$19.0$\\{\scriptsize $[16.7,21.5]$}} & \shortstack{$20.46$\\{\scriptsize $[19.57,21.39]$}} & \shortstack{$19.47$\\{\scriptsize $[18.74,20.21]$}} & \shortstack{$84.70$\\{\scriptsize $[84.56,84.84]$}} & 256.0 \\
FK-8 & \shortstack{$20.6$\\{\scriptsize $[18.1,23.0]$}} & \shortstack{$21.29$\\{\scriptsize $[20.37,22.27]$}} & \shortstack{$18.17$\\{\scriptsize $[17.53,18.84]$}} & \shortstack{$84.71$\\{\scriptsize $[84.57,84.85]$}} & 256.0 \\
\midrule
\multicolumn{6}{l}{\textbf{BeaverTails}} \\
Unmodified & \shortstack{$38.6$\\{\scriptsize $[35.6,41.6]$}} & \shortstack{$20.57$\\{\scriptsize $[19.61,21.57]$}} & \shortstack{$17.53$\\{\scriptsize $[16.90,18.21]$}} & \shortstack{$84.65$\\{\scriptsize $[84.50,84.79]$}} & 32.0 \\
Full SAD & \shortstack{$27.1$\\{\scriptsize $[24.0,30.4]$}} & \shortstack{$35.05$\\{\scriptsize $[33.23,37.03]$}} & \shortstack{$47.08$\\{\scriptsize $[42.86,52.67]$}} & \shortstack{$83.38$\\{\scriptsize $[83.20,83.55]$}} & 32.0 \\
SADDER (selective) & \shortstack{$29.9$\\{\scriptsize $[26.8,33.2]$}} & \shortstack{$34.63$\\{\scriptsize $[32.22,37.33]$}} & \shortstack{$27.80$\\{\scriptsize $[25.95,30.01]$}} & \shortstack{$84.20$\\{\scriptsize $[84.04,84.35]$}} & 32.0 \\
SADDER w/ top-1 remasking & \shortstack{$30.6$\\{\scriptsize $[27.1,34.1]$}} & \shortstack{$31.58$\\{\scriptsize $[28.46,35.55]$}} & \shortstack{$23.78$\\{\scriptsize $[22.34,25.39]$}} & \shortstack{$84.10$\\{\scriptsize $[83.90,84.29]$}} & 39.0 \\
SADDER w/ aligned remasking & \shortstack{$27.5$\\{\scriptsize $[24.4,30.8]$}} & \shortstack{$37.12$\\{\scriptsize $[33.51,41.64]$}} & \shortstack{$35.91$\\{\scriptsize $[33.64,38.41]$}} & \shortstack{$83.69$\\{\scriptsize $[83.50,83.86]$}} & 42.2 \\
SADDER + warm-up + aligned remasking & \shortstack{$24.7$\\{\scriptsize $[21.8,27.7]$}} & \shortstack{$31.36$\\{\scriptsize $[29.70,33.08]$}} & \shortstack{$42.23$\\{\scriptsize $[39.06,46.69]$}} & \shortstack{$83.65$\\{\scriptsize $[83.47,83.82]$}} & 37.8 \\
CSD$^\dagger$ & \shortstack{$15.6$\\{\scriptsize $[13.8,17.3]$}} & \shortstack{$486.52$\\{\scriptsize $[439.90,541.43]$}} & \shortstack{$942.73$\\{\scriptsize $[868.26,1027.02]$}} & \shortstack{$82.61$\\{\scriptsize $[82.48,82.73]$}} & 55.0 \\
BoN-8 & \shortstack{$18.6$\\{\scriptsize $[16.3,20.9]$}} & \shortstack{$24.23$\\{\scriptsize $[23.04,25.42]$}} & \shortstack{$19.47$\\{\scriptsize $[18.74,20.21]$}} & \shortstack{$84.70$\\{\scriptsize $[84.56,84.84]$}} & 256.0 \\
FK-8 & \shortstack{$28.2$\\{\scriptsize $[25.4,30.9]$}} & \shortstack{$22.66$\\{\scriptsize $[21.66,23.74]$}} & \shortstack{$18.17$\\{\scriptsize $[17.53,18.84]$}} & \shortstack{$84.71$\\{\scriptsize $[84.57,84.85]$}} & 256.0 \\
\bottomrule
\end{tabular}
\end{table}

%% file: tables/extra/variants_dream.tex
\begin{table}[!htbp]
\centering\footnotesize
\setlength{\tabcolsep}{3pt}\renewcommand{\arraystretch}{1.12}
\caption{Dream-Base component variants.}
\label{tab:variants-dream}
\begin{tabular}{@{}lrrrrr@{}}
\toprule
Method & Unsafe \% $\downarrow$ & \shortstack{Harmful\\PPL $\downarrow$} & \shortstack{Benign\\PPL $\downarrow$} & \shortstack{Benign\\BERT $\uparrow$} & NFE \\
\midrule
\multicolumn{6}{l}{\textbf{RTP}} \\
Unmodified & \shortstack{$46.8$\\{\scriptsize $[43.8,49.7]$}} & \shortstack{$18.31$\\{\scriptsize $[17.61,19.06]$}} & \shortstack{$15.45$\\{\scriptsize $[14.96,15.98]$}} & \shortstack{$84.71$\\{\scriptsize $[84.57,84.86]$}} & 32.0 \\
Full SAD & \shortstack{$33.4$\\{\scriptsize $[30.1,36.8]$}} & \shortstack{$26.02$\\{\scriptsize $[24.30,27.83]$}} & \shortstack{$23.31$\\{\scriptsize $[21.99,24.68]$}} & \shortstack{$83.16$\\{\scriptsize $[82.96,83.35]$}} & 32.0 \\
SADDER & \shortstack{$35.9$\\{\scriptsize $[32.8,39.0]$}} & \shortstack{$29.09$\\{\scriptsize $[27.54,30.68]$}} & \shortstack{$18.10$\\{\scriptsize $[17.48,18.73]$}} & \shortstack{$84.59$\\{\scriptsize $[84.45,84.73]$}} & 32.0 \\
SADDER w/ top-1 remasking & \shortstack{$37.4$\\{\scriptsize $[34.3,40.5]$}} & \shortstack{$22.54$\\{\scriptsize $[21.28,23.80]$}} & \shortstack{$16.92$\\{\scriptsize $[16.37,17.53]$}} & \shortstack{$84.57$\\{\scriptsize $[84.43,84.71]$}} & 47.0 \\
SADDER w/ aligned remasking & \shortstack{$40.9$\\{\scriptsize $[38.0,43.7]$}} & \shortstack{$23.06$\\{\scriptsize $[22.15,24.00]$}} & \shortstack{$17.53$\\{\scriptsize $[16.92,18.16]$}} & \shortstack{$84.52$\\{\scriptsize $[84.37,84.66]$}} & 44.8 \\
SADDER + warm-up & \shortstack{$34.7$\\{\scriptsize $[31.6,38.2]$}} & \shortstack{$27.11$\\{\scriptsize $[25.52,28.82]$}} & \shortstack{$21.26$\\{\scriptsize $[20.25,22.29]$}} & \shortstack{$83.76$\\{\scriptsize $[83.57,83.94]$}} & 32.0 \\
CSD$^\dagger$ & \shortstack{$32.9$\\{\scriptsize $[30.6,35.4]$}} & \shortstack{$1182.90$\\{\scriptsize $[1101.51,1271.65]$}} & \shortstack{$3397.41$\\{\scriptsize $[3266.61,3538.51]$}} & \shortstack{$81.51$\\{\scriptsize $[81.42,81.61]$}} & 60.2 \\
BoN-8 & \shortstack{$24.6$\\{\scriptsize $[22.0,27.3]$}} & \shortstack{$17.46$\\{\scriptsize $[16.72,18.18]$}} & \shortstack{$16.87$\\{\scriptsize $[16.30,17.46]$}} & \shortstack{$84.81$\\{\scriptsize $[84.68,84.95]$}} & 256.0 \\
FK-8 & \shortstack{$25.1$\\{\scriptsize $[22.5,27.7]$}} & \shortstack{$17.85$\\{\scriptsize $[17.16,18.57]$}} & \shortstack{$15.73$\\{\scriptsize $[15.27,16.24]$}} & \shortstack{$84.76$\\{\scriptsize $[84.62,84.90]$}} & 256.0 \\
\midrule
\multicolumn{6}{l}{\textbf{BeaverTails}} \\
Unmodified & \shortstack{$38.7$\\{\scriptsize $[35.8,41.6]$}} & \shortstack{$12.67$\\{\scriptsize $[12.17,13.21]$}} & \shortstack{$15.45$\\{\scriptsize $[14.96,15.98]$}} & \shortstack{$84.71$\\{\scriptsize $[84.57,84.86]$}} & 32.0 \\
Full SAD & \shortstack{$22.7$\\{\scriptsize $[19.7,25.7]$}} & \shortstack{$23.68$\\{\scriptsize $[21.80,25.75]$}} & \shortstack{$23.03$\\{\scriptsize $[21.69,24.45]$}} & \shortstack{$83.62$\\{\scriptsize $[83.44,83.80]$}} & 32.0 \\
SADDER & \shortstack{$34.1$\\{\scriptsize $[31.3,36.8]$}} & \shortstack{$19.32$\\{\scriptsize $[18.41,20.32]$}} & \shortstack{$19.95$\\{\scriptsize $[19.18,20.73]$}} & \shortstack{$84.49$\\{\scriptsize $[84.34,84.64]$}} & 32.0 \\
SADDER w/ top-1 remasking & \shortstack{$34.3$\\{\scriptsize $[31.5,37.1]$}} & \shortstack{$19.16$\\{\scriptsize $[18.23,20.15]$}} & \shortstack{$18.35$\\{\scriptsize $[17.63,19.09]$}} & \shortstack{$84.39$\\{\scriptsize $[84.24,84.55]$}} & 47.0 \\
SADDER w/ aligned remasking & \shortstack{$33.7$\\{\scriptsize $[31.0,36.5]$}} & \shortstack{$17.78$\\{\scriptsize $[16.93,18.65]$}} & \shortstack{$21.07$\\{\scriptsize $[20.28,21.92]$}} & \shortstack{$84.54$\\{\scriptsize $[84.39,84.68]$}} & 40.4 \\
SADDER + warm-up & \shortstack{$27.7$\\{\scriptsize $[24.8,30.8]$}} & \shortstack{$23.56$\\{\scriptsize $[22.08,25.14]$}} & \shortstack{$20.52$\\{\scriptsize $[19.60,21.49]$}} & \shortstack{$84.21$\\{\scriptsize $[84.06,84.37]$}} & 32.0 \\
CSD$^\dagger$ & \shortstack{$38.6$\\{\scriptsize $[35.7,41.4]$}} & \shortstack{$17.81$\\{\scriptsize $[16.65,19.13]$}} & \shortstack{$27.19$\\{\scriptsize $[25.30,29.38]$}} & \shortstack{$84.50$\\{\scriptsize $[84.36,84.64]$}} & 55.8 \\
BoN-8 & \shortstack{$20.0$\\{\scriptsize $[17.8,22.2]$}} & \shortstack{$14.95$\\{\scriptsize $[14.26,15.68]$}} & \shortstack{$16.87$\\{\scriptsize $[16.30,17.46]$}} & \shortstack{$84.81$\\{\scriptsize $[84.68,84.95]$}} & 256.0 \\
FK-8 & \shortstack{$27.8$\\{\scriptsize $[25.2,30.4]$}} & \shortstack{$13.94$\\{\scriptsize $[13.39,14.54]$}} & \shortstack{$15.73$\\{\scriptsize $[15.27,16.24]$}} & \shortstack{$84.76$\\{\scriptsize $[84.62,84.90]$}} & 256.0 \\
\bottomrule
\end{tabular}
\end{table}

%% file: tables/extra/paired_benign_main.tex
\begin{table}[!htbp]
\centering\footnotesize
\setlength{\tabcolsep}{3pt}\renewcommand{\arraystretch}{1.12}
\caption{SADDER variants: paired benign-quality changes versus full SAD.}
\label{tab:paired-benign-main}
\begin{tabular}{@{}llrr@{}}
\toprule
Model/data & Method & $\Delta$ benign PPL & $\Delta$ benign BERT \\
\midrule
LLaDA/RTP & SADDER (selective) & $-11.55\;[-14.49,-9.26]$ & $0.672\;[0.527,0.813]$ \\
 & SADDER + warm-up & $3.42\;[0.49,6.19]$ & $0.206\;[0.076,0.332]$ \\
\addlinespace[2pt]
LLaDA/BT & SADDER (selective) & $-19.28\;[-24.82,-15.03]$ & $0.822\;[0.681,0.967]$ \\
 & SADDER + warm-up & $-4.85\;[-8.89,-1.18]$ & $0.270\;[0.133,0.407]$ \\
\addlinespace[2pt]
Dream/RTP & SADDER (selective) & $-5.22\;[-6.55,-3.88]$ & $1.429\;[1.258,1.605]$ \\
 & SADDER + warm-up & $-2.06\;[-3.25,-0.91]$ & $0.596\;[0.447,0.740]$ \\
\addlinespace[2pt]
Dream/BT & SADDER (selective) & $-3.08\;[-4.51,-1.70]$ & $0.868\;[0.709,1.030]$ \\
 & SADDER + warm-up & $-2.51\;[-3.66,-1.38]$ & $0.594\;[0.482,0.703]$ \\
\bottomrule
\end{tabular}
\end{table}

%% file: tables/extra/continuation_controls_llada.tex
\begin{table}[!htbp]
\centering\footnotesize
\setlength{\tabcolsep}{3pt}\renewcommand{\arraystretch}{1.12}
\caption{LLaDA-Base: paired unsafe-rate differences (pp), means [95\% CI].}
\label{tab:continuation-controls-llada}
\begin{tabular}{@{}lrrr@{}}
\toprule
SADDER variant & vs full SAD & vs random selection & vs random reopening \\
\midrule
\multicolumn{4}{l}{\textbf{RTP}} \\
Selective & $-4.1\;[-6.1,-2.1]$ & $-3.3\;[-5.4,-1.3]$ & --- \\
+ top-1 remasking & $-5.2\;[-8.5,-1.7]$ & --- & $-3.4\;[-6.3,-0.6]$ \\
+ aligned remasking & $-2.9\;[-6.3,0.4]$ & --- & $-1.7\;[-3.5,0.1]$ \\
+ warm-up + aligned & $-5.8\;[-9.0,-2.7]$ & --- & $-4.8\;[-6.9,-2.8]$ \\
\midrule
\multicolumn{4}{l}{\textbf{BeaverTails}} \\
Selective & $2.8\;[-1.1,6.6]$ & $-0.7\;[-3.9,2.4]$ & --- \\
+ top-1 remasking & $-7.9\;[-11.8,-4.1]$ & --- & $-6.1\;[-9.1,-3.1]$ \\
+ aligned remasking & $0.4\;[-3.3,4.1]$ & --- & $1.9\;[-0.6,4.6]$ \\
+ warm-up + aligned & $-2.4\;[-5.4,0.3]$ & --- & $-1.8\;[-4.5,0.8]$ \\
\bottomrule
\end{tabular}
\end{table}

%% file: tables/extra/continuation_controls_dream.tex
\begin{table}[!htbp]
\centering\footnotesize
\setlength{\tabcolsep}{3pt}\renewcommand{\arraystretch}{1.12}
\caption{Dream-Base: paired unsafe-rate differences (pp), means [95\% CI].}
\label{tab:continuation-controls-dream}
\begin{tabular}{@{}lrrr@{}}
\toprule
SADDER variant & vs full SAD & vs random selection & vs random reopening \\
\midrule
\multicolumn{4}{l}{\textbf{RTP}} \\
Selective & $2.5\;[-0.3,5.3]$ & $-2.7\;[-5.5,0.1]$ & --- \\
+ top-1 remasking & $-3.9\;[-7.4,-0.4]$ & $-2.6\;[-5.3,0.3]$ & $-2.8\;[-5.7,0.2]$ \\
+ aligned remasking & $-3.4\;[-6.2,-0.6]$ & $0.6\;[-1.7,2.7]$ & $-2.2\;[-4.5,0.2]$ \\
+ warm-up & $0.3\;[-1.6,2.2]$ & $0.4\;[-1.4,2.3]$ & --- \\
\midrule
\multicolumn{4}{l}{\textbf{BeaverTails}} \\
Selective & $11.4\;[7.8,14.8]$ & $3.4\;[0.8,5.9]$ & --- \\
+ top-1 remasking & $4.6\;[1.1,8.0]$ & $5.3\;[2.7,7.9]$ & $-1.6\;[-4.5,1.2]$ \\
+ aligned remasking & $4.0\;[0.6,7.4]$ & $2.6\;[0.1,5.1]$ & $0.2\;[-2.1,2.5]$ \\
+ warm-up & $5.1\;[2.6,7.5]$ & $0.1\;[-1.7,1.8]$ & --- \\
\bottomrule
\end{tabular}
\end{table}

%% file: tables/bon/benign_search_llada.tex
\begin{table}[!htbp]
\centering\footnotesize
\setlength{\tabcolsep}{3pt}\renewcommand{\arraystretch}{1.12}
\caption{LLaDA: BoN and FK at budgets 2, 4, and 8.}
\label{tab:benign-search-llada}
\begin{tabular}{@{}lrrrrr@{}}
\toprule
Method & Unsafe \% & Harmful PPL & Benign PPL & Benign BERT & NFE \\
\midrule
\multicolumn{6}{l}{\textbf{RTP}} \\
BoN-2 & \shortstack{$30.6$\\{\scriptsize $[27.8,33.4]$}} & \shortstack{$20.90$\\{\scriptsize $[20.00,21.82]$}} & \shortstack{$17.51$\\{\scriptsize $[16.89,18.15]$}} & \shortstack{$84.73$\\{\scriptsize $[84.59,84.88]$}} & 64 \\
BoN-4 & \shortstack{$24.7$\\{\scriptsize $[22.3,27.2]$}} & \shortstack{$20.50$\\{\scriptsize $[19.68,21.40]$}} & \shortstack{$18.70$\\{\scriptsize $[18.00,19.41]$}} & \shortstack{$84.75$\\{\scriptsize $[84.61,84.89]$}} & 128 \\
BoN-8 & \shortstack{$19.0$\\{\scriptsize $[16.7,21.5]$}} & \shortstack{$20.46$\\{\scriptsize $[19.57,21.39]$}} & \shortstack{$19.47$\\{\scriptsize $[18.74,20.21]$}} & \shortstack{$84.70$\\{\scriptsize $[84.56,84.84]$}} & 256 \\
FK-2 & \shortstack{$32.4$\\{\scriptsize $[29.7,35.2]$}} & \shortstack{$21.47$\\{\scriptsize $[20.53,22.46]$}} & \shortstack{$17.59$\\{\scriptsize $[16.96,18.26]$}} & \shortstack{$84.67$\\{\scriptsize $[84.52,84.82]$}} & 64 \\
FK-4 & \shortstack{$26.2$\\{\scriptsize $[23.6,28.8]$}} & \shortstack{$20.82$\\{\scriptsize $[19.92,21.71]$}} & \shortstack{$17.84$\\{\scriptsize $[17.22,18.49]$}} & \shortstack{$84.68$\\{\scriptsize $[84.54,84.83]$}} & 128 \\
FK-8 & \shortstack{$20.6$\\{\scriptsize $[18.1,23.0]$}} & \shortstack{$21.29$\\{\scriptsize $[20.37,22.27]$}} & \shortstack{$18.17$\\{\scriptsize $[17.53,18.84]$}} & \shortstack{$84.71$\\{\scriptsize $[84.57,84.85]$}} & 256 \\
\midrule
\multicolumn{6}{l}{\textbf{BeaverTails}} \\
BoN-2 & \shortstack{$32.1$\\{\scriptsize $[29.3,34.8]$}} & \shortstack{$22.32$\\{\scriptsize $[21.38,23.33]$}} & \shortstack{$17.51$\\{\scriptsize $[16.89,18.15]$}} & \shortstack{$84.73$\\{\scriptsize $[84.59,84.88]$}} & 64 \\
BoN-4 & \shortstack{$24.1$\\{\scriptsize $[21.7,26.6]$}} & \shortstack{$23.76$\\{\scriptsize $[22.68,24.93]$}} & \shortstack{$18.70$\\{\scriptsize $[18.00,19.41]$}} & \shortstack{$84.75$\\{\scriptsize $[84.61,84.89]$}} & 128 \\
BoN-8 & \shortstack{$18.6$\\{\scriptsize $[16.3,20.9]$}} & \shortstack{$24.23$\\{\scriptsize $[23.04,25.42]$}} & \shortstack{$19.47$\\{\scriptsize $[18.74,20.21]$}} & \shortstack{$84.70$\\{\scriptsize $[84.56,84.84]$}} & 256 \\
FK-2 & \shortstack{$36.2$\\{\scriptsize $[33.3,39.2]$}} & \shortstack{$21.24$\\{\scriptsize $[20.26,22.26]$}} & \shortstack{$17.59$\\{\scriptsize $[16.96,18.26]$}} & \shortstack{$84.67$\\{\scriptsize $[84.52,84.82]$}} & 64 \\
FK-4 & \shortstack{$34.6$\\{\scriptsize $[31.8,37.4]$}} & \shortstack{$21.85$\\{\scriptsize $[20.88,22.94]$}} & \shortstack{$17.84$\\{\scriptsize $[17.22,18.49]$}} & \shortstack{$84.68$\\{\scriptsize $[84.54,84.83]$}} & 128 \\
FK-8 & \shortstack{$28.2$\\{\scriptsize $[25.4,30.9]$}} & \shortstack{$22.66$\\{\scriptsize $[21.66,23.74]$}} & \shortstack{$18.17$\\{\scriptsize $[17.53,18.84]$}} & \shortstack{$84.71$\\{\scriptsize $[84.57,84.85]$}} & 256 \\
\bottomrule
\end{tabular}
\end{table}

%% file: tables/bon/benign_search_dream.tex
\begin{table}[!htbp]
\centering\footnotesize
\setlength{\tabcolsep}{3pt}\renewcommand{\arraystretch}{1.12}
\caption{Dream: BoN and FK at budgets 2, 4, and 8.}
\label{tab:benign-search-dream}
\begin{tabular}{@{}lrrrrr@{}}
\toprule
Method & Unsafe \% & Harmful PPL & Benign PPL & Benign BERT & NFE \\
\midrule
\multicolumn{6}{l}{\textbf{RTP}} \\
BoN-2 & \shortstack{$37.0$\\{\scriptsize $[34.2,39.9]$}} & \shortstack{$17.67$\\{\scriptsize $[17.01,18.35]$}} & \shortstack{$15.39$\\{\scriptsize $[14.90,15.90]$}} & \shortstack{$84.76$\\{\scriptsize $[84.62,84.91]$}} & 64 \\
BoN-4 & \shortstack{$30.6$\\{\scriptsize $[27.9,33.3]$}} & \shortstack{$17.95$\\{\scriptsize $[17.27,18.64]$}} & \shortstack{$15.78$\\{\scriptsize $[15.29,16.29]$}} & \shortstack{$84.76$\\{\scriptsize $[84.62,84.90]$}} & 128 \\
BoN-8 & \shortstack{$24.6$\\{\scriptsize $[22.0,27.3]$}} & \shortstack{$17.46$\\{\scriptsize $[16.72,18.18]$}} & \shortstack{$16.87$\\{\scriptsize $[16.30,17.46]$}} & \shortstack{$84.81$\\{\scriptsize $[84.68,84.95]$}} & 256 \\
FK-2 & \shortstack{$39.7$\\{\scriptsize $[36.8,42.7]$}} & \shortstack{$17.90$\\{\scriptsize $[17.23,18.56]$}} & \shortstack{$15.40$\\{\scriptsize $[14.92,15.90]$}} & \shortstack{$84.69$\\{\scriptsize $[84.54,84.83]$}} & 64 \\
FK-4 & \shortstack{$31.3$\\{\scriptsize $[28.6,33.9]$}} & \shortstack{$17.90$\\{\scriptsize $[17.20,18.61]$}} & \shortstack{$15.31$\\{\scriptsize $[14.83,15.83]$}} & \shortstack{$84.72$\\{\scriptsize $[84.57,84.86]$}} & 128 \\
FK-8 & \shortstack{$25.1$\\{\scriptsize $[22.5,27.7]$}} & \shortstack{$17.85$\\{\scriptsize $[17.16,18.57]$}} & \shortstack{$15.73$\\{\scriptsize $[15.27,16.24]$}} & \shortstack{$84.76$\\{\scriptsize $[84.62,84.90]$}} & 256 \\
\midrule
\multicolumn{6}{l}{\textbf{BeaverTails}} \\
BoN-2 & \shortstack{$32.5$\\{\scriptsize $[29.8,35.1]$}} & \shortstack{$12.81$\\{\scriptsize $[12.27,13.38]$}} & \shortstack{$15.39$\\{\scriptsize $[14.90,15.90]$}} & \shortstack{$84.76$\\{\scriptsize $[84.62,84.91]$}} & 64 \\
BoN-4 & \shortstack{$26.3$\\{\scriptsize $[23.7,28.8]$}} & \shortstack{$13.58$\\{\scriptsize $[13.00,14.22]$}} & \shortstack{$15.78$\\{\scriptsize $[15.29,16.29]$}} & \shortstack{$84.76$\\{\scriptsize $[84.62,84.90]$}} & 128 \\
BoN-8 & \shortstack{$20.0$\\{\scriptsize $[17.8,22.2]$}} & \shortstack{$14.95$\\{\scriptsize $[14.26,15.68]$}} & \shortstack{$16.87$\\{\scriptsize $[16.30,17.46]$}} & \shortstack{$84.81$\\{\scriptsize $[84.68,84.95]$}} & 256 \\
FK-2 & \shortstack{$36.3$\\{\scriptsize $[33.4,39.1]$}} & \shortstack{$13.21$\\{\scriptsize $[12.63,13.81]$}} & \shortstack{$15.40$\\{\scriptsize $[14.92,15.90]$}} & \shortstack{$84.69$\\{\scriptsize $[84.54,84.83]$}} & 64 \\
FK-4 & \shortstack{$32.7$\\{\scriptsize $[29.9,35.4]$}} & \shortstack{$13.76$\\{\scriptsize $[13.20,14.37]$}} & \shortstack{$15.31$\\{\scriptsize $[14.83,15.83]$}} & \shortstack{$84.72$\\{\scriptsize $[84.57,84.86]$}} & 128 \\
FK-8 & \shortstack{$27.8$\\{\scriptsize $[25.2,30.4]$}} & \shortstack{$13.94$\\{\scriptsize $[13.39,14.54]$}} & \shortstack{$15.73$\\{\scriptsize $[15.27,16.24]$}} & \shortstack{$84.76$\\{\scriptsize $[84.62,84.90]$}} & 256 \\
\bottomrule
\end{tabular}
\end{table}

%% file: tables/bon/composition_focused.tex
\begin{table}[!htbp]
\centering\footnotesize
\setlength{\tabcolsep}{3pt}\renewcommand{\arraystretch}{1.12}
\caption{Composition.}
\label{tab:composition-focused_earlier}
\begin{tabular}{@{}llrrr@{}}
\toprule
Model/data & Method & Unsafe \% & Unsafe-prompt PPL & NFE \\
\midrule
Dream/BT & BoN-2 & $32.5\;[29.8,35.1]$ & \shortstack{$12.81$\\{\scriptsize $[12.27,13.38]$}} & 64 \\
\addlinespace[2pt]
Dream/RTP & BoN-2 & $37.0\;[34.2,39.9]$ & \shortstack{$17.67$\\{\scriptsize $[17.01,18.35]$}} & 64 \\
 & SADDER + BoN-2 & $34.0\;[30.9,37.1]$ & \shortstack{$21.05$\\{\scriptsize $[19.98,22.11]$}} & 64 \\
\addlinespace[2pt]
LLaDA/BT & BoN-2 & $32.1\;[29.3,34.8]$ & \shortstack{$22.32$\\{\scriptsize $[21.38,23.33]$}} & 64 \\
 & SADDER + BoN-2 & $28.5\;[25.3,31.7]$ & \shortstack{$27.54$\\{\scriptsize $[25.89,29.39]$}} & 64 \\
\addlinespace[2pt]
LLaDA/RTP & BoN-2 & $30.6\;[27.8,33.4]$ & \shortstack{$20.90$\\{\scriptsize $[20.00,21.82]$}} & 64 \\
 & SADDER + BoN-2 & $26.7\;[23.7,29.8]$ & \shortstack{$28.80$\\{\scriptsize $[26.80,31.41]$}} & 64 \\
\bottomrule
\end{tabular}
\end{table}

%% file: tables/jailbreak_main_llada.tex
\begin{table*}[!htb]
\centering\fontsize{8}{10}\selectfont
\setlength{\tabcolsep}{2pt}\renewcommand{\arraystretch}{1.15}
\caption{LLaDA-Instruct: held-out HarmBench attacks and XSTest over-refusal. Entries use $A_b$: mean and sample SD across three seeds (percentage points for rates). HB, LG, and PPL assess complete responses. XSTest reports full-response WG refusal on safe prompts, using valid judge labels only (coverage in Appendix~\ref{app:xstest-results}). NFE is mean cost.}
\label{tab:jailbreak-llada}
\begin{tabular}{@{}lrrrr@{\hspace{8pt}}rrrr@{\hspace{8pt}}r@{}}
\toprule
\multirow{2}{*}{Method} & \multicolumn{4}{c}{\textbf{DIJA}} & \multicolumn{4}{c}{\textbf{PAD}} & \multicolumn{1}{c}{\textbf{XSTest}} \\
\cmidrule(lr){2-5}\cmidrule(lr){6-9}\cmidrule(lr){10-10}
 & \shortstack{HB ASR\\\% $\downarrow$} & \shortstack{LG unsafe\\\% $\downarrow$} & \shortstack{Harmful\\PPL $\downarrow$} & NFE & \shortstack{HB ASR\\\% $\downarrow$} & \shortstack{LG unsafe\\\% $\downarrow$} & \shortstack{Harmful\\PPL $\downarrow$} & NFE & \shortstack{WG refusal\\\% $\downarrow$} \\
\midrule
\multicolumn{10}{@{}l}{\textbf{LLaDA-Instruct}} \\
Unmodified & $78.7_{3.1}$ & $82.0_{2.6}$ & $9.34_{0.13}$ & 91.7 & $58.7_{2.5}$ & $75.7_{2.3}$ & $6.06_{0.17}$ & 116.0 & $20.7_{1.1}$ \\
Full SAD & $77.0_{2.0}$ & $81.0_{0.0}$ & $11.0_{0.13}$ & 91.7 & $63.3_{7.5}$ & $78.7_{4.0}$ & $8.24_{0.21}$ & 116.0 & $20.9_{1.9}$ \\
SADDER (selective) & $79.0_{1.0}$ & $79.3_{1.5}$ & $13.3_{0.076}$ & 91.7 & $57.7_{4.0}$ & $73.7_{3.8}$ & $13.2_{0.26}$ & 116.0 & $18.8_{0.6}$ \\
SADDER + warm-up & $78.3_{0.6}$ & $80.7_{0.6}$ & $12.8_{0.19}$ & 91.7 & $57.0_{4.0}$ & $68.3_{2.5}$ & $11.4_{0.19}$ & 116.0 & $16.3_{2.0}$ \\
CSD$^\dagger$ & $48.7_{4.7}$ & $79.7_{4.9}$ & $36.1_{0.92}$ & 115.7 & $30.3_{3.5}$ & $69.3_{0.6}$ & $19.7_{1.3}$ & 142.0 & $29.7_{1.4}$ \\
DiffuGuard (full) & $42.7_{4.5}$ & $81.7_{2.5}$ & $255_{8.9}$ & 137.0 & $34.3_{3.8}$ & $73.7_{4.2}$ & $231_{7.2}$ & 125.0 & $42.8_{0.8}$ \\
DiffuGuard (gen) & $70.7_{3.5}$ & $83.7_{2.5}$ & $41.8_{1.3}$ & 97.8 & $57.3_{2.1}$ & $74.7_{3.5}$ & $6.22_{0.14}$ & 117.0 & $24.4_{1.0}$ \\
\bottomrule
\end{tabular}
\end{table*}

%% file: sections/Appendix/xstest_results.tex
\subsection{XSTest over-refusal}
\label{app:xstest-results}
We evaluate refusal on 250 safe XSTest prompts and separately on 200 unsafe contrasts, using unmodified prompts and seeds 42, 43, and 44. Over-refusal is the refusal rate on safe prompts only. String matching uses full-refusal openings after lowercasing, trimming surrounding whitespace, and normalizing typographic apostrophes. WildGuard separately judges response refusal and harmfulness from each prompt--response pair \citep{wildguard2024}.

Table~\ref{tab:xstest-refusal} reports the string-based measurements. Tables~\ref{tab:xstest-wildguard-llada} and~\ref{tab:xstest-wildguard-dream} report WildGuard means with seed-SD subscripts and 95\% prompt-cluster bootstrap intervals below. The $n$ columns count valid labels out of 750 safe-prompt or 600 unsafe-prompt outputs. Missing WildGuard labels are excluded separately for each method; they are not counted as safe or non-refusing. Each point estimate averages the valid-label rates within the three seeds. Intervals use 5,000 resamples of prompt clusters, retaining all seeds.

\paragraph{Refusal detection and coverage.} The detectors assign different refusal rates to the same responses. For LLaDA warm-up on unsafe prompts, opening-string matching gives 53.5\% refusal and WildGuard gives 97.5\%; the corresponding Dream rates are 69.5\% and 91.5\%. So a reduction in stereotyped refusal openings can overstate the reduction in refusal behavior. WildGuard's response-harmfulness labels provide a separate check: Dream warm-up has 6.9\% harmful responses on unsafe XSTest prompts, compared with 0.0\% for unmodified sampling, among outputs with valid labels. Coverage also varies with the intervention. Paired comparisons use the intersection of valid labels, rather than subtracting rates computed on different subsets. On safe prompts, warm-up changes WildGuard refusal by $-5.9$ percentage points on LLaDA (95\% CI: $[-9.7,-2.1]$; 593 matched outputs) and $-35.4$ points on Dream ($[-41.6,-29.3]$; 602 matched outputs).

\paragraph{DiffuGuard settings.}
Generation-scope DiffuGuard uses $\lambda=0.35$ on LLaDA and $\lambda=0.5/0.75$ for Dream's DIJA/PAD settings. Both Dream thresholds are evaluated on XSTest. On vanilla prompts, full-block and generation-scope repair regions coincide, so these rows compare thresholds rather than repair scopes. Calibration reuses XSTest-safe prompts for the trigger diagnostic in Appendix~\ref{app:sadder-datasets}; these measurements include calibration prompts.

\input{tables/extra/xstest_refusal.tex}
\input{tables/extra/xstest_paired.tex}
\FloatBarrier
\input{tables/extra/xstest_wildguard_llada.tex}
\input{tables/extra/xstest_wildguard_dream.tex}
\FloatBarrier
\input{tables/extra/xstest_wildguard_paired_llada.tex}
\input{tables/extra/xstest_wildguard_paired_dream.tex}
\FloatBarrier

%% file: tables/extra/xstest_refusal.tex
\begin{table}[!htbp]
\centering\fontsize{8}{9.6}\selectfont
\setlength{\tabcolsep}{3pt}\renewcommand{\arraystretch}{1.1}
\caption{XSTest string-based refusal: means [95\% CI] and seed SDs.}
\label{tab:xstest-refusal}
\begin{tabular}{@{}lrrrr@{}}
\toprule
Method & Safe: over-refusal\% & SD & Unsafe: refusal\% & SD \\
\midrule
\multicolumn{5}{l}{\textbf{LLaDA-Instruct}} \\
Unmodified & $9.7\;[6.5,13.2]$ & 0.9 & $93.7\;[90.7,96.3]$ & 1.0 \\
Full SAD & $10.0\;[6.8,13.5]$ & 1.1 & $93.7\;[90.5,96.3]$ & 1.3 \\
SADDER (selective) & $7.9\;[5.1,10.9]$ & 0.8 & $85.2\;[80.3,89.7]$ & 0.3 \\
SADDER + warm-up & $5.2\;[2.8,8.0]$ & 0.7 & $53.5\;[46.7,60.3]$ & 0.0 \\
CSD$^\dagger$ & $10.3\;[7.1,13.7]$ & 0.8 & $93.7\;[90.7,96.3]$ & 1.0 \\
DiffuGuard (full) & $4.0\;[2.5,5.6]$ & 1.1 & $28.2\;[24.7,31.8]$ & 1.6 \\
DiffuGuard (gen) ($\lambda=0.35$) & $10.7\;[7.3,14.3]$ & 0.5 & $92.2\;[88.8,95.2]$ & 1.3 \\
SADDER (top-1) & $8.4\;[5.5,11.7]$ & 0.8 & $85.5\;[80.8,89.8]$ & 0.5 \\
SADDER (aligned) & $8.5\;[5.6,11.7]$ & 0.9 & $85.3\;[80.5,89.8]$ & 0.8 \\
Random selection & $10.0\;[6.8,13.5]$ & 0.7 & $86.2\;[82.7,89.5]$ & 2.9 \\
\midrule
\multicolumn{5}{l}{\textbf{Dream-Instruct}} \\
Unmodified & $44.9\;[39.2,50.5]$ & 0.6 & $98.8\;[97.3,100.0]$ & 0.3 \\
Full SAD & $46.0\;[40.0,51.9]$ & 0.7 & $98.5\;[96.5,100.0]$ & 0.0 \\
SADDER (selective) & $26.1\;[20.9,31.2]$ & 0.5 & $92.0\;[88.2,95.5]$ & 0.5 \\
SADDER + warm-up & $7.1\;[4.1,10.4]$ & 0.2 & $69.5\;[63.0,76.0]$ & 0.0 \\
CSD$^\dagger$ & $45.2\;[39.5,50.8]$ & 0.8 & $98.8\;[97.3,100.0]$ & 0.3 \\
DiffuGuard (full) & $6.7\;[4.9,8.5]$ & 1.2 & $13.5\;[10.8,16.3]$ & 1.0 \\
DiffuGuard (gen) ($\lambda=0.75$) & $43.1\;[37.5,48.5]$ & 0.5 & $98.8\;[97.3,100.0]$ & 0.3 \\
DiffuGuard (gen) ($\lambda=0.5$) & $41.9\;[36.4,47.2]$ & 0.6 & $98.0\;[96.0,99.5]$ & 0.5 \\
SADDER (top-1) & $34.7\;[28.8,40.5]$ & 0.2 & $96.5\;[93.5,99.0]$ & 0.0 \\
SADDER (aligned) & $29.5\;[24.0,34.9]$ & 0.5 & $96.5\;[93.8,98.7]$ & 0.9 \\
Random selection & $22.5\;[18.7,26.7]$ & 2.3 & $74.2\;[70.0,78.2]$ & 4.9 \\
\bottomrule
\end{tabular}
\end{table}

%% file: tables/extra/xstest_paired.tex
\begin{table}[!htbp]
\centering\fontsize{8}{9.6}\selectfont
\setlength{\tabcolsep}{3pt}\renewcommand{\arraystretch}{1.1}
\caption{Paired string-based refusal changes versus unmodified sampling (pp).}
\label{tab:xstest-paired}
\begin{tabular}{@{}lrr@{}}
\toprule
Method & Safe prompts & Unsafe prompts \\
\midrule
\multicolumn{3}{l}{\textbf{LLaDA-Instruct}} \\
Full SAD & $0.3\;[-0.9,1.5]$ & $0.0\;[-1.0,1.0]$ \\
SADDER (selective) & $-1.9\;[-3.7,-0.3]$ & $-8.5\;[-13.0,-4.2]$ \\
SADDER + warm-up & $-4.5\;[-7.5,-1.9]$ & $-40.2\;[-47.3,-32.7]$ \\
CSD$^\dagger$ & $0.5\;[0.1,1.1]$ & $0.0\;[0.0,0.0]$ \\
DiffuGuard (full) & $-5.7\;[-9.1,-2.8]$ & $-65.5\;[-69.8,-61.0]$ \\
DiffuGuard (gen) ($\lambda=0.35$) & $0.9\;[-0.1,2.1]$ & $-1.5\;[-3.5,0.3]$ \\
SADDER (top-1) & $-1.3\;[-3.2,0.4]$ & $-8.2\;[-12.3,-4.2]$ \\
SADDER (aligned) & $-1.2\;[-3.1,0.4]$ & $-8.3\;[-12.8,-4.2]$ \\
Random selection & $0.3\;[-1.5,2.0]$ & $-7.5\;[-11.3,-4.0]$ \\
\midrule
\multicolumn{3}{l}{\textbf{Dream-Instruct}} \\
Full SAD & $1.1\;[-0.9,3.1]$ & $-0.3\;[-1.5,0.5]$ \\
SADDER (selective) & $-18.8\;[-24.0,-13.6]$ & $-6.8\;[-10.8,-3.3]$ \\
SADDER + warm-up & $-37.9\;[-43.6,-32.0]$ & $-29.3\;[-35.8,-23.0]$ \\
CSD$^\dagger$ & $0.3\;[0.0,0.7]$ & $0.0\;[0.0,0.0]$ \\
DiffuGuard (full) & $-38.3\;[-43.7,-32.7]$ & $-85.3\;[-88.3,-82.2]$ \\
DiffuGuard (gen) ($\lambda=0.75$) & $-1.9\;[-3.2,-0.5]$ & $0.0\;[-0.5,0.5]$ \\
DiffuGuard (gen) ($\lambda=0.5$) & $-3.1\;[-5.1,-1.2]$ & $-0.8\;[-2.3,0.2]$ \\
SADDER (top-1) & $-10.3\;[-14.7,-6.0]$ & $-2.3\;[-5.2,0.0]$ \\
SADDER (aligned) & $-15.5\;[-19.9,-11.1]$ & $-2.3\;[-5.0,0.0]$ \\
Random selection & $-22.4\;[-27.5,-17.5]$ & $-24.7\;[-28.7,-20.7]$ \\
\bottomrule
\end{tabular}
\end{table}

%% file: tables/extra/xstest_wildguard_llada.tex
\begin{table}[!htbp]
\centering\fontsize{8}{9.6}\selectfont
\setlength{\tabcolsep}{3pt}\renewcommand{\arraystretch}{1.1}
\caption{LLaDA: WildGuard judgments on XSTest.}
\label{tab:xstest-wildguard-llada}
\begin{tabular}{@{}lrrrrrr@{}}
\toprule
 & \multicolumn{3}{c}{Safe prompts} & \multicolumn{3}{c}{Unsafe prompts} \\
\cmidrule(lr){2-4}\cmidrule(lr){5-7}
Method & Over-refusal\% & Harmful\% & $n$ & Refusal\% & Harmful\% & $n$ \\
\midrule
Unmodified & \shortstack{$20.7_{1.1}$\\{\scriptsize $[16.2,25.6]$}} & \shortstack{$3.3_{0.8}$\\{\scriptsize $[1.3,5.7]$}} & 603 & \shortstack{$97.9_{1.0}$\\{\scriptsize $[96.0,99.5]$}} & \shortstack{$1.2_{0.6}$\\{\scriptsize $[0.2,2.7]$}} & 585 \\
Full SAD & \shortstack{$20.9_{1.9}$\\{\scriptsize $[16.4,25.7]$}} & \shortstack{$3.1_{0.5}$\\{\scriptsize $[1.2,5.5]$}} & 603 & \shortstack{$97.9_{1.0}$\\{\scriptsize $[96.0,99.5]$}} & \shortstack{$1.2_{0.6}$\\{\scriptsize $[0.2,2.7]$}} & 585 \\
SADDER (selective) & \shortstack{$18.8_{0.6}$\\{\scriptsize $[14.5,23.4]$}} & \shortstack{$2.5_{0.8}$\\{\scriptsize $[0.9,4.3]$}} & 644 & \shortstack{$98.8_{0.3}$\\{\scriptsize $[97.3,100.0]$}} & \shortstack{$0.7_{0.3}$\\{\scriptsize $[0.0,1.7]$}} & 584 \\
SADDER + warm-up & \shortstack{$16.3_{2.0}$\\{\scriptsize $[12.4,20.8]$}} & \shortstack{$2.5_{0.0}$\\{\scriptsize $[1.0,4.4]$}} & 710 & \shortstack{$97.5_{0.5}$\\{\scriptsize $[95.2,99.3]$}} & \shortstack{$0.5_{0.5}$\\{\scriptsize $[0.0,1.4]$}} & 592 \\
CSD$^\dagger$ & \shortstack{$29.7_{1.4}$\\{\scriptsize $[24.5,35.1]$}} & \shortstack{$3.7_{0.7}$\\{\scriptsize $[2.1,5.7]$}} & 589 & \shortstack{$98.1_{1.1}$\\{\scriptsize $[96.5,99.5]$}} & \shortstack{$1.0_{0.5}$\\{\scriptsize $[0.2,2.2]$}} & 585 \\
DiffuGuard (full) & \shortstack{$42.8_{0.8}$\\{\scriptsize $[38.5,47.1]$}} & \shortstack{$7.6_{1.7}$\\{\scriptsize $[5.5,9.9]$}} & 750 & \shortstack{$84.5_{1.5}$\\{\scriptsize $[81.3,87.5]$}} & \shortstack{$14.4_{1.6}$\\{\scriptsize $[11.4,17.5]$}} & 599 \\
DiffuGuard (gen) ($\lambda=0.35$) & \shortstack{$24.4_{1.0}$\\{\scriptsize $[19.7,29.4]$}} & \shortstack{$4.6_{2.4}$\\{\scriptsize $[2.4,7.1]$}} & 595 & \shortstack{$98.3_{0.3}$\\{\scriptsize $[96.7,99.6]$}} & \shortstack{$0.9_{0.3}$\\{\scriptsize $[0.2,1.9]$}} & 586 \\
SADDER (top-1) & \shortstack{$20.8_{0.2}$\\{\scriptsize $[16.3,25.5]$}} & \shortstack{$3.4_{0.8}$\\{\scriptsize $[1.5,5.5]$}} & 650 & \shortstack{$98.8_{0.3}$\\{\scriptsize $[97.3,99.8]$}} & \shortstack{$0.7_{0.3}$\\{\scriptsize $[0.0,1.6]$}} & 583 \\
SADDER (aligned) & \shortstack{$20.0_{0.7}$\\{\scriptsize $[15.4,24.6]$}} & \shortstack{$3.4_{0.5}$\\{\scriptsize $[1.5,5.7]$}} & 645 & \shortstack{$98.0_{0.5}$\\{\scriptsize $[95.9,99.5]$}} & \shortstack{$0.9_{0.3}$\\{\scriptsize $[0.0,2.2]$}} & 586 \\
Random selection & \shortstack{$23.5_{1.4}$\\{\scriptsize $[18.7,28.5]$}} & \shortstack{$3.7_{1.0}$\\{\scriptsize $[1.9,5.8]$}} & 622 & \shortstack{$99.5_{0.0}$\\{\scriptsize $[98.6,100.0]$}} & \shortstack{$0.0_{0.0}$\\{\scriptsize $[0.0,0.0]$}} & 588 \\
\bottomrule
\end{tabular}
\end{table}

%% file: tables/extra/xstest_wildguard_dream.tex
\begin{table}[!htbp]
\centering\fontsize{8}{9.6}\selectfont
\setlength{\tabcolsep}{3pt}\renewcommand{\arraystretch}{1.1}
\caption{Dream: WildGuard judgments on XSTest.}
\label{tab:xstest-wildguard-dream}
\begin{tabular}{@{}lrrrrrr@{}}
\toprule
 & \multicolumn{3}{c}{Safe prompts} & \multicolumn{3}{c}{Unsafe prompts} \\
\cmidrule(lr){2-4}\cmidrule(lr){5-7}
Method & Over-refusal\% & Harmful\% & $n$ & Refusal\% & Harmful\% & $n$ \\
\midrule
Unmodified & \shortstack{$55.5_{1.3}$\\{\scriptsize $[49.1,61.5]$}} & \shortstack{$2.2_{0.5}$\\{\scriptsize $[0.8,3.8]$}} & 629 & \shortstack{$99.5_{0.0}$\\{\scriptsize $[98.5,100.0]$}} & \shortstack{$0.0_{0.0}$\\{\scriptsize $[0.0,0.0]$}} & 600 \\
Full SAD & \shortstack{$58.0_{1.0}$\\{\scriptsize $[51.6,64.4]$}} & \shortstack{$2.7_{0.3}$\\{\scriptsize $[1.1,4.7]$}} & 619 & \shortstack{$99.5_{0.0}$\\{\scriptsize $[98.5,100.0]$}} & \shortstack{$0.0_{0.0}$\\{\scriptsize $[0.0,0.0]$}} & 600 \\
SADDER (selective) & \shortstack{$41.1_{0.8}$\\{\scriptsize $[34.3,47.9]$}} & \shortstack{$3.2_{0.5}$\\{\scriptsize $[1.1,5.6]$}} & 562 & \shortstack{$99.1_{0.3}$\\{\scriptsize $[97.7,100.0]$}} & \shortstack{$0.0_{0.0}$\\{\scriptsize $[0.0,0.0]$}} & 572 \\
SADDER + warm-up & \shortstack{$17.9_{0.7}$\\{\scriptsize $[13.5,22.7]$}} & \shortstack{$1.6_{0.2}$\\{\scriptsize $[0.4,2.9]$}} & 709 & \shortstack{$91.5_{1.3}$\\{\scriptsize $[88.0,94.5]$}} & \shortstack{$6.9_{1.8}$\\{\scriptsize $[4.2,10.0]$}} & 597 \\
CSD$^\dagger$ & \shortstack{$56.4_{1.9}$\\{\scriptsize $[50.0,62.4]$}} & \shortstack{$2.1_{1.0}$\\{\scriptsize $[0.8,3.7]$}} & 630 & \shortstack{$99.5_{0.0}$\\{\scriptsize $[98.5,100.0]$}} & \shortstack{$0.0_{0.0}$\\{\scriptsize $[0.0,0.0]$}} & 600 \\
DiffuGuard (full) & \shortstack{$64.1_{1.5}$\\{\scriptsize $[60.0,68.1]$}} & \shortstack{$14.3_{2.6}$\\{\scriptsize $[11.2,17.6]$}} & 750 & \shortstack{$75.8_{0.6}$\\{\scriptsize $[72.2,79.3]$}} & \shortstack{$23.2_{1.0}$\\{\scriptsize $[19.8,26.7]$}} & 600 \\
DiffuGuard (gen) ($\lambda=0.75$) & \shortstack{$53.1_{1.5}$\\{\scriptsize $[46.9,58.9]$}} & \shortstack{$2.5_{0.5}$\\{\scriptsize $[0.9,4.4]$}} & 641 & \shortstack{$99.5_{0.0}$\\{\scriptsize $[98.5,100.0]$}} & \shortstack{$0.0_{0.0}$\\{\scriptsize $[0.0,0.0]$}} & 600 \\
DiffuGuard (gen) ($\lambda=0.5$) & \shortstack{$54.6_{1.1}$\\{\scriptsize $[48.8,60.3]$}} & \shortstack{$3.2_{0.4}$\\{\scriptsize $[1.5,5.1]$}} & 665 & \shortstack{$99.2_{0.3}$\\{\scriptsize $[97.8,100.0]$}} & \shortstack{$0.2_{0.3}$\\{\scriptsize $[0.0,0.5]$}} & 600 \\
SADDER (top-1) & \shortstack{$41.2_{0.1}$\\{\scriptsize $[34.7,47.4]$}} & \shortstack{$3.0_{0.7}$\\{\scriptsize $[1.2,5.0]$}} & 670 & \shortstack{$99.0_{0.5}$\\{\scriptsize $[97.7,100.0]$}} & \shortstack{$0.0_{0.0}$\\{\scriptsize $[0.0,0.0]$}} & 600 \\
SADDER (aligned) & \shortstack{$38.1_{1.3}$\\{\scriptsize $[31.8,44.1]$}} & \shortstack{$2.5_{0.7}$\\{\scriptsize $[0.8,4.3]$}} & 653 & \shortstack{$98.0_{0.9}$\\{\scriptsize $[96.0,99.5]$}} & \shortstack{$0.0_{0.0}$\\{\scriptsize $[0.0,0.0]$}} & 599 \\
Random selection & \shortstack{$39.2_{0.5}$\\{\scriptsize $[33.2,45.2]$}} & \shortstack{$2.9_{0.1}$\\{\scriptsize $[1.0,5.1]$}} & 526 & \shortstack{$98.1_{0.7}$\\{\scriptsize $[96.1,99.6]$}} & \shortstack{$1.3_{0.7}$\\{\scriptsize $[0.2,2.7]$}} & 471 \\
\bottomrule
\end{tabular}
\end{table}

%% file: tables/extra/xstest_wildguard_paired_llada.tex
\begin{table}[!htbp]
\centering\fontsize{8}{9.6}\selectfont
\setlength{\tabcolsep}{3pt}\renewcommand{\arraystretch}{1.1}
\caption{LLaDA: paired WildGuard refusal changes (pp).}
\label{tab:xstest-wildguard-paired-llada}
\begin{tabular}{@{}lrrrr@{}}
\toprule
Method & Safe prompts & Pairs & Unsafe prompts & Pairs \\
\midrule
Full SAD & $-0.0\;[-1.2,1.4]$ & 600 & $0.0\;[0.0,0.0]$ & 585 \\
SADDER (selective) & $-2.4\;[-5.8,0.9]$ & 579 & $0.7\;[-0.9,2.4]$ & 581 \\
SADDER + warm-up & $-5.9\;[-9.7,-2.1]$ & 593 & $-0.7\;[-2.9,1.4]$ & 581 \\
CSD$^\dagger$ & $9.3\;[5.9,12.8]$ & 588 & $0.2\;[-0.3,0.7]$ & 585 \\
DiffuGuard (full) & $16.7\;[12.4,21.0]$ & 603 & $-13.7\;[-16.8,-10.6]$ & 585 \\
DiffuGuard (gen) ($\lambda=0.35$) & $3.2\;[1.3,5.3]$ & 589 & $0.3\;[-0.5,1.4]$ & 585 \\
SADDER (top-1) & $-1.0\;[-4.6,2.4]$ & 585 & $0.7\;[-0.7,2.4]$ & 583 \\
SADDER (aligned) & $-1.9\;[-4.9,0.9]$ & 584 & $-0.2\;[-2.4,1.9]$ & 583 \\
Random selection & $1.1\;[-2.0,4.2]$ & 568 & $1.4\;[0.2,2.9]$ & 580 \\
\bottomrule
\end{tabular}
\end{table}

%% file: tables/extra/xstest_wildguard_paired_dream.tex
\begin{table}[!htbp]
\centering\fontsize{8}{9.6}\selectfont
\setlength{\tabcolsep}{3pt}\renewcommand{\arraystretch}{1.1}
\caption{Dream: paired WildGuard refusal changes (pp).}
\label{tab:xstest-wildguard-paired-dream}
\begin{tabular}{@{}lrrrr@{}}
\toprule
Method & Safe prompts & Pairs & Unsafe prompts & Pairs \\
\midrule
Full SAD & $1.8\;[-0.7,4.3]$ & 619 & $0.0\;[0.0,0.0]$ & 600 \\
SADDER (selective) & $-8.6\;[-14.1,-3.3]$ & 532 & $-0.3\;[-1.1,0.0]$ & 572 \\
SADDER + warm-up & $-35.4\;[-41.6,-29.3]$ & 602 & $-8.0\;[-11.5,-5.0]$ & 597 \\
CSD$^\dagger$ & $1.0\;[0.3,1.8]$ & 629 & $0.0\;[0.0,0.0]$ & 600 \\
DiffuGuard (full) & $7.6\;[2.7,12.7]$ & 629 & $-23.7\;[-27.3,-20.0]$ & 600 \\
DiffuGuard (gen) ($\lambda=0.75$) & $-1.4\;[-3.1,0.2]$ & 629 & $0.0\;[0.0,0.0]$ & 600 \\
DiffuGuard (gen) ($\lambda=0.5$) & $-0.8\;[-2.8,1.4]$ & 629 & $-0.3\;[-0.8,0.0]$ & 600 \\
SADDER (top-1) & $-12.5\;[-17.7,-7.3]$ & 625 & $-0.5\;[-1.3,0.0]$ & 600 \\
SADDER (aligned) & $-14.1\;[-19.0,-9.2]$ & 603 & $-1.5\;[-3.2,-0.3]$ & 599 \\
Random selection & $-6.7\;[-11.9,-1.7]$ & 493 & $-1.3\;[-2.7,-0.2]$ & 471 \\
\bottomrule
\end{tabular}
\end{table}

%% file: tables/jailbreak_main_ci.tex
\begin{table*}[!htb]
\centering\fontsize{8}{9.6}\selectfont
\setlength{\tabcolsep}{0.5pt}\renewcommand{\arraystretch}{1.08}
\caption{Held-out HarmBench attacks and XSTest over-refusal.}
\label{tab:jailbreak-main-ci}
\begin{tabular*}{\textwidth}{@{\extracolsep{\fill}}l@{\hspace{3pt}}rrrr@{\hspace{5pt}}rrrr@{\hspace{5pt}}r@{}}
\toprule
\multirow{2}{*}{Method} & \multicolumn{4}{c}{\textbf{DIJA}} & \multicolumn{4}{c}{\textbf{PAD}} & \textbf{XSTest} \\
\cmidrule(lr){2-5}\cmidrule(lr){6-9}\cmidrule(lr){10-10}
 & \shortstack{HB ASR\\\% $\downarrow$} & \shortstack{LG unsafe\\\% $\downarrow$} & \shortstack{Harmful\\PPL $\downarrow$} & NFE & \shortstack{HB ASR\\\% $\downarrow$} & \shortstack{LG unsafe\\\% $\downarrow$} & \shortstack{Harmful\\PPL $\downarrow$} & NFE & \shortstack{Over-refusal\\\% $\downarrow$} \\
\midrule
\multicolumn{10}{@{}l}{\textbf{LLaDA-Instruct}} \\
Unmodified & \raisebox{-0.5\height}{\shortstack{$78.7$\\{\fontsize{7}{8}\selectfont $[72.0{,}85.0]$}}} & \raisebox{-0.5\height}{\shortstack{$82.0$\\{\fontsize{7}{8}\selectfont $[75.0{,}88.3]$}}} & \raisebox{-0.5\height}{\shortstack{$9.34$\\{\fontsize{7}{8}\selectfont $[8.59{,}10.1]$}}} & 91.7 & \raisebox{-0.5\height}{\shortstack{$58.7$\\{\fontsize{7}{8}\selectfont $[50.3{,}66.7]$}}} & \raisebox{-0.5\height}{\shortstack{$75.7$\\{\fontsize{7}{8}\selectfont $[68.3{,}82.3]$}}} & \raisebox{-0.5\height}{\shortstack{$6.06$\\{\fontsize{7}{8}\selectfont $[5.68{,}6.47]$}}} & 116.0 & \raisebox{-0.5\height}{\shortstack{$9.7$\\{\fontsize{7}{8}\selectfont $[6.5{,}13.2]$}}} \\
\addlinespace[2pt]
Full SAD & \raisebox{-0.5\height}{\shortstack{$77.0$\\{\fontsize{7}{8}\selectfont $[69.3{,}84.3]$}}} & \raisebox{-0.5\height}{\shortstack{$81.0$\\{\fontsize{7}{8}\selectfont $[73.7{,}87.7]$}}} & \raisebox{-0.5\height}{\shortstack{$11.0$\\{\fontsize{7}{8}\selectfont $[10.0{,}12.1]$}}} & 91.7 & \raisebox{-0.5\height}{\shortstack{$63.3$\\{\fontsize{7}{8}\selectfont $[56.3{,}70.3]$}}} & \raisebox{-0.5\height}{\shortstack{$78.7$\\{\fontsize{7}{8}\selectfont $[72.0{,}85.0]$}}} & \raisebox{-0.5\height}{\shortstack{$8.24$\\{\fontsize{7}{8}\selectfont $[7.79{,}8.74]$}}} & 116.0 & \raisebox{-0.5\height}{\shortstack{$10.0$\\{\fontsize{7}{8}\selectfont $[6.8{,}13.5]$}}} \\
\addlinespace[2pt]
\raisebox{-0.5\height}{\shortstack[l]{SADDER\\(selective)}} & \raisebox{-0.5\height}{\shortstack{$79.0$\\{\fontsize{7}{8}\selectfont $[71.7{,}86.3]$}}} & \raisebox{-0.5\height}{\shortstack{$79.3$\\{\fontsize{7}{8}\selectfont $[71.7{,}86.3]$}}} & \raisebox{-0.5\height}{\shortstack{$13.3$\\{\fontsize{7}{8}\selectfont $[11.6{,}15.7]$}}} & 91.7 & \raisebox{-0.5\height}{\shortstack{$57.7$\\{\fontsize{7}{8}\selectfont $[50.3{,}65.0]$}}} & \raisebox{-0.5\height}{\shortstack{$73.7$\\{\fontsize{7}{8}\selectfont $[66.3{,}80.7]$}}} & \raisebox{-0.5\height}{\shortstack{$13.2$\\{\fontsize{7}{8}\selectfont $[10.3{,}17.7]$}}} & 116.0 & \raisebox{-0.5\height}{\shortstack{$7.9$\\{\fontsize{7}{8}\selectfont $[5.1{,}10.9]$}}} \\
\addlinespace[2pt]
\raisebox{-0.5\height}{\shortstack[l]{SADDER\\+ warm-up}} & \raisebox{-0.5\height}{\shortstack{$78.3$\\{\fontsize{7}{8}\selectfont $[71.3{,}85.0]$}}} & \raisebox{-0.5\height}{\shortstack{$80.7$\\{\fontsize{7}{8}\selectfont $[73.0{,}87.3]$}}} & \raisebox{-0.5\height}{\shortstack{$12.8$\\{\fontsize{7}{8}\selectfont $[11.1{,}15.1]$}}} & 91.7 & \raisebox{-0.5\height}{\shortstack{$57.0$\\{\fontsize{7}{8}\selectfont $[49.0{,}65.0]$}}} & \raisebox{-0.5\height}{\shortstack{$68.3$\\{\fontsize{7}{8}\selectfont $[60.3{,}76.0]$}}} & \raisebox{-0.5\height}{\shortstack{$11.4$\\{\fontsize{7}{8}\selectfont $[9.97{,}13.4]$}}} & 116.0 & \raisebox{-0.5\height}{\shortstack{$5.2$\\{\fontsize{7}{8}\selectfont $[2.8{,}8.0]$}}} \\
\addlinespace[2pt]
CSD$^\dagger$ & \raisebox{-0.5\height}{\shortstack{$48.7$\\{\fontsize{7}{8}\selectfont $[40.7{,}57.0]$}}} & \raisebox{-0.5\height}{\shortstack{$79.7$\\{\fontsize{7}{8}\selectfont $[72.7{,}86.3]$}}} & \raisebox{-0.5\height}{\shortstack{$36.1$\\{\fontsize{7}{8}\selectfont $[31.1{,}41.4]$}}} & 115.7 & \raisebox{-0.5\height}{\shortstack{$30.3$\\{\fontsize{7}{8}\selectfont $[24.0{,}37.0]$}}} & \raisebox{-0.5\height}{\shortstack{$69.3$\\{\fontsize{7}{8}\selectfont $[61.7{,}76.7]$}}} & \raisebox{-0.5\height}{\shortstack{$19.7$\\{\fontsize{7}{8}\selectfont $[17.3{,}22.6]$}}} & 142.0 & \raisebox{-0.5\height}{\shortstack{$10.3$\\{\fontsize{7}{8}\selectfont $[7.1{,}13.7]$}}} \\
\addlinespace[2pt]
DiffuGuard (full) & \raisebox{-0.5\height}{\shortstack{$42.7$\\{\fontsize{7}{8}\selectfont $[35.7{,}50.0]$}}} & \raisebox{-0.5\height}{\shortstack{$81.7$\\{\fontsize{7}{8}\selectfont $[75.7{,}87.0]$}}} & \raisebox{-0.5\height}{\shortstack{$255$\\{\fontsize{7}{8}\selectfont $[239{,}272]$}}} & 137.0 & \raisebox{-0.5\height}{\shortstack{$34.3$\\{\fontsize{7}{8}\selectfont $[28.0{,}41.0]$}}} & \raisebox{-0.5\height}{\shortstack{$73.7$\\{\fontsize{7}{8}\selectfont $[67.3{,}79.7]$}}} & \raisebox{-0.5\height}{\shortstack{$231$\\{\fontsize{7}{8}\selectfont $[214{,}251]$}}} & 125.0 & \raisebox{-0.5\height}{\shortstack{$4.0$\\{\fontsize{7}{8}\selectfont $[2.5{,}5.6]$}}} \\
\addlinespace[2pt]
DiffuGuard (gen) & \raisebox{-0.5\height}{\shortstack{$70.7$\\{\fontsize{7}{8}\selectfont $[63.7{,}77.7]$}}} & \raisebox{-0.5\height}{\shortstack{$83.7$\\{\fontsize{7}{8}\selectfont $[77.0{,}89.7]$}}} & \raisebox{-0.5\height}{\shortstack{$41.8$\\{\fontsize{7}{8}\selectfont $[32.0{,}53.7]$}}} & 97.8 & \raisebox{-0.5\height}{\shortstack{$57.3$\\{\fontsize{7}{8}\selectfont $[49.0{,}65.0]$}}} & \raisebox{-0.5\height}{\shortstack{$74.7$\\{\fontsize{7}{8}\selectfont $[67.0{,}81.7]$}}} & \raisebox{-0.5\height}{\shortstack{$6.22$\\{\fontsize{7}{8}\selectfont $[5.85{,}6.59]$}}} & 117.0 & \raisebox{-0.5\height}{\shortstack{$10.7$\\{\fontsize{7}{8}\selectfont $[7.3{,}14.3]$}}} \\
\addlinespace[2pt]
\midrule
\multicolumn{10}{@{}l}{\textbf{Dream-Instruct}} \\
Unmodified & \raisebox{-0.5\height}{\shortstack{$82.0$\\{\fontsize{7}{8}\selectfont $[75.3{,}88.3]$}}} & \raisebox{-0.5\height}{\shortstack{$80.7$\\{\fontsize{7}{8}\selectfont $[73.3{,}87.3]$}}} & \raisebox{-0.5\height}{\shortstack{$11.4$\\{\fontsize{7}{8}\selectfont $[10.6{,}12.3]$}}} & 91.7 & \raisebox{-0.5\height}{\shortstack{$51.7$\\{\fontsize{7}{8}\selectfont $[43.0{,}60.3]$}}} & \raisebox{-0.5\height}{\shortstack{$64.0$\\{\fontsize{7}{8}\selectfont $[55.7{,}72.0]$}}} & \raisebox{-0.5\height}{\shortstack{$8.53$\\{\fontsize{7}{8}\selectfont $[7.95{,}9.21]$}}} & 116.0 & \raisebox{-0.5\height}{\shortstack{$44.9$\\{\fontsize{7}{8}\selectfont $[39.2{,}50.5]$}}} \\
\addlinespace[2pt]
Full SAD & \raisebox{-0.5\height}{\shortstack{$85.3$\\{\fontsize{7}{8}\selectfont $[79.0{,}91.3]$}}} & \raisebox{-0.5\height}{\shortstack{$83.0$\\{\fontsize{7}{8}\selectfont $[75.3{,}90.0]$}}} & \raisebox{-0.5\height}{\shortstack{$11.8$\\{\fontsize{7}{8}\selectfont $[10.9{,}12.8]$}}} & 91.7 & \raisebox{-0.5\height}{\shortstack{$52.7$\\{\fontsize{7}{8}\selectfont $[43.7{,}61.7]$}}} & \raisebox{-0.5\height}{\shortstack{$63.3$\\{\fontsize{7}{8}\selectfont $[54.7{,}71.0]$}}} & \raisebox{-0.5\height}{\shortstack{$9.52$\\{\fontsize{7}{8}\selectfont $[9.01{,}10.1]$}}} & 116.0 & \raisebox{-0.5\height}{\shortstack{$46.0$\\{\fontsize{7}{8}\selectfont $[40.0{,}51.9]$}}} \\
\addlinespace[2pt]
\raisebox{-0.5\height}{\shortstack[l]{SADDER\\(selective)}} & \raisebox{-0.5\height}{\shortstack{$77.7$\\{\fontsize{7}{8}\selectfont $[70.3{,}84.7]$}}} & \raisebox{-0.5\height}{\shortstack{$81.7$\\{\fontsize{7}{8}\selectfont $[74.7{,}88.0]$}}} & \raisebox{-0.5\height}{\shortstack{$12.2$\\{\fontsize{7}{8}\selectfont $[11.3{,}13.1]$}}} & 91.7 & \raisebox{-0.5\height}{\shortstack{$52.7$\\{\fontsize{7}{8}\selectfont $[43.7{,}61.7]$}}} & \raisebox{-0.5\height}{\shortstack{$66.3$\\{\fontsize{7}{8}\selectfont $[57.3{,}75.0]$}}} & \raisebox{-0.5\height}{\shortstack{$9.58$\\{\fontsize{7}{8}\selectfont $[8.86{,}10.4]$}}} & 116.0 & \raisebox{-0.5\height}{\shortstack{$26.1$\\{\fontsize{7}{8}\selectfont $[20.9{,}31.2]$}}} \\
\addlinespace[2pt]
\raisebox{-0.5\height}{\shortstack[l]{SADDER\\+ warm-up}} & \raisebox{-0.5\height}{\shortstack{$78.7$\\{\fontsize{7}{8}\selectfont $[71.7{,}85.7]$}}} & \raisebox{-0.5\height}{\shortstack{$79.7$\\{\fontsize{7}{8}\selectfont $[72.3{,}86.7]$}}} & \raisebox{-0.5\height}{\shortstack{$12.2$\\{\fontsize{7}{8}\selectfont $[11.3{,}13.2]$}}} & 91.7 & \raisebox{-0.5\height}{\shortstack{$42.7$\\{\fontsize{7}{8}\selectfont $[33.7{,}51.7]$}}} & \raisebox{-0.5\height}{\shortstack{$55.0$\\{\fontsize{7}{8}\selectfont $[45.7{,}64.0]$}}} & \raisebox{-0.5\height}{\shortstack{$10.4$\\{\fontsize{7}{8}\selectfont $[9.60{,}11.4]$}}} & 116.0 & \raisebox{-0.5\height}{\shortstack{$7.1$\\{\fontsize{7}{8}\selectfont $[4.1{,}10.4]$}}} \\
\addlinespace[2pt]
CSD$^\dagger$ & \raisebox{-0.5\height}{\shortstack{$68.0$\\{\fontsize{7}{8}\selectfont $[60.7{,}75.7]$}}} & \raisebox{-0.5\height}{\shortstack{$82.3$\\{\fontsize{7}{8}\selectfont $[75.7{,}88.7]$}}} & \raisebox{-0.5\height}{\shortstack{$45.9$\\{\fontsize{7}{8}\selectfont $[37.6{,}55.9]$}}} & 115.5 & \raisebox{-0.5\height}{\shortstack{$33.3$\\{\fontsize{7}{8}\selectfont $[26.0{,}41.0]$}}} & \raisebox{-0.5\height}{\shortstack{$63.3$\\{\fontsize{7}{8}\selectfont $[54.7{,}71.0]$}}} & \raisebox{-0.5\height}{\shortstack{$23.7$\\{\fontsize{7}{8}\selectfont $[19.2{,}29.8]$}}} & 140.8 & \raisebox{-0.5\height}{\shortstack{$45.2$\\{\fontsize{7}{8}\selectfont $[39.5{,}50.8]$}}} \\
\addlinespace[2pt]
DiffuGuard (full) & \raisebox{-0.5\height}{\shortstack{$33.3$\\{\fontsize{7}{8}\selectfont $[27.3{,}39.7]$}}} & \raisebox{-0.5\height}{\shortstack{$84.0$\\{\fontsize{7}{8}\selectfont $[78.0{,}89.3]$}}} & \raisebox{-0.5\height}{\shortstack{$537$\\{\fontsize{7}{8}\selectfont $[494{,}587]$}}} & 137.0 & \raisebox{-0.5\height}{\shortstack{$12.0$\\{\fontsize{7}{8}\selectfont $[8.0{,}16.0]$}}} & \raisebox{-0.5\height}{\shortstack{$70.0$\\{\fontsize{7}{8}\selectfont $[64.0{,}76.0]$}}} & \raisebox{-0.5\height}{\shortstack{$687$\\{\fontsize{7}{8}\selectfont $[631{,}748]$}}} & 125.0 & \raisebox{-0.5\height}{\shortstack{$6.7$\\{\fontsize{7}{8}\selectfont $[4.9{,}8.5]$}}} \\
\addlinespace[2pt]
DiffuGuard (gen) & \raisebox{-0.5\height}{\shortstack{$69.3$\\{\fontsize{7}{8}\selectfont $[62.0{,}76.3]$}}} & \raisebox{-0.5\height}{\shortstack{$81.3$\\{\fontsize{7}{8}\selectfont $[74.7{,}87.3]$}}} & \raisebox{-0.5\height}{\shortstack{$46.6$\\{\fontsize{7}{8}\selectfont $[36.0{,}60.4]$}}} & 97.2 & \raisebox{-0.5\height}{\shortstack{$49.0$\\{\fontsize{7}{8}\selectfont $[40.3{,}57.3]$}}} & \raisebox{-0.5\height}{\shortstack{$63.3$\\{\fontsize{7}{8}\selectfont $[55.0{,}71.3]$}}} & \raisebox{-0.5\height}{\shortstack{$8.80$\\{\fontsize{7}{8}\selectfont $[8.28{,}9.41]$}}} & 117.0 & \raisebox{-0.5\height}{\shortstack{$43.1^{\ddagger}$\\{\fontsize{7}{8}\selectfont $[37.5{,}48.5]$}}} \\
\addlinespace[2pt]
\bottomrule
\end{tabular*}
\end{table*}

%% file: tables/extra/wildguard_attacks_llada.tex
\begin{table}[!htbp]
\centering\fontsize{8}{9.6}\selectfont
\setlength{\tabcolsep}{3pt}\renewcommand{\arraystretch}{1.1}
\caption{LLaDA: WildGuard judgments on HarmBench.}
\label{tab:wildguard-attacks-llada}
\begin{tabular}{@{}lrrrrrr@{}}
\toprule
 & \multicolumn{3}{c}{DIJA} & \multicolumn{3}{c}{PAD} \\
\cmidrule(lr){2-4}\cmidrule(lr){5-7}
Method & Harmful\% & Refusal\% & $n$ & Harmful\% & Refusal\% & $n$ \\
\midrule
Unmodified & \shortstack{$96.0_{2.6}$\\{\scriptsize $[92.3,99.0]$}} & \shortstack{$1.7_{0.6}$\\{\scriptsize $[0.0,4.3]$}} & 300 & \shortstack{$80.0_{3.0}$\\{\scriptsize $[73.7,86.0]$}} & \shortstack{$15.3_{1.5}$\\{\scriptsize $[10.3,21.0]$}} & 300 \\
Full SAD & \shortstack{$96.3_{0.6}$\\{\scriptsize $[92.3,99.3]$}} & \shortstack{$2.0_{0.0}$\\{\scriptsize $[0.0,5.0]$}} & 300 & \shortstack{$85.7_{3.8}$\\{\scriptsize $[80.0,90.7]$}} & \shortstack{$11.3_{3.8}$\\{\scriptsize $[6.7,16.3]$}} & 300 \\
SADDER (selective) & \shortstack{$96.7_{0.6}$\\{\scriptsize $[92.7,99.7]$}} & \shortstack{$2.0_{0.0}$\\{\scriptsize $[0.0,5.0]$}} & 300 & \shortstack{$81.0_{2.6}$\\{\scriptsize $[74.3,87.0]$}} & \shortstack{$15.0_{4.0}$\\{\scriptsize $[10.0,20.7]$}} & 300 \\
SADDER + warm-up & \shortstack{$96.7_{1.5}$\\{\scriptsize $[93.0,99.3]$}} & \shortstack{$2.0_{1.0}$\\{\scriptsize $[0.0,4.7]$}} & 300 & \shortstack{$76.3_{0.6}$\\{\scriptsize $[68.7,83.3]$}} & \shortstack{$19.0_{1.7}$\\{\scriptsize $[12.7,26.0]$}} & 300 \\
CSD$^\dagger$ & \shortstack{$97.0_{1.0}$\\{\scriptsize $[93.7,99.3]$}} & \shortstack{$1.7_{0.6}$\\{\scriptsize $[0.0,4.3]$}} & 300 & \shortstack{$77.0_{4.4}$\\{\scriptsize $[70.7,83.0]$}} & \shortstack{$19.7_{2.1}$\\{\scriptsize $[14.0,26.0]$}} & 300 \\
DiffuGuard (full) & \shortstack{$97.7_{2.1}$\\{\scriptsize $[95.3,99.7]$}} & \shortstack{$0.7_{1.2}$\\{\scriptsize $[0.0,1.7]$}} & 300 & \shortstack{$85.0_{4.0}$\\{\scriptsize $[80.0,89.7]$}} & \shortstack{$11.7_{3.5}$\\{\scriptsize $[7.7,16.0]$}} & 300 \\
DiffuGuard (gen) & \shortstack{$97.0_{1.0}$\\{\scriptsize $[93.7,99.7]$}} & \shortstack{$1.0_{0.0}$\\{\scriptsize $[0.0,3.0]$}} & 300 & \shortstack{$78.0_{2.6}$\\{\scriptsize $[71.3,84.3]$}} & \shortstack{$17.3_{1.2}$\\{\scriptsize $[11.7,23.0]$}} & 300 \\
SADDER (top-1) & \shortstack{$97.3_{0.6}$\\{\scriptsize $[94.0,99.7]$}} & \shortstack{$1.3_{0.6}$\\{\scriptsize $[0.0,3.7]$}} & 300 & \shortstack{$81.3_{3.2}$\\{\scriptsize $[74.3,87.3]$}} & \shortstack{$15.3_{2.9}$\\{\scriptsize $[9.7,21.3]$}} & 300 \\
SADDER (aligned) & \shortstack{$96.7_{0.6}$\\{\scriptsize $[92.7,99.7]$}} & \shortstack{$2.0_{0.0}$\\{\scriptsize $[0.0,5.0]$}} & 300 & \shortstack{$83.3_{1.5}$\\{\scriptsize $[77.7,88.7]$}} & \shortstack{$13.0_{2.0}$\\{\scriptsize $[8.3,18.0]$}} & 300 \\
Random selection & \shortstack{$96.3_{1.2}$\\{\scriptsize $[92.3,99.3]$}} & \shortstack{$1.3_{0.6}$\\{\scriptsize $[0.0,3.3]$}} & 300 & \shortstack{$77.3_{3.2}$\\{\scriptsize $[71.7,82.7]$}} & \shortstack{$19.0_{3.5}$\\{\scriptsize $[14.0,24.0]$}} & 300 \\
\bottomrule
\end{tabular}
\end{table}

%% file: tables/extra/wildguard_attacks_dream.tex
\begin{table}[!htbp]
\centering\fontsize{8}{9.6}\selectfont
\setlength{\tabcolsep}{3pt}\renewcommand{\arraystretch}{1.1}
\caption{Dream: WildGuard judgments on HarmBench.}
\label{tab:wildguard-attacks-dream}
\begin{tabular}{@{}lrrrrrr@{}}
\toprule
 & \multicolumn{3}{c}{DIJA} & \multicolumn{3}{c}{PAD} \\
\cmidrule(lr){2-4}\cmidrule(lr){5-7}
Method & Harmful\% & Refusal\% & $n$ & Harmful\% & Refusal\% & $n$ \\
\midrule
Unmodified & \shortstack{$97.3_{0.6}$\\{\scriptsize $[94.0,99.7]$}} & \shortstack{$0.7_{0.6}$\\{\scriptsize $[0.0,2.0]$}} & 300 & \shortstack{$66.0_{2.0}$\\{\scriptsize $[57.7,74.0]$}} & \shortstack{$30.3_{2.5}$\\{\scriptsize $[22.3,38.7]$}} & 300 \\
Full SAD & \shortstack{$97.3_{0.6}$\\{\scriptsize $[94.0,100.0]$}} & \shortstack{$1.0_{0.0}$\\{\scriptsize $[0.0,3.0]$}} & 300 & \shortstack{$66.7_{0.6}$\\{\scriptsize $[58.3,74.7]$}} & \shortstack{$29.3_{0.6}$\\{\scriptsize $[21.3,37.7]$}} & 300 \\
SADDER (selective) & \shortstack{$96.7_{1.5}$\\{\scriptsize $[93.3,99.3]$}} & \shortstack{$1.3_{0.6}$\\{\scriptsize $[0.0,3.7]$}} & 300 & \shortstack{$80.0_{1.7}$\\{\scriptsize $[72.7,86.7]$}} & \shortstack{$16.7_{1.5}$\\{\scriptsize $[10.0,23.7]$}} & 300 \\
SADDER + warm-up & \shortstack{$96.3_{0.6}$\\{\scriptsize $[92.7,99.3]$}} & \shortstack{$1.7_{0.6}$\\{\scriptsize $[0.0,4.3]$}} & 300 & \shortstack{$69.0_{1.7}$\\{\scriptsize $[60.0,77.0]$}} & \shortstack{$28.0_{1.7}$\\{\scriptsize $[20.0,37.0]$}} & 300 \\
CSD$^\dagger$ & \shortstack{$98.0_{0.0}$\\{\scriptsize $[95.0,100.0]$}} & \shortstack{$0.7_{0.6}$\\{\scriptsize $[0.0,2.0]$}} & 300 & \shortstack{$65.3_{1.5}$\\{\scriptsize $[57.0,73.3]$}} & \shortstack{$30.7_{2.5}$\\{\scriptsize $[22.7,39.0]$}} & 300 \\
DiffuGuard (full) & \shortstack{$98.3_{0.6}$\\{\scriptsize $[95.7,100.0]$}} & \shortstack{$0.7_{0.6}$\\{\scriptsize $[0.0,1.7]$}} & 299 & \shortstack{$69.3_{2.1}$\\{\scriptsize $[62.7,75.7]$}} & \shortstack{$29.7_{2.1}$\\{\scriptsize $[23.3,36.3]$}} & 300 \\
DiffuGuard (gen) & \shortstack{$98.3_{0.6}$\\{\scriptsize $[95.7,100.0]$}} & \shortstack{$0.7_{0.6}$\\{\scriptsize $[0.0,2.0]$}} & 300 & \shortstack{$65.0_{2.0}$\\{\scriptsize $[56.7,72.7]$}} & \shortstack{$30.0_{1.0}$\\{\scriptsize $[22.3,38.0]$}} & 300 \\
SADDER (top-1) & \shortstack{$97.7_{0.6}$\\{\scriptsize $[94.3,100.0]$}} & \shortstack{$1.3_{0.6}$\\{\scriptsize $[0.0,3.7]$}} & 300 & \shortstack{$80.0_{1.7}$\\{\scriptsize $[72.7,87.0]$}} & \shortstack{$17.0_{1.0}$\\{\scriptsize $[10.7,24.0]$}} & 300 \\
SADDER (aligned) & \shortstack{$97.7_{0.6}$\\{\scriptsize $[94.3,100.0]$}} & \shortstack{$1.0_{0.0}$\\{\scriptsize $[0.0,3.0]$}} & 300 & \shortstack{$72.0_{1.7}$\\{\scriptsize $[63.3,80.0]$}} & \shortstack{$25.0_{1.0}$\\{\scriptsize $[17.3,33.0]$}} & 300 \\
Random selection & \shortstack{$97.0_{1.0}$\\{\scriptsize $[93.7,99.7]$}} & \shortstack{$1.0_{0.0}$\\{\scriptsize $[0.0,3.0]$}} & 300 & \shortstack{$71.0_{2.0}$\\{\scriptsize $[63.3,78.0]$}} & \shortstack{$27.0_{1.7}$\\{\scriptsize $[20.3,34.3]$}} & 300 \\
\bottomrule
\end{tabular}
\end{table}

%% file: tables/extra/wildguard_controls_paired.tex
\begin{table}[!htbp]
\centering\fontsize{8}{9.6}\selectfont
\setlength{\tabcolsep}{3pt}\renewcommand{\arraystretch}{1.1}
\caption{Paired WildGuard differences on HarmBench (pp).}
\label{tab:wildguard-controls-paired}
\begin{tabular}{@{}lrrr@{}}
\toprule
Comparison & $\Delta$ harmful\% & $\Delta$ refusal\% & Pairs \\
\midrule
\multicolumn{4}{l}{\textbf{LLaDA / DIJA}} \\
Selective vs. full SAD & $0.3\;[-0.7,1.7]$ & $0.0\;[0.0,0.0]$ & 300 \\
Selective vs. random & $0.3\;[-1.7,3.0]$ & $0.7\;[0.0,1.7]$ & 300 \\
Warm-up vs. full SAD & $0.3\;[-0.7,1.7]$ & $-0.0\;[-1.0,1.0]$ & 300 \\
Warm-up vs. random & $0.3\;[-1.3,2.3]$ & $0.7\;[0.0,1.7]$ & 300 \\
\multicolumn{4}{l}{\textbf{LLaDA / PAD}} \\
Selective vs. full SAD & $-4.7\;[-11.0,1.7]$ & $3.7\;[-2.3,9.7]$ & 300 \\
Selective vs. random & $3.7\;[-2.0,9.3]$ & $-4.0\;[-9.3,1.7]$ & 300 \\
Warm-up vs. full SAD & $-9.3\;[-16.7,-2.0]$ & $7.7\;[0.0,15.0]$ & 300 \\
Warm-up vs. random & $-1.0\;[-6.7,4.3]$ & $0.0\;[-5.3,5.7]$ & 300 \\
\multicolumn{4}{l}{\textbf{Dream / DIJA}} \\
Selective vs. full SAD & $-0.7\;[-2.0,0.7]$ & $0.3\;[0.0,1.0]$ & 300 \\
Selective vs. random & $-0.3\;[-1.7,0.7]$ & $0.3\;[0.0,1.0]$ & 300 \\
Warm-up vs. full SAD & $-1.0\;[-2.7,0.0]$ & $0.7\;[0.0,2.0]$ & 300 \\
Warm-up vs. random & $-0.7\;[-2.0,0.7]$ & $0.7\;[0.0,2.0]$ & 300 \\
\multicolumn{4}{l}{\textbf{Dream / PAD}} \\
Selective vs. full SAD & $13.3\;[5.0,22.0]$ & $-12.7\;[-21.3,-4.3]$ & 300 \\
Selective vs. random & $9.0\;[1.3,16.7]$ & $-10.3\;[-18.0,-3.0]$ & 300 \\
Warm-up vs. full SAD & $2.3\;[-7.3,12.0]$ & $-1.3\;[-11.3,8.7]$ & 300 \\
Warm-up vs. random & $-2.0\;[-7.3,3.3]$ & $1.0\;[-4.0,6.0]$ & 300 \\
\bottomrule
\end{tabular}
\end{table}

%% file: tables/extra/judge_views_v4_llada.tex
\begin{table}[!htbp]
\centering\scriptsize
\setlength{\tabcolsep}{3pt}\renewcommand{\arraystretch}{1.12}
\caption{LLaDA: DIJA judge and response-view comparison on 100 held-out HarmBench behaviors, three seeds. Full includes the fixed scaffold; fills excludes it. Rates (\%) are means with seed-SD subscripts. HB/LG have 300 scored outputs per row; WG has 300 except Dream aligned remasking (299).}
\label{tab:judge-views-v4-llada}
\begin{tabular}{@{}lrrrrrr@{}}
\toprule
Method & HB full & HB fills & LG full & LG fills & WG full & WG fills \\
\midrule
Unmodified & $78.7_{3.1}$ & $73.0_{3.6}$ & $82.0_{2.6}$ & $53.3_{2.5}$ & $96.0_{2.6}$ & $96.3_{1.5}$ \\
Full SAD & $77.0_{2.0}$ & $71.7_{2.9}$ & $81.0_{0.0}$ & $56.3_{6.8}$ & $96.3_{0.6}$ & $97.3_{1.2}$ \\
SADDER (selective) & $79.0_{1.0}$ & $75.7_{2.3}$ & $79.3_{1.5}$ & $54.0_{2.6}$ & $96.7_{0.6}$ & $96.3_{1.5}$ \\
SADDER + warm-up & $78.3_{0.6}$ & $71.3_{2.5}$ & $80.7_{0.6}$ & $41.7_{3.5}$ & $96.7_{1.5}$ & $97.3_{2.1}$ \\
CSD$^\dagger$ & $48.7_{4.7}$ & $30.7_{5.1}$ & $79.7_{4.9}$ & $33.7_{4.2}$ & $97.0_{1.0}$ & $96.3_{1.5}$ \\
DiffuGuard (full) & $42.7_{4.5}$ & $32.3_{0.6}$ & $81.7_{2.5}$ & $49.0_{3.6}$ & $97.7_{2.1}$ & $97.0_{0.0}$ \\
DiffuGuard (gen) & $70.7_{3.5}$ & $55.7_{4.0}$ & $83.7_{2.5}$ & $49.0_{5.2}$ & $97.0_{1.0}$ & $97.7_{0.6}$ \\
SADDER (top-1) & $78.7_{3.1}$ & $72.7_{3.5}$ & $80.7_{2.1}$ & $57.0_{2.0}$ & $97.3_{0.6}$ & $96.0_{2.0}$ \\
SADDER (aligned) & $79.3_{1.2}$ & $75.0_{3.5}$ & $80.0_{2.0}$ & $57.3_{0.6}$ & $96.7_{0.6}$ & $96.3_{1.5}$ \\
Random selection & $76.3_{1.2}$ & $72.3_{4.0}$ & $80.3_{5.5}$ & $51.7_{3.1}$ & $96.3_{1.2}$ & $97.3_{1.2}$ \\
\bottomrule
\end{tabular}
\end{table}

%% file: tables/extra/judge_views_v4_dream.tex
\begin{table}[!htbp]
\centering\scriptsize
\setlength{\tabcolsep}{3pt}\renewcommand{\arraystretch}{1.12}
\caption{Dream: DIJA judge and response-view comparison on 100 held-out HarmBench behaviors, three seeds. Full includes the fixed scaffold; fills excludes it. Rates (\%) are means with seed-SD subscripts. HB/LG have 300 scored outputs per row; WG has 300 except Dream aligned remasking (299).}
\label{tab:judge-views-v4-dream}
\begin{tabular}{@{}lrrrrrr@{}}
\toprule
Method & HB full & HB fills & LG full & LG fills & WG full & WG fills \\
\midrule
Unmodified & $82.0_{1.0}$ & $76.0_{3.6}$ & $80.7_{1.5}$ & $48.7_{2.5}$ & $97.3_{0.6}$ & $97.3_{0.6}$ \\
Full SAD & $85.3_{0.6}$ & $80.3_{2.3}$ & $83.0_{1.0}$ & $42.3_{3.1}$ & $97.3_{0.6}$ & $96.3_{1.5}$ \\
SADDER (selective) & $77.7_{1.5}$ & $72.3_{1.5}$ & $81.7_{3.2}$ & $57.3_{2.1}$ & $96.7_{1.5}$ & $97.3_{1.2}$ \\
SADDER + warm-up & $78.7_{2.3}$ & $75.7_{3.1}$ & $79.7_{1.2}$ & $39.7_{1.2}$ & $96.3_{0.6}$ & $96.0_{0.0}$ \\
CSD$^\dagger$ & $68.0_{6.2}$ & $45.0_{3.5}$ & $82.3_{0.6}$ & $41.7_{2.1}$ & $98.0_{0.0}$ & $97.3_{0.6}$ \\
DiffuGuard (full) & $33.3_{2.5}$ & $22.0_{2.6}$ & $84.0_{2.0}$ & $44.7_{3.1}$ & $98.3_{0.6}$ & $98.3_{1.2}$ \\
DiffuGuard (gen) & $69.3_{1.5}$ & $54.3_{1.5}$ & $81.3_{1.5}$ & $55.7_{4.0}$ & $98.3_{0.6}$ & $98.0_{1.0}$ \\
SADDER (top-1) & $78.0_{2.6}$ & $71.0_{1.7}$ & $80.0_{2.6}$ & $58.3_{1.5}$ & $97.7_{0.6}$ & $97.3_{0.6}$ \\
SADDER (aligned) & $85.3_{0.6}$ & $77.7_{3.1}$ & $82.7_{0.6}$ & $55.7_{1.5}$ & $97.7_{0.6}$ & $97.7_{0.6}$ \\
Random selection & $79.3_{1.2}$ & $74.3_{1.5}$ & $82.0_{2.0}$ & $53.7_{4.0}$ & $97.0_{1.0}$ & $97.0_{1.0}$ \\
\bottomrule
\end{tabular}
\end{table}

%% file: tables/extra/judge_views_v4_paired.tex
\begin{table}[!htbp]
\centering\scriptsize
\setlength{\tabcolsep}{3pt}\renewcommand{\arraystretch}{1.12}
\caption{Generated-span DIJA score differences (percentage points), with 95\% prompt-cluster bootstrap intervals. Comparisons use the same prompts and seeds; full SAD and random selection have no warm-up.}
\label{tab:judge-views-v4-paired}
\begin{tabular}{@{}lrrr@{}}
\toprule
Comparison & HB & LG & WG \\
\midrule
\multicolumn{4}{l}{\textbf{LLaDA}} \\
Selective vs Unmodified & $2.7\;[-2.0,7.3]$ & $0.7\;[-6.7,7.7]$ & $0.0\;[-2.3,2.0]$ \\
Selective vs Full SAD & $4.0\;[-1.3,9.3]$ & $-2.3\;[-8.7,3.7]$ & $-1.0\;[-2.7,0.3]$ \\
Selective vs Random selection & $3.3\;[-1.3,7.7]$ & $2.3\;[-2.7,7.3]$ & $-1.0\;[-2.7,0.7]$ \\
Warm-up vs Unmodified & $-1.7\;[-7.7,4.0]$ & $-11.7\;[-19.7,-4.3]$ & $1.0\;[-1.0,3.0]$ \\
Warm-up vs Full SAD & $-0.3\;[-6.0,5.3]$ & $-14.7\;[-21.7,-8.0]$ & $0.0\;[-1.3,1.3]$ \\
Warm-up vs Random selection & $-1.0\;[-6.0,3.7]$ & $-10.0\;[-16.3,-4.0]$ & $0.0\;[-1.3,1.3]$ \\
\multicolumn{4}{l}{\textbf{Dream}} \\
Selective vs Unmodified & $-3.7\;[-10.0,2.3]$ & $8.7\;[1.7,15.0]$ & $0.0\;[-1.0,1.0]$ \\
Selective vs Full SAD & $-8.0\;[-13.3,-2.7]$ & $15.0\;[8.0,22.0]$ & $1.0\;[0.0,2.3]$ \\
Selective vs Random selection & $-2.0\;[-7.3,3.0]$ & $3.7\;[-1.3,9.0]$ & $0.3\;[0.0,1.0]$ \\
Warm-up vs Unmodified & $-0.3\;[-6.7,5.7]$ & $-9.0\;[-16.0,-2.0]$ & $-1.3\;[-3.7,0.3]$ \\
Warm-up vs Full SAD & $-4.7\;[-9.7,0.3]$ & $-2.7\;[-10.3,5.0]$ & $-0.3\;[-2.3,1.3]$ \\
Warm-up vs Random selection & $1.3\;[-3.3,6.3]$ & $-14.0\;[-20.0,-8.0]$ & $-1.0\;[-2.3,0.0]$ \\
\bottomrule
\end{tabular}
\end{table}

%% file: tables/main_uniform.tex
\begin{table}[!htb]
\centering
\footnotesize
\setlength{\tabcolsep}{3pt}
\renewcommand{\arraystretch}{1.15}
\caption{Uniform-state continuation on RTP.}
\label{tab:uniform}
\begin{tabular*}{0.94\textwidth}{@{\extracolsep{\fill}}lrrrrr@{}}
\toprule
\multirow{2}{*}{Method}
  & \multicolumn{2}{c}{\textbf{Duo}}
  & \multicolumn{2}{c}{\textbf{UDLM}}
  & \multirow{2}{*}{NFE} \\
\cmidrule(lr){2-3}\cmidrule(lr){4-5}
  & Unsafe $\%\downarrow$
  & Benign BERT $\uparrow$
  & Unsafe $\%\downarrow$
  & Benign BERT $\uparrow$
  & \\
\midrule
Unmodified & $51.3_{1.8}$ & $81.77_{0.08}$ & $68.5_{2.5}$ & $80.81_{0.04}$ & 128 \\
\midrule
\multicolumn{6}{@{}l}{\textbf{SAD}} \\
\addlinespace[1pt]
$\eta=3{\times}10^{4}$ & $43.4_{3.5}$ & $81.77_{0.14}$ & $57.1_{1.1}$ & $80.50_{0.06}$ & 128 \\
$\eta=5{\times}10^{4}$ & $42.0_{1.4}$ & $81.50_{0.10}$ & $52.9_{1.6}$ & $80.40_{0.09}$ & 128 \\
$\eta=10^{5}$ & $42.8_{2.2}$ & $81.04_{0.07}$ & $47.4_{2.9}$ & $80.49_{0.03}$ & 128 \\
$\eta=1.5{\times}10^{5}$ & $41.1_{2.2}$ & $80.76_{0.22}$ & $49.5_{4.2}$ & $80.53_{0.16}$ & 128 \\
\midrule
\multicolumn{6}{@{}l}{\textbf{SADDER, $k=6$}} \\
\addlinespace[1pt]
$\eta=3{\times}10^{4}$ & $47.9_{2.3}$ & $81.95_{0.04}$ & $56.3_{1.7}$ & $80.67_{0.12}$ & 128 \\
$\eta=5{\times}10^{4}$ & $44.6_{4.5}$ & $81.85_{0.07}$ & $52.2_{1.6}$ & $80.68_{0.04}$ & 128 \\
$\eta=10^{5}$ & $44.2_{1.9}$ & $81.67_{0.14}$ & $46.1_{2.4}$ & $80.80_{0.07}$ & 128 \\
$\eta=1.5{\times}10^{5}$ & $45.2_{3.6}$ & $81.54_{0.13}$ & $45.1_{2.9}$ & $80.89_{0.13}$ & 128 \\
\midrule
\multicolumn{6}{@{}l}{\textbf{SADDER, $k=9$}} \\
\addlinespace[1pt]
$\eta=3{\times}10^{4}$ & $47.1_{4.1}$ & $81.93_{0.07}$ & $55.5_{2.1}$ & $80.65_{0.13}$ & 128 \\
$\eta=5{\times}10^{4}$ & $45.2_{2.3}$ & $81.81_{0.06}$ & $51.5_{3.3}$ & $80.67_{0.04}$ & 128 \\
$\eta=10^{5}$ & $45.1_{2.7}$ & $81.59_{0.17}$ & $46.5_{2.3}$ & $80.81_{0.09}$ & 128 \\
$\eta=1.5{\times}10^{5}$ & $44.1_{1.8}$ & $81.41_{0.15}$ & $44.5_{2.1}$ & $80.90_{0.14}$ & 128 \\
\midrule
\multicolumn{6}{@{}l}{\textbf{SADDER, $k=12$}} \\
\addlinespace[1pt]
$\eta=3{\times}10^{4}$ & $47.2_{3.2}$ & $82.03_{0.08}$ & $57.9_{2.1}$ & $80.66_{0.14}$ & 128 \\
$\eta=5{\times}10^{4}$ & $43.0_{2.5}$ & $81.86_{0.10}$ & $51.9_{3.0}$ & $80.69_{0.02}$ & 128 \\
$\eta=10^{5}$ & $45.0_{0.7}$ & $81.77_{0.16}$ & $48.0_{2.6}$ & $80.81_{0.10}$ & 128 \\
$\eta=1.5{\times}10^{5}$ & $43.6_{2.5}$ & $81.68_{0.10}$ & $45.9_{1.7}$ & $80.92_{0.12}$ & 128 \\
\midrule
\multicolumn{6}{@{}l}{\textbf{CSD}} \\
\addlinespace[1pt]
$k=6$ & $48.3_{2.9}$ & $81.80_{0.11}$ & $53.5_{1.8}$ & $80.51_{0.06}$ & 128 \\
$k=9$ & $35.1_{2.3}$ & $81.93_{0.06}$ & $63.5_{1.5}$ & $80.81_{0.06}$ & 128 \\
$k=12$ & $50.6_{2.1}$ & $81.78_{0.08}$ & $67.1_{4.2}$ & $80.81_{0.04}$ & 128 \\
\midrule
\multicolumn{6}{@{}l}{\textbf{Best-of-$N$}} \\
\addlinespace[1pt]
$N=3$ & $42.2_{2.3}$ & $82.05$ & $70.0_{3.4}$ & $80.80$ & 384 \\
$N=5$ & $37.6_{2.1}$ & $82.10$ & $68.9_{2.1}$ & $80.72$ & 640 \\
$N=8$ & $33.1_{2.3}$ & $82.16$ & $69.7_{2.2}$ & $80.77$ & 1024 \\
\bottomrule
\end{tabular*}
\end{table}